\renewcommand*{\backrefalt}[4]{%
    \ifcase #1 \footnotesize{(Not cited.)}%
    \or        \footnotesize{(Cited on page~#2.)}%
    \else      \footnotesize{(Cited on pages~#2.)}%
    \fi}
\newcommand\blfootnote[1]{%
  \begingroup
  \renewcommand\thefootnote{}\footnote{#1}%
  \addtocounter{footnote}{-1}%
  \endgroup
}
\renewcommand{\arraystretch}{1.6}
\DeclareMathOperator*{\VD}{D_{V}}
\DeclareMathOperator*{\VDFRA}{D_{FRA}}
\DeclareMathOperator*{\VDO}{D_{O}}
\DeclareMathOperator*{\VDE}{D_{E}}
\newcommand{\divclus}{\mathsf{d}}
\newcommand{\height}{\mathsf{h}}
\DeclareMathOperator*{\TV}{D_{TV}}
\DeclareMathOperator*{\hels}{D_{h}^2}
\DeclareMathOperator*{\zero}{0} 
\let\inf\relax 
\DeclareMathOperator*\inf{\vphantom{p}inf}
\theoremstyle{plain}
\newtheorem{theorem}{Theorem}
\newtheorem{lemma}{Lemma}
\newtheorem{fact}{Fact}
\newtheorem{assumption}{Assumption}[section]
\theoremstyle{definition}
\crefname{lemma}{Lemma}{Lemmas}
\crefname{fact}{Fact}{Facts}
\crefname{section}{Section}{Sections}
\crefname{inequality}{Inequality}{Inequalities}
\crefname{appendix}{Appendix}{Appendices}
\crefname{table}{Table}{Tables}
\crefname{figure}{Figure}{Figures}
\crefname{algorithm}{Algorithm}{Algorithms}
\crefname{theorem}{Theorem}{Theorems}
\crefname{proposition}{Proposition}{Propositions}
\theoremstyle{plain}
\definecolor{forestgreen}{rgb}{0.13, 0.55, 0.13}
\definecolor{frenchblue}{rgb}{0.0, 0.45, 0.73}
\definecolor{cherryblossompink}{rgb}{1.0, 0.72, 0.77}
\definecolor{bittersweet}{rgb}{1.0, 0.44, 0.37}
\definecolor{navyblue}{rgb}{0.0, 0.0, 0.5}
\def\eqref#1{equation~\ref{#1}}
\def\1{\bm{1}}
\def\ra{{\textnormal{a}}}
\def\rx{{\textnormal{x}}}
\def\rva{{\mathbf{a}}}
\def\rvx{{\mathbf{x}}}
\def\erva{{\textnormal{a}}}
\def\ervx{{\textnormal{x}}}
\def\vmu{{\bm{\mu}}}
\def\vtheta{{\bm{\theta}}}
\def\vTheta{{\bm{\Theta}}}
\def\veta{{\bm{\eta}}}
\def\valpha{{\bm{\alpha}}}
\def\vomega{{\bm{\omega}}}
\def\vell{{\bm{\ell}}}
\def\va{{\bm{a}}}
\def\vb{{\bm{b}}}
\def\ve{{\bm{e}}}
\def\vp{{\bm{p}}}
\def\vq{{\bm{q}}}
\def\vt{{\bm{t}}}
\def\vv{{\bm{v}}}
\def\vx{{\bm{x}}}
\def\eva{{a}}
\def\evv{{v}}
\def\mA{{\bm{A}}}
\def\mH{{\bm{H}}}
\def\mI{{\bm{I}}}
\def\mJ{{\bm{J}}}
\def\mX{{\bm{X}}}
\DeclareMathAlphabet{\mathsfit}{\encodingdefault}{\sfdefault}{m}{sl}
\SetMathAlphabet{\mathsfit}{bold}{\encodingdefault}{\sfdefault}{bx}{n}
\newcommand{\tens}[1]{\bm{\mathsfit{#1}}}
\def\tA{{\tens{A}}}
\def\tX{{\tens{X}}}
\def\gE{{\mathcal{E}}}
\def\gG{{\mathcal{G}}}
\def\gH{{\mathcal{H}}}
\def\gV{{\mathcal{V}}}
\def\sA{{\mathbb{A}}}
\def\sB{{\mathbb{B}}}
\def\sI{{\mathbb{I}}}
\def\sP{{\mathbb{P}}}
\def\sS{{\mathbb{S}}}
\def\emA{{A}}
\newcommand{\etens}[1]{\mathsfit{#1}}
\def\etA{{\etens{A}}}
\newcommand{\E}{\mathbb{E}}
\newcommand{\R}{\mathbb{R}}
\newcommand{\KL}{D_{\mathrm{KL}}}
\newcommand{\Var}{\mathrm{Var}}
\newcommand{\Cov}{\mathrm{Cov}}
\newcommand{\normltwo}{L^2}
\newcommand{\normlp}{L^p}
\newcommand{\parents}{Pa} 
\newcommand{\bbE}{\mathbb{E}}
\def\st{s.t.~}
\newcommand{\Ns}{\mathbb{N}} 
\newcommand{\cD}{\mathcal{D}}
\newcommand{\cE}{\mathcal{E}}
\newcommand{\cL}{\mathcal{L}}
\newcommand{\cN}{\mathcal{N}}
\newcommand{\cO}{\mathcal{O}}
\newcommand{\cT}{\mathcal{T}}
\newcommand{\cW}{\mathcal{W}}
\newcommand{\cX}{\mathcal{X}}
\newcommand{\norm}[1]{\|#1\|}
\DeclareMathOperator*{\argmax}{arg\,max}
\DeclareMathOperator*{\argmin}{arg\,min}
\begin{document}

%
\runningtitle{SGMoE Dendrograms: Consistency Without Sweeps}

%

\runningauthor{Do, H., Mai, N., Nguyen, T., Ho, N., Nguyen, B., \& Drovandi, C.}

\twocolumn[


\aistatstitle{Dendrograms of Mixing Measures for Softmax-Gated Gaussian Mixture of Experts: Consistency Without Model Sweeps}

\aistatsauthor{TienHai Do$^{\star,1,2}$ \And Trung Nguyen Mai$^{\star,2,3}$ \And  TrungTin Nguyen$^{\star,\dagger,4,5}$}

\aistatsauthor{Nhat Ho$^{6}$ \And Binh T. Nguyen$^{1,2}$ \And Christopher Drovandi$^{4,5}$}

\aistatsaddress{$^{1}$Faculty of Mathematics and Computer Science, University of Science, Ho Chi Minh City, Vietnam.\\
$^{2}$Vietnam National University Ho Chi Minh City, Vietnam.\\
$^{3}$Faculty of Information Technology, University of Science, Ho Chi Minh City, Vietnam.\\
$^{4}$ARC Centre of Excellence for the Mathematical Analysis of Cellular Systems.\\
$^{5}$School of Mathematical Sciences, Queensland University of Technology, Brisbane City, Australia.\\
$^{6}$Department of Statistics and Data Science, University of Texas at Austin, Austin, USA.
} ]

\begin{abstract}
 We develop a unified statistical framework for softmax-gated Gaussian mixture of experts (SGMoE) that addresses three long-standing obstacles in parameter estimation and model selection: (i) non-identifiability of gating parameters up to common translations, (ii) intrinsic gate-expert interactions that induce coupled differential relations in the likelihood, and (iii) the tight numerator-denominator coupling in the softmax-induced conditional density. Our approach introduces Voronoi-type loss functions aligned with the gate-partition geometry and establishes finite-sample convergence rates for the maximum likelihood estimator (MLE). In over-specified models, we reveal a link between the MLE's convergence rate and the solvability of an associated system of polynomial equations characterizing near-nonidentifiable directions. For model selection, we adapt dendrograms of mixing measures to SGMoE, yielding a consistent, sweep-free selector of the number of experts that attains pointwise-optimal parameter rates under overfitting while avoiding multi-size training. Simulations on synthetic data corroborate the theory, accurately recovering the expert count and achieving the predicted rates for parameter estimation while closely approximating the regression function. Under model misspecification (e.g., $\epsilon$-contamination), the dendrogram selection criterion is robust, recovering the true number of mixture components, while the Akaike information criterion, the Bayesian information criterion, and the integrated completed likelihood tend to overselect as sample size grows. On a maize proteomics dataset of drought-responsive traits, our dendrogram-guided SGMoE selects two experts, exposes a clear mixing-measure hierarchy, stabilizes the likelihood early, and yields interpretable genotype-phenotype maps, outperforming standard criteria without multi-size training.
    \blfootnote{$^\star$Co-first author, $^{\dagger}$Corresponding author.}

\end{abstract}



\section{INTRODUCTION}
\label{sec_introduction}

{\bf Mixture of Experts: Scope and Appeal.}
Mixture of experts (MoE) were introduced as modular neural architectures in \cite{jacobs_adaptive_1991,jordan_hierarchical_1994}, where a gating network dispatches inputs to specialized experts. Beyond their practical versatility in speech, language, and vision \citep{Shazeer_JMLR,pham_competesmoeeffective_2024,do_hyperrouter_2023,Eigen_learning_2014,bao_vlmo_2022,dosovitskiy_image_2021,liang_m3vit_2022,You_Speech_MoE,You_Speech_MoE_2,peng_bayesian_1996}, MoE admit strong approximation guarantees and learning theory. Universal approximation results for conditional densities and regressors quantify how MoE improve upon unconditional mixtures by allowing both gates and experts to depend on covariates \citep{norets_approximation_2010,nguyen_universal_2016,nguyen_approximation_2019_MoE,nguyen_approximations_2021}. These developments complement classical approximation and risk bounds for unconditional mixtures \citep{genovese_rates_2000,rakhlin_risk_2005,nguyen_approximation_2025,chong_risk_2024,nguyen_convergence_2013,shen_adaptive_2013,ho_convergence_2016,ho_strong_2016,nguyen_approximation_2020,nguyen_approximation_2023} and are surveyed in \citet{yuksel_twenty_2012,nguyen_practical_2018,nguyen_model_2021,chen_towards_2022}.

{\bf Parameter Estimation: from Unconditional Mixtures to MoE.}
Over-specified finite mixtures can display slow, nonstandard parameter rates. In unconditional mixtures this is explained by singular Fisher information and merging components. Foundational results start with \citet{chen_optimal_1995} for univariate mixtures, and extend via Wasserstein tools to multivariate models and weaker identifiability \citep{nguyen_convergence_2013,ho_convergence_2016}, with minimax studies in \citet{heinrich2018,Manole_2020}. Algorithmic guarantees for Expectation-Maximization (EM) and Majorization-Minimization or Minimization-Maximization (MM) algorithms and moments have been analyzed under both exact-fit and over-fit regimes \citep{Siva_2017,Anandkumar_moment_method,Hardt_mixture,Raaz_Ho_Koulik_2020,Raaz_Ho_Koulik_2018_second,wu2020a,doss_optimal_2023,Wu_minimax_EM,tran_revisiting_2026}. For MoE with covariate-free gates, parameter rates depend on algebraic independence of experts and PDE-type couplings \citep{ho_convergence_2022,do_strong_2025}. In softmax-gated Gaussian mixture of experts (SGMoE), parameter estimation is harder due to translation invariance in softmax gates and intrinsic gate-expert couplings; recent progress includes identifiability, inverse bounds, and finite-sample guarantees for the maximum likelihood
estimator (MLE) with unified exact- and over-fit treatments in~\citet{nguyen_demystifying_2023,nguyen_general_2024,nguyen_towards_2024}.

{\bf Model Selection: Information Criteria, Penalties, and Bayes.}
Choosing the number of experts remains critical despite universal approximation theorems. Classical criteria balance fit and complexity, including AIC \citep{akaike_new_1974,fruhwirth_schnatter_analysing_2018}, BIC and its MoE adaptations \citep{schwarz_estimating_1978,khalili_estimation_2024,forbes_mixture_2022,berrettini_identifying_2024,forbes_summary_2022,nguyen_modifications_2025,ho_unified_2025}, ICL \citep{biernacki_assessing_2000,fruhwirth_schnatter_labor_2012}, eBIC for structured settings \citep{foygel_extended_2010,nguyen_joint_2024}, and SWIC for dependent data \citep{sin_information_1996,nguyen_large_sample_2025,westerhout_asymptotic_2024}. These methods are largely asymptotic and often require multi-size model sweeps. Non-asymptotic penalization brings risk guarantees via weak oracle bounds in high-dimensional MoE~\citep{nguyen_non_asymptotic_2021,nguyen_model_2022,nguyen_non_asymptotic_2022,nguyen_non_asymptotic_2023,montuelle_mixture_2014,nguyen_non_asymptotic_Lasso_2023}. Bayesian strategies avoid fixing the order but need careful marginal-likelihood evaluation or post-processing; the merge-truncate-merge approach ensures consistency in related mixture settings yet introduces sensitive tuning \citep{fruhwirth_schnatter_keeping_2019,zens_bayesian_2019,guha_posterior_2021,nguyen_bayesian_2024_JNPS}. A recent alternative leverages dendrograms of mixing measures for selection without exhaustive sweeps in~\citep{do_dendrogram_2024,thai_model_2025,tran_fast_2026}.

{\bf Gaps Specific to SGMoE.}
Softmax gating creates three intertwined obstacles. First, gate parameters are identifiable only up to common translations, so parameter losses must factor out these symmetries. Second, the softmax numerator-denominator coupling and the expert structure induce exact PDE relations between derivatives, which collapse naive Taylor decompositions and require algebra-aware inverse bounds. Third, when models are over-specified, the first nonvanishing terms in the expansions are ruled by solvability of polynomial systems; the resulting exponents govern slow parameter rates and depend on how many fitted atoms approximate each truth \citep{ho_convergence_2022,nguyen_demystifying_2023}. Existing selection criteria do not exploit this rate geometry for the MLE, and sweep-based procedures are computationally heavy for SGMoE.

{\bf Contributions.}
We introduce a fast-rate-aware Voronoi distance for SGMoE that augments the unified exact- and over-fit loss with merged-moment couplings inside multi-covered Voronoi cells (\cref{eq_def_new_voronoi_D}). This exposes slow directions created by redundant atoms, motivates a hierarchical merge operator, and yields an aggregation path (dendrogram) on mixing measures. Along this path we prove a monotone strengthening of the loss (\cref{lem:monotone_path}), obtain near-parametric finite-sample rates for the aggregated estimators together with height and likelihood control (\cref{thm:path_rates,thm:heights,thm:likelihood_path} and \cref{table_parameter_rates}), and derive a sweep-free dendrogram selection criterion (DSC) that is consistent and avoids multi-\(K\) training (\cref{thm_order_consistency,fig_merging_procedure,fig_DSC_well_specified}). Empirically, DSC is less prone to overfitting than AIC/BIC/ICL under \(\epsilon\)-contamination due to its structural penalty on small heights (\cref{fig_econtam_results}), and it restores fast parameter rates after aggregation in over-specified SGMoE (\cref{fig_voronoi_three}). To our knowledge this is the first method that couples finite-sample, fast-rate-aware merging with consistent model selection for SGMoE, avoiding multi-size training while preserving statistical efficiency.
\begin{table*}[!ht]
\caption{Summary of density and parameter rates for SGMoE. The Voronoi cells $\sA_j$ are defined in \cref{eq_voronoi_cell}. The function $\bar r(\cdot)$ is determined by solvability of the polynomial systems recalled in \cref{eq_system_of_polynomial_recall} (e.g., $\bar r(2)=4$, $\bar r(3)=6$). The merged row and the fast pathwise rates correspond to the aggregation path described in \cref{sec_rate_aware_sgmoe}.}
\centering
\begin{tabular}{|c|c|c|c|c|c|}
\hline
\textbf{Setting} & \textbf{Loss} & $p_{G_0}(y\mid \vx)$ & $\exp(\omega_{0k}^{0})$ & $\vomega_{1k}^{0},\, b_{k}^{0}$ & $\va_{k}^{0},\, \sigma_{k}^{0}$ \\
\hline
Exact-fit & $\VDE$ & $\cO\!\big((\log N/N)^{1/2}\big)$ & $\cO\!\big((\log N/N)^{1/2}\big)$ & $\cO\!\big((\log N/N)^{1/2}\big)$ & $\cO\!\big((\log N/N)^{1/2}\big)$ \\
\hline
Over-fit& $\VDO$ & $\cO\!\big((\log N/N)^{1/2}\big)$ & $\cO\!\big((\log N/N)^{1/2}\big)$ & $\cO\!\big((\log N/N)^{1/{2\bar r(|\sA_k|)}}\big)$ & $\cO\!\big((\log N/N)^{1/{\bar r(|\sA_k|)}}\big)$ \\
\hline
{\bf Merged} & $\VDFRA$ & $\cO\!\big((\log N/N)^{1/2}\big)$ & $\cO\!\big((\log N/N)^{1/2}\big)$ & $\cO\!\big((\log N/N)^{1/2}\big)$ & $\cO\!\big((\log N/N)^{1/2}\big)$ \\
\hline
\end{tabular}
\label{table_parameter_rates}
\end{table*}

{\bf SGMoE Setting.}
Let $(\rvx_{n}, y_{n})_{n=1}^{N}$ be i.i.d.\ samples with $\rvx_{n}\in\mathbb{R}^{D}$ and $y_{n}\in\mathbb{R}$.
Assume the data are generated by a SGMoE model of order $K_{0}$, whose conditional density is
\begin{align}
p_{G_{0}}(y\mid \vx)
  &:=
  \sum_{k=1}^{K_{0}}
  \frac{\exp\!\big((\vomega_{1k}^{0})^{\top}\vx + \omega_{0k}^{0}\big)}
     {\sum_{j=1}^{K_{0}}
      \exp\!\big((\vomega_{1j}^{0})^{\top}\vx + \omega_{0j}^{0}\big)}\nonumber\\
  &\hspace{1cm} \times \cN\!\big(y \,\big|\, \va_{k}^{0\top}\vx + b_{k}^{0},\, \sigma_{k}^{0}\big).
\label{eq_mixture_expert_1}
\end{align}

Each expert is Gaussian with mean $\va_{k}^{0\top}\vx+b_{k}^{0}$ and variance $\sigma_{k}^{0}>0$. We encode parameters via the (not-necessarily normalized) mixing measure
\begin{align*}
    G_{0} \equiv G_{0}(K_0)
\;:=\;
\sum_{k=1}^{K_{0}}
\exp(\omega_{0k}^{0})\,
\delta_{(\vomega_{1k}^{0},\, \va_{k}^{0},\, b_{k}^{0},\, \sigma_{k}^{0})},
\end{align*}
where $\veta_{k}^{0}:=(\omega_{0k}^{0}, \vomega_{1k}^{0}, \va_{k}^{0}, b_{k}^{0}, \sigma_{k}^{0})
\in \vTheta \subset \mathbb{R} \times \mathbb{R}^{D} \times \mathbb{R}^{D} \times \mathbb{R} \times \mathbb{R}_{>0}$.
Assume $\vTheta$ is compact and $\cX\subset\mathbb{R}^{D}$, the support of $\rvx$, is bounded. 
Assume $\rvx$ has a continuous distribution so that the model is identifiable under this convention, a standard mild assumption; see Proposition~1 of \cite{nguyen_demystifying_2023}.

\begin{figure}[ht]
    \centering
    \begin{subfigure}{0.32\linewidth}
        \centering
        \includegraphics[width=\linewidth]{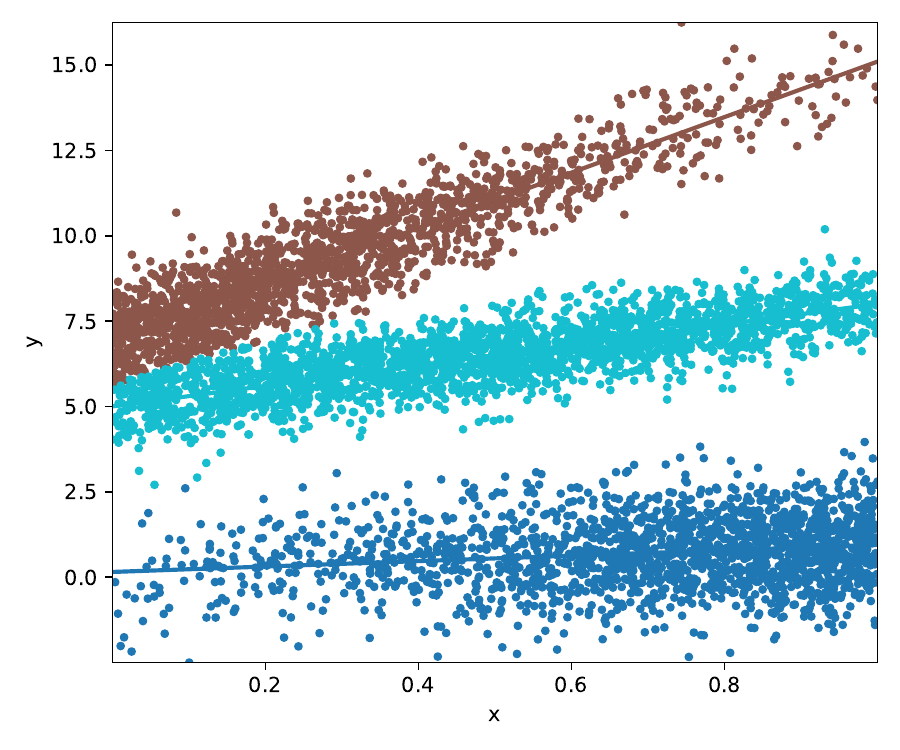}
        \caption{True regression}
        \label{fig_true_regression}
    \end{subfigure}
    \begin{subfigure}{0.32\linewidth}
        \centering
        \includegraphics[width=\linewidth]{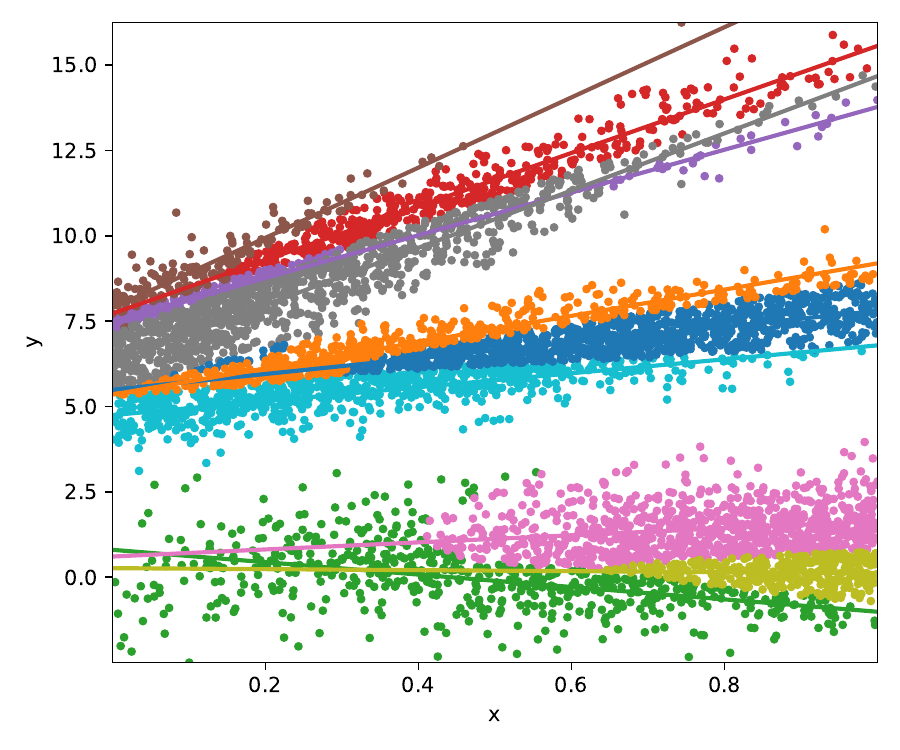}
        \caption{$K=10$}
        \label{fig_K10}
    \end{subfigure}
    \begin{subfigure}{0.32\linewidth}
        \centering
        \includegraphics[width=\linewidth]{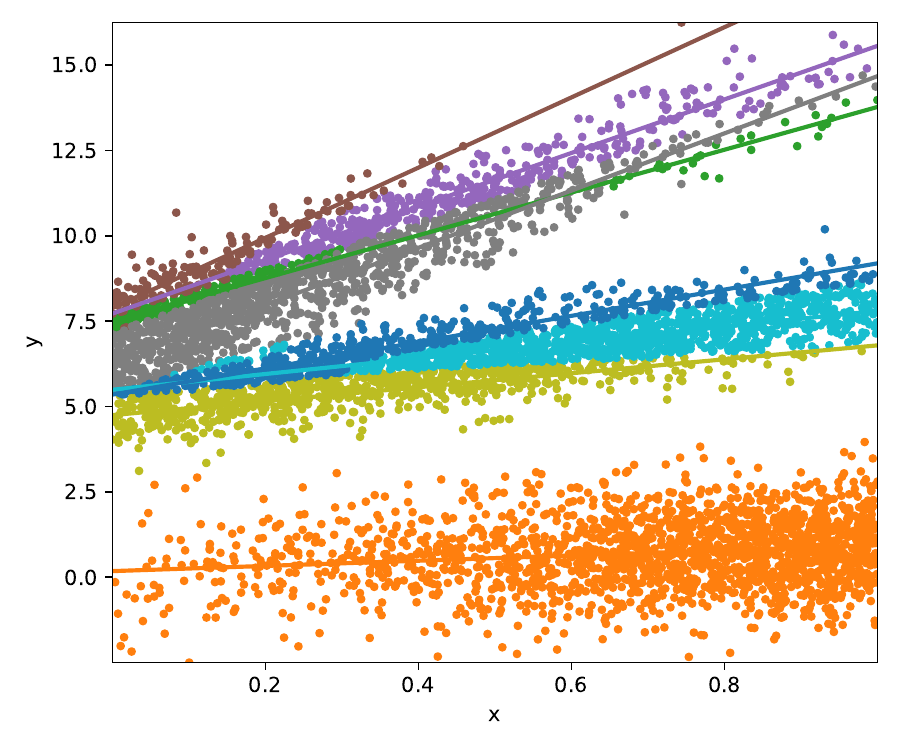}
        \caption{$K=8$}
        \label{fig_K8}
    \end{subfigure}

    \begin{subfigure}{0.32\linewidth}
        \centering
        \includegraphics[width=\linewidth]{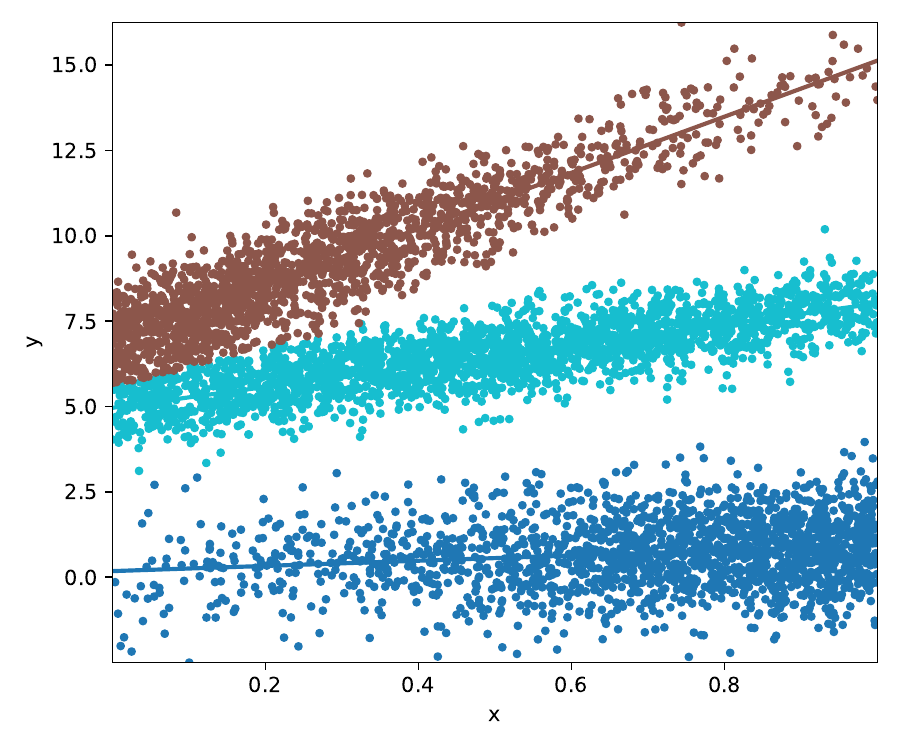}
        \caption{$K=3$}
        \label{fig_K3}
    \end{subfigure}
    \begin{subfigure}{0.32\linewidth}
        \centering
        \includegraphics[width=\linewidth]{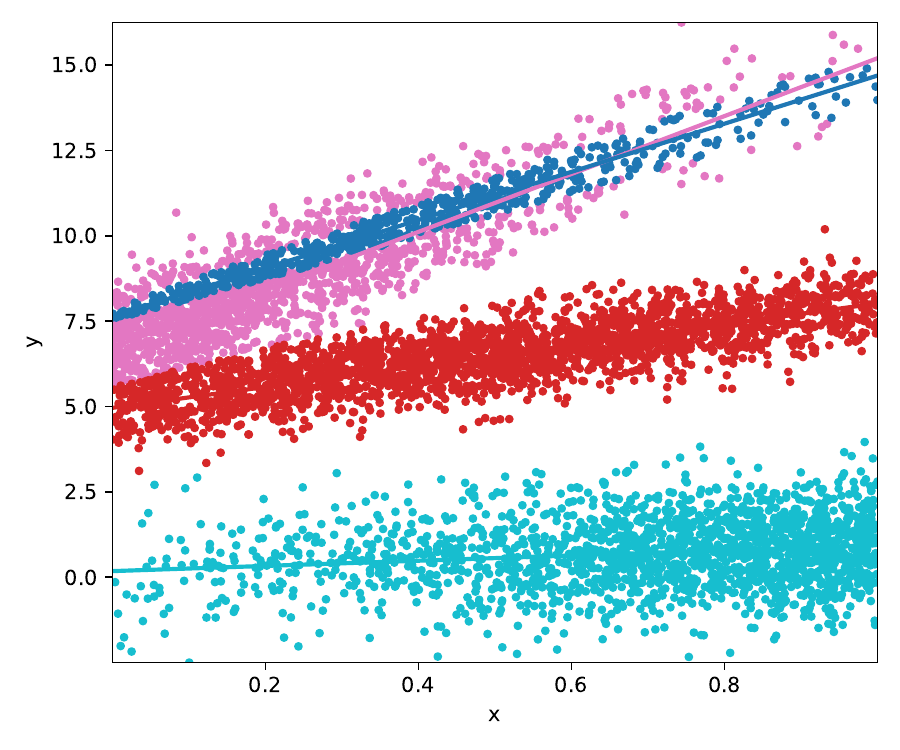}
        \caption{$K=4$}
        \label{fig_K4}
    \end{subfigure}
        \begin{subfigure}{0.32\linewidth}
        \centering
        \includegraphics[width=\linewidth]{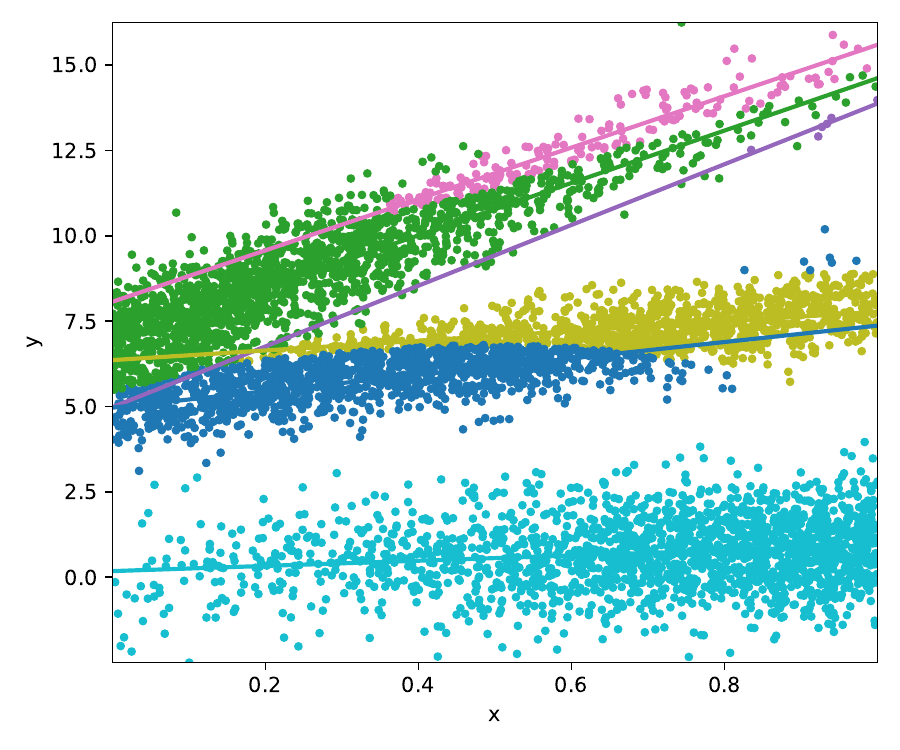}
        \caption{$K=6$}
        \label{fig_K6}
    \end{subfigure}

    \caption{Merging procedure from $K = 10$ to $K = 3$ of true mixing measure $G_0(3)$ with $K_0 = 3$ components, defined in \cref{eq_G_0_3}.}
    \label{fig_merging_procedure}
\end{figure}

\noindent\textbf{Maximum Likelihood Over At Most $K$ Experts.}
When the true order $K_{0}$ is unknown, we estimate within
$\cO_{K}(\vTheta)
:=\Big\{
G=\!\!\sum_{k=1}^{K'}\!\exp(\omega_{0k})\,\delta_{(\vomega_{1k},\,\va_{k},\,b_{k},\,\sigma_{k})}:$ $1\le K'\le K,\ (\omega_{0k},\vomega_{1k},\va_{k},b_{k},\sigma_{k})\in\vTheta
\Big\}.$
We analyze the exactly specified case $K = K_{0}$, the over-specified case $K > K_{0}$, and the merging scheme using the following maximum likelihood estimator (MLE): $\widehat{G}_{N}
\;\in\;
\argmax_{G \in \cO_{K}(\vTheta)}
\frac{1}{N}\sum_{n=1}^{N}\log\!\big(p_{G}(y_{n}\mid \rvx_{n})\big).$

{\bf Practical Implication.}
Practitioners can fit a single over-specified SGMoE with moderate $K\ge K_0$, compute its aggregation path, and select $\widehat K$ via DSC. This single-fit workflow avoids grid sweeps over $K$, merges near-duplicate atoms to collapse slow directions within Voronoi cells, accelerates parameter convergence, and often recovers the correct expert count even under mild contamination. Dendrogram heights provide a transparent structural summary.

{\bf Paper Organization.}
\cref{sec_preliminaries} states the unified parameter-rate result and the algebraic exponents \(\bar r(\cdot)\).
\cref{sec_rate_aware_sgmoe} introduces the fast-rate-aware distance, merge operator, aggregation path, fast pathwise rates, and DSC.
\cref{sec_simulation} illustrates parameter rates, path behaviour, model selection under clean and contaminated regimes, and a real-data application to maize drought-response traits in \cref{appendix_real_data}.
Then, we offer concluding remarks, limitations, and future work in \cref{sec_conclusion}.
Proof sketches appear at the end of \cref{sec_rate_aware_sgmoe}, with full proofs deferred to the appendix.
Additional biological background, preprocessing details for the maize dataset, and further geometric and technical discussion are provided in the supplementary material.

{\bf Notation.} Throughout the paper, for any natural number $N\in\Ns$ we abbreviate $\{1,2,\ldots,N\}$ by $[N]$.
Given two sequences of positive real numbers $\{a_N\}_{N=1}^\infty$ and $\{b_N\}_{N=1}^\infty$, we write $a_N=\cO(b_N)$ (equivalently, $a_N\lesssim b_N$) to mean that there exists a constant $C>0$ such that $a_N\le C\,b_N$ for all $N\in\Ns$.
For a vector $\vv\in\R^D$, set $|\vv|:=\evv_1+\cdots+\evv_D$, and let $\|\vv\|_{p}$ denote its $p$-norm; by default, $\|\vv\|$ refers to the $2$-norm unless otherwise stated.
We also use $\|\mA\|$ for the Frobenius norm of a matrix $\mA\in\R^{D\times D}$.
For any set $\sS$, $|\sS|$ denotes its cardinality.
Finally, for two probability density functions $p$ and $q$ with respect to the Lebesgue measure $\mu$, define $\TV(p,q) := \frac 1 2 \int |p-q| d\vmu$ as their Total Variation distance, while $\hels(p,q) := \frac 1 2 \int (\sqrt p - \sqrt q)^2 d\vmu$ denotes the squared Hellinger distance between them. Let $\vTheta$ be the parameter space.
Write $\cE_K(\vTheta)$ for the collection of discrete probability measures on $\vTheta$ with exactly $K$ atoms, and $\cO_K(\vTheta):=\bigcup_{K'\le K}\cE_{K'}(\vTheta)$ for those with at most $K$ atoms.
For a mixing measure $G=\sum_{k=1}^{K}\pi_k\,\delta_{\vtheta_k}$, we (slightly abusively) refer to each component $\pi_k\,\delta_{\vtheta_k}$ as an “atom,” comprising both its weight $\pi_k$ and parameter $\vtheta_k$.
When clear from context, we drop $\vTheta$ and simply write $\cE_K$ and $\cO_K$.

\section{PRELIMINARIES}
\label{sec_preliminaries}

We present a unified result for the parameter estimation rate of the MLE in the SGMoE that simultaneously covers the exact-specified case ($K=K_0$) and the over-specified case ($K>K_0$), building on \cite{nguyen_demystifying_2023}.

{\bf Voronoi Cells.}
For a candidate mixing measure $G=\sum_{k=1}^{K}\exp(\omega_{0k})\,\delta_{(\vomega_{1k},\va_k,b_k,\sigma_k)}$ and the true $G_0=\sum_{k=1}^{K_0}\exp(\omega^0_{0k})\,\delta_{(\vomega^0_{1k},\va^0_k,b^0_k,\sigma^0_k)}$, define for $k\in [K_0]$:
\begin{equation}
    \sA_k (G) := \{ \ell \in [K] : \norm{\vtheta_{\ell} - \vtheta^0_{k}} \leq \norm{\vtheta_{\ell} - \vtheta^0_{j}},\; \forall j \ne k \},\label{eq_voronoi_cell}
\end{equation}
where we denote $\vtheta_{\ell} := (\vomega_{1\ell}, \va_{\ell}, b_{\ell}, \sigma_{\ell}) $.
We use the softmax-translation $(t_0,\vt_1)$ from identifiability (cf. Proposition~1 of \citealp{nguyen_demystifying_2023}) and the shorthand $\Delta_{\vt_{1}} \vomega_{1\ell k} : = \vomega_{1\ell} - \vomega_{1k}^{0} - \vt_{1}$, $\Delta \va_{\ell k} : = \va_{\ell} - \va_{k}^{0}$, $\Delta b_{\ell k} : = b_{\ell} - b_{k}^{0}$, $\Delta \sigma_{\ell k} : = \sigma_{\ell} - \sigma_{k}^{0}$. For notational simplicity, we write $\sA_k$ instead of $\sA_k(G)$.

{\bf Algebraic Obstruction and Exponents.}
For $M\ge2$, let $\bar r(M)$ be the smallest integer $r$ determined by the polynomial system as follows: given $0\le|\vell_1|\le r,\;0\le \ell_2\le r-|\vell_1|,\;|\vell_1|+\ell_2\ge1,$ the polynomial system
\begin{equation}
\sum_{j=1}^{M}\;
\sum_{(\valpha_1,\valpha_2,\alpha_3,\alpha_4)\in\sI_{\vell_1,\ell_2}}
\frac{p_{5j}^2\,p_{1j}^{\valpha_1} p_{2j}^{\valpha_2} p_{3j}^{\alpha_3} p_{4j}^{\alpha_4}}
{\valpha_1!\,\valpha_2!\,\alpha_3!\,\alpha_4!}=0,
\label{eq_system_of_polynomial_recall}
\end{equation}
admits no non-trivial solution (all $p_{5j}\neq0$ and at least one $p_{3j}\neq0$). The ranges of $\valpha_{1}, \valpha_{2}, \alpha_{3}, \alpha_{4}$ in the above sum satisfy $\sI_{\vell_1, \ell_{2}} = \{\valpha = (\valpha_{1}, \valpha_{2}, \alpha_{3}, \alpha_{4}) \in \Ns^{D} \times \Ns^{D} \times \Ns \times \Ns: \ \valpha_{1} + \valpha_{2} = \vell_1, \ |\valpha_{2}| + \alpha_{3} + 2 \alpha_{4} = \ell_{2}\}$. For general dimension $D$ and parameter $M \geq 2$, finding the exact value of $\bar{r}(M)$ is a non-trivial central problem in algebraic geometry~\citep{sturmfels_solving_2002}. Known values:
\begin{fact}[{\citealp[Lemma 1]{nguyen_demystifying_2023}}]\label{lemma:value_r_unified}
For any $D\ge1$: $\bar r(2)=4$, $\bar r(3)=6$, and $\bar r(M)\ge7$ for $M\ge4$.
\end{fact}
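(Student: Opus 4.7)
The plan is to recast the polynomial system in \cref{eq_system_of_polynomial_recall} as a Taylor vanishing condition on a single generating function, then prove the three claims by separate arguments: the upper bounds $\bar r(2)\le 4$ and $\bar r(3)\le 6$ via a Vandermonde/moment-matching argument, and the lower bound $\bar r(M)\ge 7$ for $M\ge 4$ by explicit construction.

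The first step is to observe that the inner sum in \cref{eq_system_of_polynomial_recall} is exactly the coefficient of $\vu^{\vell_1} v^{\ell_2}$ in the Taylor expansion of
\begin{equation*}
F(\vu,v) \;:=\; \sum_{j=1}^{M} p_{5j}^2 \exp\!\bigl(p_{1j}^\top\vu + v\,p_{2j}^\top\vu + p_{3j}\,v + p_{4j}\,v^2\bigr),
\end{equation*}
because expanding the four exponential factors and matching multinomial coefficients reproduces precisely the index set $\sI_{\vell_1,\ell_2}$ (in particular $\exp(v\,p_{2j}^\top\vu)$ contributes the shared factor $\vu^{\valpha_2} v^{|\valpha_2|}$ that entangles $\vell_1$ and $\ell_2$). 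The system for all eligible $(\vell_1,\ell_2)$ is therefore equivalent to $F(\vu,v)=F(0,0)+O(\|(\vu,v)\|^{r+1})$ at the origin, and nontriviality becomes: all $p_{5j}\ne 0$ and at least one $p_{3j}\ne 0$.

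For the upper bounds, I would first reduce to $D=1$ by restricting $F$ along a generic line $\vu=t\vw$: the restricted function has the same vanishing order and the scalars $(p_{1j}^\top\vw, p_{2j}^\top\vw)$ can be treated as free parameters. Assuming for contradiction a nontrivial solution at $r=4$ for $M=2$ (resp.\ $r=6$ for $M=3$), I would extract a structured subset of Taylor coefficients: the pure-$v$ coefficients $(\ell_1,\ell_2)=(0,k)$ yield the classical truncated moment system for $\exp(p_{3j}v+p_{4j}v^2)$ and, by a Prony/Vandermonde argument, force the pairs $(p_{3j},p_{4j})$ either to collide or to annihilate their weights; the pure-$\vu$ coefficients $(\ell_1,0)$ constrain $p_{1j}$; and the mixed coefficients $(1,k)$ and $(2,k)$, combined with the previous relations, pin $p_{2j}$ to a common value. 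For $M=2$ this book-keeping closes at $r=4$, while for $M=3$ one additional Vandermonde layer (hence two more orders) is required, yielding $r=6$. In each case the contradiction is that $p_{3j}\ne 0$ for some $j$ forces $p_{5j}=0$ for that same atom.

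For the lower bound, my plan is to produce an explicit solution at $r=6$ with $M=4$ atoms (extending to $M\ge 4$ by adding further atoms that also satisfy the system). Working in $D=1$ and setting $p_{4j}=0$ eliminates the quadratic piece and reduces the problem to a truncated moment identity for the three-parameter family $\exp(p_{1j}u+p_{2j}uv+p_{3j}v)$; arranging two symmetric groups of nodes with balanced weights so that their weighted sum matches a constant through total degree $6$ (using the explicit Gauss-quadrature error on polynomials of degree $\le 5$) produces the required nontrivial solution with at least one $p_{3j}\ne 0$ and all $p_{5j}\ne 0$.

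The main obstacle is the entanglement between $\valpha_1$ and $\valpha_2$ in $\sI_{\vell_1,\ell_2}$: both contribute to $\vell_1$, so expanding \cref{eq_system_of_polynomial_recall} as a polynomial in independent monomials of $(p_{1j},p_{2j})$ mixes degrees, and a direct Vandermonde step does not apply. The generating-function reformulation dissolves this by absorbing both into the single exponential factor $\exp(p_{1j}^\top\vu+v\,p_{2j}^\top\vu)$, after which the proof becomes a controlled elimination across bidegrees. A secondary difficulty is that the quadratic term $p_{4j}v^2$ places the family outside the classical Prony/Gaussian-mixture uniqueness regime, which is why the critical orders are $4,6,7$ rather than the $2M-1$ threshold one sees for pure exponential sums, and why the behaviour jumps at $M=4$.
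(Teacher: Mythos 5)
First, the paper does not prove this statement; it imports it verbatim as a Fact, citing Lemma~1 of \citet{nguyen_demystifying_2023}, so there is no internal proof to compare against. Your generating-function reformulation of \cref{eq_system_of_polynomial_recall} is correct: the inner sum over $\sI_{\vell_1,\ell_2}$ is indeed the coefficient of $\vu^{\vell_1}v^{\ell_2}$ in $\sum_j p_{5j}^2\exp\bigl(p_{1j}^\top\vu+v\,p_{2j}^\top\vu+p_{3j}v+p_{4j}v^2\bigr)$ (expand $\exp\!\bigl(\sum_d(p_{1j,d}+v\,p_{2j,d})u_d\bigr)$ and multiply by $\exp(p_{3j}v+p_{4j}v^2)$), and this is exactly the structure that emerges from Taylor-expanding the SGMoE density and pushing the PDE identities of \cref{eq_pde} through the expansion. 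That part of your plan is sound.

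However, two genuine gaps remain. First, each equality $\bar r(M)=r^*$ for $M\in\{2,3\}$ requires both (a) that the system up to order $r^*$ has no nontrivial solution, \emph{and} (b) that the system up to order $r^*-1$ does admit a nontrivial solution with all $p_{5j}\neq0$ and some $p_{3j}\neq0$; you address only (a). Since $\bar r(M)$ is defined as the \emph{smallest} order with no nontrivial solution, without (b) you obtain only $\bar r(2)\le4$ and $\bar r(3)\le6$, not the claimed equalities. Second, the arguments you do give are programs rather than proofs. The claim that the pure-$v$ coefficients form a "classical truncated moment system" that a Prony/Vandermonde argument forces to collide or annihilate weights is not a Vandermonde argument: the kernel $\exp(p_{3j}v+p_{4j}v^2)$ is a location-scale family, not an exponential sum, so no Vandermonde determinant applies. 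What is actually needed is a concrete bidegree-by-bidegree elimination (which, incidentally, does close at order~$4$ for $M=2$ using only the pure-$v$ block --- but that is a specific algebraic computation one must carry out, not a generic principle), and "for $M=3$ one additional Vandermonde layer (hence two more orders)" is an assertion that happens to match the known answer rather than a derivation. Finally, the $M\ge4$ lower bound is likewise only gestured at: setting $p_{4j}=0$ and invoking "Gauss-quadrature error on polynomials of degree $\le5$" does not on its face produce a simultaneous solution of all bidegree constraints with $1\le|\vell_1|+\ell_2\le6$ while keeping every $p_{5j}\neq0$ and some $p_{3j}\neq0$; this must be exhibited explicitly, or obtained from a parameter-count and transversality argument in the right variety. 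As written, the core content of the lemma --- verifying solvability and non-solvability at the specific claimed orders --- is asserted rather than established.
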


{\bf Classical Overfit-Aware Voronoi Distance.}
Define a single loss that reduces to the exact-fit metric when each cell has one atom, and adds over-fit penalties otherwise:
\begin{align}
    &\VDO(G, G_{0}) : = \VDE(G, G_{0}) \nonumber\\
    &+ \inf_{t_{0}, \vt_{1}}\sum_{k: |\sA_{k}| > 1} \sum_{\ell \in \sA_{k}} \exp(\omega_{0\ell}) \Big(\|(\Delta_{\vt_{1}} \vomega_{1\ell k},  \Delta b_{\ell k})\|^{\bar{r}(|\sA_{k}|)} \nonumber \\
    &\hspace{3cm}+ \|(\Delta \va_{\ell k},\Delta \sigma_{\ell k})\|^{\bar{r}(|\sA_{k}|)/2}\Big),\label{eq_overfitted_loss}\\
    &\VDE(G, G_{0}) : = \inf_{t_{0}, \vt_{1}} \sum_{k = 1}^{K_{0}} \Big|\sum_{\ell \in \sA_{k}} \exp(\omega_{0\ell}) - \exp(\omega_{0k}^{0} + t_{0})\Big| \nonumber \\
    &+ \sum_{k: |\sA_{k}| = 1} \sum_{\ell \in \sA_{k}} \exp(\omega_{0\ell}) \|(\Delta_{\vt_{1}} \vomega_{1\ell k}, \Delta \va_{\ell k},  \Delta b_{\ell k}, \Delta \sigma_{\ell k})\|.\nonumber
\end{align}
When $|\sA_k|=1$ for all $k$ (i.e., $K=K_0$), \cref{eq_overfitted_loss} equals the exact-fit metric $\VDE$; if some $|\sA_k|>1$ (i.e., $K>K_0$), \cref{eq_overfitted_loss} adds the higher-order penalties determined by $\bar r(\cdot)$.

\begin{fact}[{\citealp[Theorems 1 and 2]{nguyen_demystifying_2023}}]\label{thm:unified_rate}There exist universal constants $C,c>0$ (depending only on $G_0$ and $\vTheta$) \st the MLE $\widehat G_N$ of order $K\ge K_0$ satisfies
\begin{equation}\label{eq_unified_rate_bound}
\mathbb{P}\!\Big(\VDO(\widehat G_N,G_0)>C\,(\log N/N)^{1/2}\Big)\;\lesssim\;e^{-c\log N}.
\end{equation}
\end{fact}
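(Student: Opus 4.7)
The plan is to split the bound into two essentially independent pieces: (i) a likelihood-based Hellinger rate $\hels(p_{\widehat G_N}, p_{G_0}) \lesssim \log N / N$ with exponential tail, and (ii) a local inverse bound of the form $\TV(p_G, p_{G_0}) \gtrsim \VDO(G, G_0)$ valid in a fixed neighborhood of $G_0$. Combining the two with $\TV \le \sqrt{2\,\hels}$ and a union bound transfers the Hellinger tail to $\VDO(\widehat G_N, G_0) \lesssim (\log N / N)^{1/2}$, which is exactly the claim.

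\textbf{Hellinger rate.} For piece (i), I would control the bracketing entropy of the conditional family $\{p_G(\cdot\mid\cdot):G\in\cO_K(\vTheta)\}$. Compactness of $\vTheta$, boundedness of $\cX$, and the uniform positive lower bound on $\sigma_k$ make every $p_G(y\mid\vx)$ Lipschitz in the parameters and uniformly dominated by a fixed Gaussian envelope, so a parameter-space covering argument gives $N_{[]}(\epsilon,\cdot)\lesssim(1/\epsilon)^{dK}$ with $d=\dim\vTheta$. Plugging this into a standard MLE concentration inequality in the Hellinger metric (e.g., Theorem~7.4 of van de Geer, or Wong--Shen) produces the desired rate and exponential tail.

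\textbf{Inverse bound via contradiction.} For piece (ii), I would argue by contradiction: failure produces a sequence $\{G_n\}\subset\cO_K(\vTheta)$ with $\VDO(G_n, G_0)\to 0$ and $\TV(p_{G_n}, p_{G_0})/\VDO(G_n, G_0)\to 0$. By compactness, after a subsequence each atom of $G_n$ lies in a fixed Voronoi cell $\sA_k$. Taylor-expanding $p_{G_n}(y\mid\vx)-p_{G_0}(y\mid\vx)$ around the true parameters and normalizing by $\VDO(G_n, G_0)$ yields a non-trivial vanishing linear combination of partial derivatives of the softmax--Gaussian conditional density. In single-atom cells, algebraic independence of those derivatives forces every coefficient to zero, contradicting the normalization and giving the linear lower bound. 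In multi-atom cells of cardinality $M=|\sA_k|\ge 2$, the vanishing of all Taylor coefficients up to order $\bar r(M)-1$ is governed precisely by the polynomial system in \cref{eq_system_of_polynomial_recall}, which by definition of $\bar r(M)$ admits no non-trivial solution at orders below $\bar r(M)$; the normalized limit therefore cannot be identically zero, producing the contradiction. The exponents $\bar r(|\sA_k|)$ and $\bar r(|\sA_k|)/2$ in \cref{eq_overfitted_loss} arise for exactly this reason, with concrete values $\bar r(2)=4$ and $\bar r(3)=6$ supplied by Fact~\ref{lemma:value_r_unified}.

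\textbf{Main obstacle.} The hard step is the algebraic accounting inside multi-atom cells. The softmax normalization creates linear dependencies among naïve Taylor coefficients, and the derivatives that govern expert identifiability also appear in the gate expansion, so gate and expert perturbations cannot be decoupled. Resolving this requires simultaneously (a) passing to the translation-quotient coordinates $(\Delta_{\vt_1}\vomega_{1\ell k},\Delta \va_{\ell k},\Delta b_{\ell k},\Delta \sigma_{\ell k})$ to neutralize the common softmax shift $(t_0,\vt_1)$, (b) invoking the softmax-induced derivative identities to eliminate redundant gate-direction terms, and (c) applying the heat-equation PDE relating second $b$-derivatives and first $\sigma$-derivatives of the Gaussian kernel to collapse higher-order $b,\sigma$-contributions into $\va$-contributions. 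After these reductions, the inverse bound reduces to a pure solvability question about \cref{eq_system_of_polynomial_recall}, and combining with the Hellinger rate from Step 1 delivers the stated high-probability tail.
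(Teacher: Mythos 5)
Your two-step plan (Hellinger-rate concentration via bracketing entropy plus an inverse bound $\bbE_\rvx[\TV(p_G,p_{G_0})]\gtrsim\VDO(G,G_0)$ obtained by contradiction, Taylor expansion, and the polynomial-system obstruction) is essentially the same route the paper takes: Fact~\ref{prop_rate_density} supplies the density rate exactly as you describe, and the appendix proof of the strengthened $\VDFRA$ bound in Theorem~\ref{theorem:convergence_rate_of_parameters} carries out the contradiction argument you sketch, including the translation-quotient coordinates, the two PDE identities in \cref{eq_pde}, and the reduction to the solvability of \cref{eq_system_of_polynomial_recall}.

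There is one genuine gap: you prove only the \emph{local} inverse bound. Your contradiction starts from a sequence with $\VDO(G_n,G_0)\to 0$, but to transfer the Hellinger tail to $\widehat G_N$ you need $\bbE_\rvx[\TV]\gtrsim\VDO$ uniformly over $\cO_K(\vTheta)$, not just in a neighborhood of $G_0$ — and at the outset you have no a priori guarantee that $\widehat G_N$ lands in that neighborhood. The paper handles this with a separate ``global'' step: if the ratio could vanish along a sequence with $\VDO(G'_n,G_0)$ bounded away from zero, then $\TV(p_{G'_n},p_{G_0})\to 0$, and by compactness of $\vTheta$ and Fatou one extracts a limit $G'$ with $p_{G'}=p_{G_0}$ a.e.; identifiability (Fact~\ref{prop_identify}) then forces $G'$ to equal $G_0$ up to the softmax translation, hence $\VDO(G',G_0)=0$, a contradiction. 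Without this piece the proof is incomplete — though it is a standard compactness argument once flagged. A smaller point: the union bound you invoke in the opening paragraph does nothing here; the transfer from the Hellinger tail to the $\VDO$ tail is a deterministic implication on the high-probability event, not a union over competing events.
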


\emph{Remarks.} 
(i) If $K=K_0$ (all $|\sA_k|=1$), then $\VDO=\VDE$ and \cref{eq_unified_rate_bound} yields the exact-specified rate
\(
\mathbb{P}\!\big(\VDE(\widehat G_N,G_0)>C(\log N/N)^{1/2}\big)\lesssim e^{-c\log N},
\)
implying parametric ($N^{-1/2}$ up to logs) estimation of
$\exp(\omega^0_{0k})$, $\vomega^0_{1k}$ (up to translation), $\va^0_k$, $b^0_k$, $\sigma^0_k$ for all $k\in[K_0]$.
(ii) If $K>K_0$ (some $|\sA_k|>1$), the same bound holds for $\VDO$, while the exponents $\bar r(|\sA_k|)$ inside \cref{eq_overfitted_loss} encode the slower algebraic behavior of over-covered parameters within each Voronoi cell.

\section{FAST-RATE-AWARE EXPERT AGGREGATION IN SGMOE}
\label{sec_rate_aware_sgmoe}

\subsection{Why Merge Experts? The Rate Gap}
\label{subsec_motivation_rate_gap}
Building on \cref{sec_preliminaries}, identifiability and the unified parameter-rate bound (\cref{thm:unified_rate}) imply that converting density accuracy into parameter accuracy hinges on a suitable inverse (loss) inequality. When the model is over-specified ($K>K_0$), several fitted atoms may fall into the same Voronoi cell $\sA_k$ (defined in \cref{eq_voronoi_cell}), which induces a \emph{rate gap}: single-covered truths achieve (near) parametric rates, whereas multi-covered truths converge more slowly with exponents governed by $\bar r(|\sA_k|)$ from \cref{sec_preliminaries}. To exploit this, we (i) refine the loss to expose mergeable structure, and (ii) aggregate (merge) near-duplicate atoms to recover fast rates and guide model order selection.

\subsection{A Fast-Rate-Aware Voronoi Distance}
\label{subsec_rate_aware_loss}

{\bf Our Proposal.}
Let $\VDO(G,G_0)$ denote the over-fit Voronoi loss from \cref{eq_overfitted_loss} and $\sA_k$ be as in \cref{eq_voronoi_cell}. We augment it with first-order “merged-moment” couplings inside multi-covered cells to obtain
\begin{align}
    &\VDFRA (G, G_0) :=  \VDO (G, G_0) \nonumber\\
    & \quad+\inf_{t_{0}, \vt_{1}} \sum_{k: |\sA_{k}| > 1} \bigg( \norm{\sum_{\ell \in \sA_k} \exp{(\omega_{0\ell})} (\Delta b_{\ell k})} \nonumber \\
    &\qquad+ \norm{\sum_{\ell \in \sA_k} \exp{(\omega_{0\ell})} (\Delta_{\vt_1} \vomega_{1\ell k})} \nonumber \\
    &\qquad+ \norm{\sum_{\ell \in \sA_k} \exp{(\omega_{0\ell})}[(\Delta b_{\ell k})^2 + (\Delta \sigma_{\ell k})]} \nonumber \\
    &\qquad+ \norm{\sum_{\ell \in \sA_k} \exp{(\omega_{0\ell})}[(\Delta_{\vt_1} \vomega_{1\ell k})(\Delta b_{\ell k}) + (\Delta \va_{\ell k})]} \nonumber\\
    &\qquad+  \norm{\sum_{\ell \in \sA_k} \exp{(\omega_{0\ell})}(\Delta_{\vt_1} \vomega_{1\ell k})(\Delta_{\vt_1} \vomega_{1\ell k})^{\top}} \bigg).\label{eq_def_new_voronoi_D}
\end{align}

{\bf Link to \texorpdfstring{\cref{sec_preliminaries}}{Preliminaries}.}
The penalties inside \cref{eq_def_new_voronoi_D} are consistent with the exponents $\bar r(|\sA_k|)$ that appear in the unified loss \cref{eq_overfitted_loss}: when $|\sA_k|=1$, \(\VDFRA\) reduces to the exact-fit metric \(\VDE\); when $|\sA_k|>1$, the added block-sums control the slow directions and quantify how well the cell behaves \emph{as if merged}.

{\bf Motivation for Merging.} Because the slow rates originate from multiple atoms sharing a cell, replacing these atoms by their softmax-weighted aggregate collapses the problematic directions and restores first-order (parametric) behavior for the merged parameters. Thus, \(\VDFRA\) both (i) certifies where merging is beneficial (large intra-cell terms) and (ii) predicts the rate improvement obtained by aggregation, which we leverage next for hierarchical merging and model selection.

\subsection{A Merge Operator Tailored to SGMoE}
\label{subsec_merge_operator}

{\bf Connection to \cref{sec_preliminaries} and Novelty.}
The unified rate result in \cref{sec_preliminaries} shows that parameter convergence hinges on how fitted atoms distribute across Voronoi cells; multi-covered cells induce slower algebraic behavior governed by $\bar r(\cdot)$. The merge operator below is the first ingredient of our contribution: it \emph{operationalizes} that insight by collapsing near-duplicate atoms within a cell using softmax-weighted updates. This turns slow, multi-component directions into a single, first-order direction, setting up our fast pathwise rates (\cref{thm:path_rates}) and height/likelihood controls (\cref{thm:heights,thm:likelihood_path}).

{\bf Rate-Weighted Dissimilarity.}
For $G^{(K)}=\sum_{k=1}^K \exp(\omega_{0k})\delta_{\vtheta_k}$ with $\vtheta_k=(\vomega_{1k},\va_k,b_k,\sigma_k)$, define
\begin{align}
&\divclus\!\left(\exp(\omega_{0\ell_1})\delta_{\vtheta_{\ell_1}},
\exp(\omega_{0\ell_2})\delta_{\vtheta_{\ell_2}}\right)\nonumber\\
&:=\frac{\exp(\omega_{0\ell_1}+\omega_{0\ell_2})}{\exp(\omega_{0\ell_1})+\exp(\omega_{0\ell_2})}
\|(\vomega_{1\ell_1},b_{\ell_1})-(\vomega_{1\ell_2},b_{\ell_2})\|^2
 \nonumber \\
 & + \frac{\exp(\omega_{0\ell_1}+\omega_{0\ell_2})}{\exp(\omega_{0\ell_1})+\exp(\omega_{0\ell_2})}\|(\va_{\ell_1},\sigma_{\ell_1})-(\va_{\ell_2},\sigma_{\ell_2})\|.
\label{eq_dissim_sgmoe}
\end{align}
Pick $(i,j)=\argmin_{\ell_1\neq\ell_2\in[K]}\divclus(\cdot,\cdot)$ and replace the pair by the \emph{softmax-weighted} aggregate
\begin{align}
    \omega_{0*} &= \log \left( \exp{\omega_{0i}} + \exp{\omega_{0j}} \right), \nonumber \\
    \vomega_{1*} &= \exp{\left(\omega_{0i} - \omega_{0*} \right)}\vomega_{1i} + \exp{\left(\omega_{0j} - \omega_{0*}\right)} \vomega_{1j}, \nonumber\\
    b_* &= \exp{\left(\omega_{0i} - \omega_{0*} \right)} b_i + \exp{\left(\omega_{0j} - \omega_{0*}\right)} b_j, \nonumber\\
    \va_* &= \frac{\exp(\omega_{0i})}{\exp(\omega_{0*})} [(\vomega_{1i} - \vomega_{1*}) (b_i - b_*) + \va_i] \nonumber\\
    &\quad+ \frac{\exp(\omega_{0j})}{\exp(\omega_{0*})} [(\vomega_{1j} - \vomega_{1*}) (b_j - b_*) + \va_j], \nonumber \\
    \sigma_* &= \frac{\exp(\omega_{0i})}{\exp(\omega_{0*})} [(b_i - b_*)^2 + \sigma_i] \nonumber\\
    &\quad + \frac{\exp(\omega_{0j})}{\exp(\omega_{0*})} [ (b_j - b_*)^2 + \sigma_j].\label{eq_merging_SGMoE_models}
\end{align}

Then we define $G^{(K - 1)} = \exp(\omega_{0*}^{}) \delta_{(\vomega_{1*}^{}, \va_*, b_*, \sigma_*)} + \sum_{k \ne i, j} \exp(\omega_{0k}^{}) \delta_{(\vomega_{1k}^{}, \va_k, b_k, \sigma_k)}$. A description of the whole procedure can be seen in \cref{alg:sgmoe_merge}. The choice of merging atoms and deriving the new atom (\cref{eq_dissim_sgmoe,eq_merging_SGMoE_models}) are in particular faithful to hierarchical clustering and $K$-means algorithms. 

\subsection{The Hierarchical View of Aggregation Path}
\label{subsec_agg_path}

{\bf Transition and Main Idea.} The merge step converts local redundancy into a single effective atom. Repeating it induces a \emph{global} hierarchy, the aggregation path, along which our new analysis proves a \emph{monotone strengthening} of the loss and, crucially, \emph{fast} convergence at every level. This bridges \cref{sec_preliminaries} (unified loss but slow rates) with a constructive, data-driven path that achieves the same near-parametric behavior after aggregation.

Having presented the algorithm to choose and merge a mixing measure with $K$ atoms to $K-1$ atoms, we now describe the dendrogram (hierarchical aggregation) of $G$ that emerges by repeatedly applying the merging procedure.

{\bf Dendrogram (Hierarchical Aggregation).}
Iterate the merge in \cref{eq_dissim_sgmoe,eq_merging_SGMoE_models} from $\kappa=K$ down to $2$, generating $\{G^{(\kappa)}\}_{\kappa=2}^K$.
Define the dendrogram $\cT(G)=(\gV,\gE,\gH)$ with $\gV$ containing $K$ levels, the $\kappa$-th level holding the atoms of $G^{(\kappa)}$, $\gE$ storing the links between merged pairs across adjacent levels, and $\gH=(\height^{(K)},\dots,\height^{(2)})$ with
\(
\height^{(\kappa)}:=\min\{\divclus(\cdot,\cdot)\text{ over pairs in }G^{(\kappa)}\}.
\)
The quantity $\height^{(\kappa)}$ is the height between levels $\kappa$ and $\kappa-1$.

When we represent $\cT(G)$ on a graph, $\height^{(\kappa)}$ is the height between $\kappa$-th level and $(\kappa-1)$-th level. The procedure to construct the dendrogram of $G$ is given by \cref{alg:sgmoe_path}.

\begin{algorithm}[!ht]
\caption{SGMoE Merge Step (Fast-Rate-Aware)}
\label{alg:sgmoe_merge}
\begin{algorithmic}[1]
\Require $G^{(\kappa)}=\sum_{k=1}^{\kappa}\exp(\omega_{0k})\delta_{(\vomega_{1k},\va_k,b_k,\sigma_k)}$
\State $(i,j)\gets\argmin\{\divclus\big(\exp(\omega_{0\ell_1})\delta_{\vtheta_{\ell_1}},\exp(\omega_{0\ell_2})\delta_{\vtheta_{\ell_2}}\big): \ell_1\neq\ell_2\in[\kappa]\}$
\State Compute $(\omega_{0*},\vomega_{1*},\va_*,b_*,\sigma_*)$ by \cref{eq_merging_SGMoE_models}
\State \Return $G^{(\kappa-1)}=\exp(\omega_{0*})\delta_{(\vomega_{1*},\va_*,b_*,\sigma_*)}+\sum_{k\neq i,j}\exp(\omega_{0k})\delta_{\vtheta_k}$
\end{algorithmic}
\end{algorithm}

\begin{algorithm}[!ht]
\caption{SGMoE Hierarchical Aggregation Path}
\label{alg:sgmoe_path}
\begin{algorithmic}[1]
\Require $G^{(K)}=\sum_{k=1}^{K}\exp(\omega_{0k})\delta_{(\vomega_{1k},\va_k,b_k,\sigma_k)}$
\State Initialize $\cT(G)=(\gV,\gE,\gH)$ with $\gV_K=\{\text{atoms of }G^{(K)}\}$, $\gE=\varnothing$
\For{$\kappa=K,\dots,2$}
  \State $G^{(\kappa-1)}\gets \text{Algorithm \ref{alg:sgmoe_merge}}(G^{(\kappa)})$
  \State Append atoms of $G^{(\kappa-1)}$ to level $\gV_{\kappa-1}$, link merged pair in $\gE$
  \State $\height^{(\kappa)}\gets \min\divclus(\cdot,\cdot)$ over pairs in $G^{(\kappa)}$; append to $\gH$
\EndFor
\State \Return $\cT(G)=(\gV,\gE,\gH)$ and $\{G^{(\kappa)}\}_{\kappa=1}^{K}$
\end{algorithmic}
\end{algorithm}

{\bf Monotone Strengthening of the Loss (Bridge to Fast Rates).}
The following lemma formalizes that each merge step cannot increase our fast-rate-aware distance to $G_0$, making the path progressively \emph{easier} to estimate:
\begin{lemma}
\label{lem:monotone_path}
As $\VDFRA (G^{(K)},G_0)\to0$,
\(
\VDFRA(G^{(K)},G_0)\;\gtrsim\;\VDFRA(G^{(K-1)},G_0)\;\gtrsim\;\cdots\;\gtrsim\;\VDFRA(G^{(K_0)},G_0),
\)
with constants depending only on $G_0$, $\vTheta$, and $K$.
\end{lemma}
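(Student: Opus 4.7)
The plan is to prove the single-step inequality $\VDFRA(G^{(\kappa)}, G_0) \gtrsim \VDFRA(G^{(\kappa-1)}, G_0)$ for each $\kappa \in \{K_0+1, \dots, K\}$ and chain them by induction. Let $(i, j)$ be the pair merged at step $\kappa$, producing the new atom $*$ via \cref{eq_merging_SGMoE_models}. First I would exploit the regime $\VDFRA(G^{(K)}, G_0) \to 0$ to show that all atoms of every $G^{(\kappa)}$ concentrate near the support of $G_0$, so that the minimum-dissimilarity pair $(i, j)$ selected by \cref{eq_dissim_sgmoe} lies in a single Voronoi cell $\sA_k$ of some $\vtheta_k^0$, and the aggregate $*$ falls in the same cell. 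Consequently, the Voronoi structures at levels $\kappa$ and $\kappa-1$ differ only by replacing two atoms of $\sA_k$ by one, and we can take a common translation $(t_0, \vt_1)$ in both infima.

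The core of the argument is a family of exact conservation identities built into the merge rule \cref{eq_merging_SGMoE_models}. Direct computation gives mass preservation, $\exp(\omega_{0*}) = \exp(\omega_{0i}) + \exp(\omega_{0j})$, and weighted first-moment preservation, $\exp(\omega_{0*}) \Delta b_{*k} = \exp(\omega_{0i}) \Delta b_{ik} + \exp(\omega_{0j}) \Delta b_{jk}$ (and the analogous identity for $\Delta_{\vt_1}\vomega_{1\ell k}$). Using the zero-mean identity $\sum_{\ell \in \{i,j\}} \exp(\omega_{0\ell})(b_\ell - b_*) = 0$, which follows from the definition of $b_*$, a variance-decomposition argument then shows that $\exp(\omega_{0*})[(\Delta b_{*k})^2 + \Delta\sigma_{*k}]$ and $\exp(\omega_{0*})[(\Delta_{\vt_1}\vomega_{1*k})\,\Delta b_{*k} + \Delta\va_{*k}]$ equal their pre-merge $\{i,j\}$-sums exactly. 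The same decomposition applied to the outer product $(\Delta_{\vt_1}\vomega_{1\ell k})(\Delta_{\vt_1}\vomega_{1\ell k})^\top$ produces a positive semidefinite residual, so the post-merge Frobenius norm is bounded above by the pre-merge one. Together these identities imply that every block in the fast-rate-aware part of \cref{eq_def_new_voronoi_D}, together with the mass-matching term, is non-increasing across a merge.

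The remaining task is to control the over-fit penalty in $\VDO$ from \cref{eq_overfitted_loss}, whose exponent $\bar r(|\sA_k|)$ changes when a cell loses an atom. The clean case is when $|\sA_k|$ drops from $2$ to $1$: the cell leaves the over-fit block and enters $\VDE$ as the singleton term $\exp(\omega_{0*}) \|(\Delta_{\vt_1}\vomega_{1*k}, \Delta\va_{*k}, \Delta b_{*k}, \Delta\sigma_{*k})\|$, which by the conservation laws above is (up to a constant) exactly the first-moment aggregate that $\VDFRA$ was already tracking on the left-hand side, so the transition is seamless and even absorbs the pre-merge penalty $\|\cdot\|^{\bar r(2)} + \|\cdot\|^{\bar r(2)/2}$ as an extra positive quantity. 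For transitions $|\sA_k| = M \to M-1$ with $M \geq 3$, the per-atom block $\|\cdot\|^{\bar r(M-1)}$ in the post-merge sum must be compared to the pre-merge block $\|\cdot\|^{\bar r(M)}$; I would handle this by showing that unmerged atoms, by construction of the dissimilarity \cref{eq_dissim_sgmoe}, have parameter offsets bounded below by the current dendrogram height, so the ratio $\|\cdot\|^{\bar r(M-1)-\bar r(M)}$ is capped by a constant depending only on $G_0$, $\vTheta$, and $K$.

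The principal obstacle I foresee is precisely this bookkeeping of the exponent jump in the over-fit block for intermediate cardinalities $M \geq 3$: because $\bar r$ is non-decreasing (\cref{lemma:value_r_unified}) and the norms are small in the asymptotic regime, the naive per-atom bound $\|\cdot\|^{\bar r(M-1)} \leq C\|\cdot\|^{\bar r(M)}$ fails unless $\|\cdot\|$ is bounded away from zero. Resolving this requires the selection rule of \cref{alg:sgmoe_merge}, which merges the pair with minimal $\divclus$, so that any surviving atom in a cell of size $\geq 2$ must be separated from the rest by at least the current height $\height^{(\kappa-1)}$; this lower bound, combined with the conservation identities and the PSD monotonicity of the outer product, closes the single-step inequality. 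A telescoping induction along $\kappa = K, K-1, \dots, K_0+1$ then yields the full chain stated in the lemma, with the multiplicative constants accumulating into a single constant depending only on $G_0$, $\vTheta$, and $K$.
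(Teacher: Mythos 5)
Your proposal correctly reproduces the core structural ingredients of the paper's argument: establish that the merged pair lies in a single Voronoi cell, prove that the merge rule \cref{eq_merging_SGMoE_models} exactly conserves the mass term and all five merged-moment block sums (mass, first moments of $b$ and $\vomega_1$, the $(\Delta b)^2+\Delta\sigma$ block, the $(\Delta\vomega_1)(\Delta b)+\Delta\va$ block, and the outer product in PSD order), and then invoke convexity of $z \mapsto \|z\|^m$ on the over-fit penalty. This is exactly what the paper does.

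The gap is in your treatment of the exponent jump when $|\sA_k|$ drops from $M$ to $M-1$ with $M\geq 3$. You propose bounding $\|\cdot\|^{\bar r(M-1)-\bar r(M)}$ by a constant using a \emph{lower} bound on the unmerged displacements coming from the dendrogram height $\height^{(\kappa-1)}$. This fails for two reasons. First, the regime of the lemma is $\VDFRA(G^{(K)},G_0)\to 0$: every atom in a multi-covered cell converges to the same true atom, so all pairwise dissimilarities within such a cell vanish, and hence $\height^{(\kappa-1)}\to 0$. A vanishing lower bound does not cap $\|\cdot\|^{-(\bar r(M)-\bar r(M-1))}$; it blows up. Second, even setting aside the asymptotics, the height bounds \emph{inter-atom} distances $\|\vtheta_i - \vtheta_j\|$, not the \emph{atom-to-truth} displacements $\|\vtheta_\ell - \vtheta_k^0\|$ that actually appear in $\VDFRA$; one atom can sit essentially at the truth (so its penalty term vanishes at any power) while being well separated from the others, so the height gives no control on $\|\delta_\ell\|$ itself. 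The paper avoids this entire comparison by introducing the auxiliary functional $\cD(G_N,G_0,t_0,\vt_1)$ with the exponent $\bar r_k$ \emph{fixed throughout the path} (it is pinned to the Voronoi cell cardinalities of the initial mixing measure, not of the level-$\kappa$ measure), so the Jensen step on each merge compares $\|\cdot\|^{\bar r_k}$ to $\|\cdot\|^{\bar r_k}$ with the same power on both sides, and no lower bound on displacements is ever needed. If you want to close the lemma, you should adopt the same device: prove monotonicity of $\cD$ at the fixed exponent and only at the end relate $\cD$ back to the losses $\VDFRA$ on each level, rather than trying to compare norms raised to two different $\bar r$-powers directly.
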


{\bf Behavior of the Path for the MLE (Main Fast-Rate Theorem).}  
Leveraging the monotonicity above together with the unified inverse bound from \cref{sec_preliminaries}, we obtain fast rates \emph{at every level} of the path, including the exact-fit and under-fit levels where aggregation recovers  optimal parametric rate behavior:
\begin{theorem}[Fast convergence rates along the path]
\label{thm:path_rates}
There exist universal constants $C'_1,c_1,C'_2,c_2>0$ such that for all $\kappa\in[K_0+1,K]$ and $\kappa'\in[K_0]$, we have
\begin{align}
    &\sP\!\big(\VDFRA(\widehat G_N^{(\kappa)},G_0)>C'_2(\log N/N)^{1/2}\big)\lesssim e^{-c_2\log N}, \\
    &\sP\!\big(\VDE(\widehat G_N^{(\kappa')},G_0^{(\kappa')})>C'_1(\log N/N)^{1/2}\big)\lesssim e^{-c_1\log N}.\nonumber
\end{align}
\end{theorem}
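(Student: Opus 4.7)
My approach has three layers: a density-concentration bound for the top-level MLE, a strengthened density-to-parameter inverse inequality for the fast-rate-aware distance $\VDFRA$, and then propagation along the path, using \cref{lem:monotone_path} above $K_0$ and a stability property of the merge operator below $K_0$. First, standard MLE concentration for SGMoE, via bracketing-entropy bounds on the conditional density class exactly as in the density half of \cref{thm:unified_rate}, yields $\hels(p_{\widehat G_N^{(K)}}, p_{G_0}) \lesssim (\log N/N)^{1/2}$ with probability at least $1 - e^{-c\log N}$; this step does not use the path structure.

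Second, I would establish a neighborhood inverse bound $\VDFRA(G, G_0) \lesssim \hels(p_G, p_{G_0})$ for $G \in \cO_K(\vTheta)$ sufficiently close to $G_0$. This is obtained by retracing the contradiction-plus-Taylor-expansion argument behind \cref{thm:unified_rate}, tracking additionally the first-order block-sums in \cref{eq_def_new_voronoi_D}. The crucial observation is that each merged-moment term is precisely the coefficient of a first-order derivative of the SGMoE conditional density in the Taylor expansion of $p_G - p_{G_0}$ around $G_0$; the reason it does not appear in $\VDO$ is that the softmax PDE identities force cancellations which push the \emph{residual} expansion order up to $\bar r(|\sA_k|)$ inside multi-covered cells. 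Made quantitative via a subsequential contradiction, if $\hels(p_{G_n}, p_{G_0})/\VDFRA(G_n, G_0)\to 0$ along some $G_n \to G_0$, either some merged moment dominates (ruled out by linear independence of the first-order derivative atoms of the conditional density on $\cX$) or a higher-order $\VDO$-term dominates (ruled out as in \cref{thm:unified_rate} via non-solvability of \cref{eq_system_of_polynomial_recall}). Combining the two steps gives $\VDFRA(\widehat G_N^{(K)}, G_0) \lesssim (\log N/N)^{1/2}$ with the stated probability.

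Third, for $\kappa \in [K_0+1, K]$, I would invoke \cref{lem:monotone_path}: because the top-level $\VDFRA$ is small, the lemma's regime is active and gives $\VDFRA(\widehat G_N^{(\kappa)}, G_0) \lesssim \VDFRA(\widehat G_N^{(K)}, G_0)$, proving the first bound. For $\kappa' \in [K_0]$, the base case $\kappa' = K_0$ uses that once each Voronoi cell of $\widehat G_N^{(K_0)}$ relative to $G_0$ contains a single atom, $\VDFRA$ reduces to $\VDE$, so the previous step yields $\VDE(\widehat G_N^{(K_0)}, G_0) \lesssim (\log N/N)^{1/2}$. For $\kappa' < K_0$, I would prove a Lipschitz property of \cref{alg:sgmoe_merge} with respect to $\VDE$: if $\VDE(G, G') \le \delta$ then $\VDE(\mathrm{Merge}(G), \mathrm{Merge}(G')) \lesssim \delta$, which follows because both the pair selection via $\divclus$ in \cref{eq_dissim_sgmoe} and the softmax-weighted aggregate in \cref{eq_merging_SGMoE_models} depend smoothly on the atoms once the chosen pair stabilizes. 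Iterating this $K_0 - \kappa'$ times from the base case yields the second bound.

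\textbf{Main obstacle.} The strengthened inverse inequality of Step 2 is the heart of the argument. Unlike $\VDO$, the loss $\VDFRA$ contains first-order moments which are a priori \emph{larger} than the higher-order Voronoi terms, so the naive chain $\VDFRA \ge \VDO$ is of no help. Resolving this requires a careful two-case split (merged-moment-dominated versus $\VDO$-dominated) and in each case invoking either linear independence of the relevant gate-expert derivative atoms or the polynomial obstruction encoded by $\bar r(\cdot)$ in \cref{sec_preliminaries}. A secondary subtlety, relevant to Step 4, is verifying that the initial $K - K_0$ merges applied to $\widehat G_N^{(K)}$ collapse within-true-cell duplicates rather than crossing true components; this is exactly the rate-gap heuristic motivating $\VDFRA$ and is controlled by the fact that intra-cell pairs attain strictly smaller $\divclus$-values near $G_0$.
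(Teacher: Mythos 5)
Your proposal is correct and follows essentially the same route as the paper's proof. In particular, the paper's argument has exactly your three layers: (i) the density-level rate $\bbE_\rvx[\hels(p_{\widehat G_N},p_{G_0})]=\cO_\sP((\log N/N)^{1/2})$ from bracketing-entropy bounds (Fact~\ref{prop_rate_density}); (ii) a strengthened inverse bound $\bbE_\rvx[\TV(p_G,p_{G_0})]\gtrsim\VDFRA(G,G_0)$ proved by a local-to-global contradiction argument that performs the $Q_N$ density decomposition, Taylor expands to order $\bar r(|\sA_k|)$ in each cell, invokes the PDE identities $\partial^2 u/\partial\vomega_1\partial b=\partial u/\partial\va$ and $\partial^2 u/\partial b^2=2\,\partial u/\partial\sigma$, and then rules out vanishing coefficient ratios by precisely the dichotomy you describe — linear independence of the first- and second-order derivative atoms for the new merged-moment block-sums versus non-solvability of the polynomial system \eqref{eq_system_of_polynomial_recall} for the higher-order $\VDO$ terms; and (iii) propagation via Lemma~\ref{lem:monotone_path} for $\kappa\ge K_0+1$, the identity $\VDFRA=\VDE$ at $\kappa'=K_0$, and a merge-stability argument for $\kappa'<K_0$ that first shows the $\divclus$-minimizing pair stabilizes (since the sample dissimilarities approximate the population ones to order $(\log N/N)^{1/2}$) and then bounds the merged-atom parameter errors using the explicit update formulas \eqref{eq_merging_SGMoE_models} and boundedness of $\vTheta$. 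Your "Lipschitz property of the merge operator" and "once the chosen pair stabilizes" phrasing is exactly the paper's mechanism, carried out by explicit triangle-inequality estimates rather than a separately stated lemma, and your noted secondary subtlety about within-cell merges is the content of the first part of Lemma~\ref{lem:monotone_path}'s proof.
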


\subsection{Heights and Likelihood Along the Path}
\label{subsec_heights_lik}

{\bf Transition from Structure to Statistics.}
Heights summarize structural redundancy; likelihood captures statistical fit. Our second set of novel guarantees shows (i) heights shrink at a rate dictated by $\bar r(\widehat G_N) := \max_{k \in [K_0]} \bar{r}(|\sA_k (\widehat G_N)|)$, and (ii) the empirical likelihood concentrates to its population counterpart along the path.

{\bf Height Definitions.}
For all $\kappa\in[K_0+1,K]$ and $\kappa'\in[K_0]$, let
\begin{align}
    \height_N^{(\kappa)}:=\min&\Big\{\divclus\!\big(\exp(\widehat\omega_{0k_1})\delta_{\widehat\vtheta_{k_1}},
\exp(\widehat\omega_{0k_2})\delta_{\widehat\vtheta_{k_2}}\big) \nonumber \\
    &\qquad: k_1\neq k_2,\ \text{atoms of }\widehat G_N^{(\kappa)}\Big\},
\end{align}
and let $\height_0^{(\kappa')}$ be the analogous height on the true path. Then:
\begin{theorem}[Height control]
\label{thm:heights}
For all $\kappa\in[K_0+1,K]$ and $\kappa'\in[K_0]$, $\height_N^{(\kappa)}\;\lesssim\;(\log N/N)^{1/\bar r(\widehat G_N)},$ and
\[
\big|\height_N^{(\kappa')}-\height_0^{(\kappa')}\big|\;\lesssim\;(\log N/N)^{1/2},
\]
with constants depending only on $G_0$, $\vTheta$, and $\kappa$.
\end{theorem}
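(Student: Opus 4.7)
The plan is to derive both bounds from the fast pathwise rates of \cref{thm:path_rates}, combined with pigeonhole on the Voronoi cells (for the over-fit regime) and Lipschitz continuity of $\divclus$ on the compact parameter space (for the exact/underfit regime). All softmax weights $\exp(\omega_{0\ell})$ are uniformly bounded above and below by compactness of $\vTheta$, and a common softmax translation $(t_0, \vt_1)$ realizing the infimum in the Voronoi losses can be kept fixed across atoms of the same cell, so it cancels in pairwise differences.

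For the over-fit regime ($\kappa \in [K_0+1, K]$), I would first invoke \cref{thm:path_rates} to get $\VDFRA(\widehat G_N^{(\kappa)}, G_0) \lesssim (\log N/N)^{1/2}$ with high probability, which yields the same bound for $\VDO(\widehat G_N^{(\kappa)}, G_0)$ since $\VDFRA \ge \VDO$ by construction. Because $\kappa > K_0$, pigeonhole forces at least one Voronoi cell $\sA_k$ of $G_0$ to contain at least two atoms of $\widehat G_N^{(\kappa)}$; I would pick such a cell and read off the over-fit penalties in \cref{eq_overfitted_loss} to deduce $\|(\Delta_{\vt_1} \vomega_{1\ell k}, \Delta b_{\ell k})\| \lesssim (\log N/N)^{1/(2\bar r(|\sA_k|))}$ and $\|(\Delta \va_{\ell k}, \Delta \sigma_{\ell k})\| \lesssim (\log N/N)^{1/\bar r(|\sA_k|)}$ for every $\ell \in \sA_k$. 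For any pair $\ell_1, \ell_2 \in \sA_k$, the triangle inequality (with the common translation cancelling) gives $\|(\vomega_{1\ell_1}, b_{\ell_1}) - (\vomega_{1\ell_2}, b_{\ell_2})\|^2 \lesssim (\log N/N)^{1/\bar r(|\sA_k|)}$ and $\|(\va_{\ell_1}, \sigma_{\ell_1}) - (\va_{\ell_2}, \sigma_{\ell_2})\| \lesssim (\log N/N)^{1/\bar r(|\sA_k|)}$. Substituting into \cref{eq_dissim_sgmoe} and bounding the softmax prefactor on $\vTheta$, the dissimilarity of this pair is $\lesssim (\log N/N)^{1/\bar r(|\sA_k|)} \le (\log N/N)^{1/\bar r(\widehat G_N)}$, using that merging only shrinks cell cardinalities, so $\bar r(\widehat G_N^{(\kappa)}) \le \bar r(\widehat G_N)$ and larger $\bar r$ gives the weaker rate. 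Since $\height_N^{(\kappa)}$ is a minimum over all pairs, this single pair concludes the first bound.

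For the exact/underfit regime ($\kappa' \in [K_0]$), \cref{thm:path_rates} supplies $\VDE(\widehat G_N^{(\kappa')}, G_0^{(\kappa')}) \lesssim (\log N/N)^{1/2}$ with high probability. Since the exact-fit loss aligns atoms one-to-one, each atom of $\widehat G_N^{(\kappa')}$ is within parametric rate of its unique counterpart in $G_0^{(\kappa')}$, up to a common softmax translation that cancels in pairwise parameter differences. The dissimilarity $\divclus$ in \cref{eq_dissim_sgmoe} is a smooth function of the atom parameters that is uniformly Lipschitz on compact $\vTheta$ (the softmax weight ratios are bounded and smooth in $\omega_{0k}$). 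Hence for every index pair $(k_1, k_2)$, the difference between the dissimilarity computed from $\widehat G_N^{(\kappa')}$ and from $G_0^{(\kappa')}$ is $\lesssim (\log N/N)^{1/2}$. Applying the elementary bound $|\min_i x_i - \min_i y_i| \le \max_i |x_i - y_i|$ over all pairs yields the second claim.

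The main obstacle is the bookkeeping in step one: extracting per-atom deviations from the sum-type penalty in $\VDO$ requires that a single softmax translation $(t_0, \vt_1)$ be used across all atoms of a given cell so that it cancels in pairwise differences, and the conversion of $\bar r(|\sA_k|)$-th-power sum bounds into the right $1/\bar r$-type rate for $\divclus$ must be handled cell by cell. Compactness of $\vTheta$ absorbs the weight and translation constants, and the monotonicity of $\bar r$ under merging then patches these cell-level bounds into the uniform-in-$\kappa$ rate stated in the theorem.
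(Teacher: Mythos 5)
Your proof is correct and follows essentially the same route as the paper's: invoke \cref{thm:path_rates}, pigeonhole to find a multi-covered Voronoi cell, read the atomwise penalties out of the over-fit Voronoi loss with a common softmax translation, bound the dissimilarity $\divclus$ for that pair, and for $\kappa'\le K_0$ use the Lipschitz stability of $\divclus$ recorded in the paper's \eqref{eq:under_specified_ineq}. The only cosmetic difference is that you recombine per-atom bounds via the triangle inequality (and absorb the softmax prefactor by compactness), whereas the paper lower-bounds the two-term sum directly by a power-mean/H\"older inequality; both yield the stated $(\log N/N)^{1/\bar r(\widehat G_N)}$ rate after using the monotonicity of $\bar r$ under merging.
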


{\bf Likelihood.}
We define empirical average log-likelihood and population average log-likelihood as follow: $\bar\ell_N(p_G):=N^{-1}\sum_{n=1}^N\log p_G(y_n| \rvx_n)$ and $\cL(p_G):=\E_{(\rvx,y)\sim P_{G_0}}[\log p_G(y| \vx)]$.

\textbf{Condition K.} There exist positive constants $c_{\alpha}$ and $c_{\beta}$ such that for all sufficiently small $\epsilon$ and $\vtheta_0, \vtheta \in \vTheta$ such that $\norm{\vtheta - \vtheta_0} \leq \epsilon$, we have $\log f(\vx, y|\vtheta) \geq (1 + c_{\beta} \epsilon) \log f(\vx, y|\vtheta_0) - c_{\alpha} \epsilon$.

\begin{theorem}[Likelihood concentration on the path]
\label{thm:likelihood_path}
Assume \textbf{Condition K} hold. Then, for any $\kappa\in[K_0+1,K]$,
\(
\big|\bar\ell_N(p_{\widehat G_N^{(\kappa)}})-\cL(p_{G_0})\big|\lesssim(\log N/N)^{1/(2\bar r(\widehat G_N))}.
\)
Moreover, for $\kappa'\in[K_0]$, $\bar\ell_N(p_{\widehat G_N^{(\kappa')}})\to \cL(p_{G_0^{(\kappa')}})$ in $\sP_{G_0}$-probability as $N\to\infty$.
\end{theorem}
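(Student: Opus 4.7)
The plan is to decompose
\[
\bar\ell_N(p_{\widehat G_N^{(\kappa)}}) - \cL(p_{G_0})
\;=\; [\bar\ell_N(p_{\widehat G_N^{(\kappa)}}) - \cL(p_{\widehat G_N^{(\kappa)}})]
+ [\cL(p_{\widehat G_N^{(\kappa)}}) - \cL(p_{G_0})]
\]
and handle the empirical-process bracket by a uniform concentration argument and the bias bracket by a Condition-K-driven parameter-to-likelihood transfer. Throughout, I would work on the high-probability event from \cref{thm:path_rates}, on which $\VDFRA(\widehat G_N^{(\kappa)},G_0)\lesssim (\log N/N)^{1/2}$ and the Voronoi cardinalities $|\sA_k(\widehat G_N^{(\kappa)})|$ are compatible with \cref{thm:unified_rate}.

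For the empirical-process bracket, I would show $\sup_{G\in\cO_K(\vTheta)}|\bar\ell_N(p_G)-\cL(p_G)|\lesssim (\log N/N)^{1/2}$ with probability at least $1-e^{-c\log N}$. Condition K supplies a $\log$-Lipschitz envelope for $\{\log p_G:G\in\cO_K(\vTheta)\}$, and the compactness of $\vTheta$ together with boundedness of $\cX$ yields a finite bracketing entropy of logarithmic order, so a Bernstein-type chaining argument produces the uniform rate. Since $\widehat G_N^{(\kappa)}\in\cO_K(\vTheta)$, this bracket is controlled at the fast rate $(\log N/N)^{1/2}$, which is always faster than the target rate $(\log N/N)^{1/(2\bar r(\widehat G_N))}$.

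For the bias bracket, I would lift Condition K from the expert level to the mixture level as follows. Pair each fitted atom $\vtheta_\ell$ of $\widehat G_N^{(\kappa)}$ with its Voronoi-nearest truth $\vtheta_k^0$ via $\sA_k$, expand $\log p_{\widehat G_N^{(\kappa)}}(y\mid\vx)-\log p_{G_0}(y\mid\vx)$ around the paired truths, and apply Condition K to each expert. Integrating against $P_{G_0}$ and using compactness to bound $\E_{G_0}|\log p_{G_0}(y\mid \vx)|$ yields $|\cL(p_{\widehat G_N^{(\kappa)}})-\cL(p_{G_0})|\lesssim \max_{k,\ell\in\sA_k}\|\vtheta_\ell-\vtheta_k^0\|$ up to a multiplicative envelope factor. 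Inverting the atom-wise terms of $\VDFRA$ at the bound from \cref{thm:path_rates} delivers parameter gaps of order $(\log N/N)^{1/(2\bar r(|\sA_k|))}$ for $(\vomega_{1k}^0,b_k^0)$ and $(\log N/N)^{1/\bar r(|\sA_k|)}$ for $(\va_k^0,\sigma_k^0)$; taking the slowest rate over $k$ yields $(\log N/N)^{1/(2\bar r(\widehat G_N))}$, which dominates the empirical-process rate and gives the first claim. For $\kappa'\in[K_0]$, \cref{thm:path_rates} gives $\VDE(\widehat G_N^{(\kappa')},G_0^{(\kappa')})\lesssim (\log N/N)^{1/2}$, so atoms converge; continuity of $G\mapsto\cL(p_G)$ on $\cO_{K_0}(\vTheta)$ (Condition K + compactness) together with the uniform empirical-process bound above then yields $\bar\ell_N(p_{\widehat G_N^{(\kappa')}})\to\cL(p_{G_0^{(\kappa')}})$ in $P_{G_0}$-probability.

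The main obstacle is lifting the expert-level Condition K to a clean mixture-level Hölder bound through the softmax gate, which couples $(\omega_{0k},\vomega_{1k})$ nonlinearly across atoms so that $|\log p_G-\log p_{G_0}|$ is not directly Lipschitz in $\|G-G_0\|$. The fix is to use the Voronoi pairing to localize the comparison cell by cell and to exploit the first-order moment-matching built into the merge operator \cref{eq_merging_SGMoE_models}, which annihilates the leading intra-cell block-sums appearing in $\VDFRA$, before invoking Condition K around each paired truth. A secondary subtlety is that $\bar r(\widehat G_N)$ is a random quantity governed by the Voronoi cardinalities of the pre-merge MLE, so the final bound is established conditionally on the high-probability event of \cref{thm:path_rates}, on which this cell structure is stable.
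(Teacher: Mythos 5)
Your decomposition
\(\bar\ell_N(p_{\widehat G_N^{(\kappa)}}) - \cL(p_{G_0}) = [\bar\ell_N(p_{\widehat G_N^{(\kappa)}}) - \cL(p_{\widehat G_N^{(\kappa)}})] + [\cL(p_{\widehat G_N^{(\kappa)}}) - \cL(p_{G_0})]\)
is a genuinely different route from the paper's. The paper does \emph{not} decompose into ``bias \(+\) fluctuation''; for \(\kappa \ge K_0\) it instead writes \(P_N\log p_{\widehat G} - P_{G_0}\log p_{G_0} = P_N\log(p_{\widehat G}/p_{G_0}) + (P_N - P_{G_0})\log p_{G_0}\), applies the concavity bound \(\tfrac12\log(p_G/p_{G_0}) \le \log(\bar p_G/p_{G_0})\) with \(\bar p_G := (p_G + p_{G_0})/2\), then drops the nonnegative KL term and applies Theorem 5.11 of van de Geer to the \emph{bounded-below} function \(\log(\bar p_G/p_{G_0})\) over a Hellinger ball whose radius comes from a \(\VDFRA\)-to-\(W_{\bar r}\)-to-Hellinger chain. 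Condition K is invoked only for \(\kappa = K_0\) (Case 2) to produce the matching lower bound via a direct pointwise comparison \(p_{\widehat G_N^{(K_0)}} \gtrsim p_{G_0}^{1+c\epsilon_N}\). Case 3 is a ULLN-plus-DCT argument with no rate.

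What your approach buys: it is conceptually cleaner and, if carried through, would give the two-sided bound for \emph{all} \(\kappa \in [K_0+1,K]\), whereas the paper's Case 1 delivers only the one-sided upper bound there (the paper's own use in the proof of Theorem~\ref{thm_order_consistency} indeed invokes only ``\(\bar\ell_N^{(\kappa)} \le -H(p_{G_0}) + O(\cdot)\)'' for \(\kappa > K_0\)). You also correctly identify that \(\cL(p_{\widehat G}) - \cL(p_{G_0}) = -\mathrm{KL}(p_{G_0}\|p_{\widehat G})\), that Condition K is what converts parameter proximity into a KL upper bound, and that the bias rate \((\log N/N)^{1/(2\bar r)}\) dominates the fluctuation rate so long as the latter is any polynomial order faster.

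What your approach costs, and where you should be more careful. First, your fluctuation bound \(\sup_{G \in \cO_K}|\bar\ell_N(p_G) - \cL(p_G)| \lesssim (\log N/N)^{1/2}\) with probability \(1 - e^{-c\log N}\) is over the \emph{unbounded} class \(\{\log p_G\}\) whose envelope grows quadratically in \(y\); the paper deliberately sidesteps this by working with \(\log(\bar p_G/p_{G_0}) \ge -\log 2\), for which Theorem 5.11 applies directly with exponential tails. For your class you would need a Bernstein/Orlicz-type chaining argument, and even then the paper's own control of the single function \((P_N-P_{G_0})\log p_{G_0}\) (via Chebyshev) yields only polynomial tails, so the advertised exponential tail of the high-probability event should not be claimed without work. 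Second, you propose to apply Condition K for all \(\kappa > K_0\), where Voronoi cells are multi-covered; the paper does this only for singleton cells. Your remark that the merge operator's first-order moment matching kills the leading intra-cell terms is the right intuition, but the actual lifting to multi-covered cells must handle the softmax denominator, which is a single nonlinear coupling of all atoms (not a per-cell object). This is plausible but is a genuine extra step, not a direct restatement of the paper's Case 2. Neither of these is a fatal gap, but both need to be filled before the argument is complete.
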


\subsection{Choosing the Number of Experts via a Height-Likelihood Rule}
\label{subsec_order_selection}

{\bf Novel Model Selection Principle.}
By combining structural signal (heights) and statistical fit (likelihood), our DSC favors models that are both well-separated and well-supported by the data, unlike AIC/BIC/ICL, which ignore the geometry of the fitted atoms.

{\bf DSC Definition.}
For each level $\kappa$, define
\[
\mathrm{DSC}_N^{(\kappa)}:=-\Big(\height_N^{(\kappa)}+\epsilon_N\,\bar\ell_N\!\big(p_{\widehat G_N^{(\kappa)}}\big)\Big),
\]
where the weight $\epsilon_N$ satifies $1\ll \epsilon_N \ll (N/\log N)^{1/(2\bar r(\widehat G_N))}$.
 A practical choice is $\epsilon_N:=\log N$. Select
\[
\widehat K_N:=\argmin_{\kappa\in[2,K]}\mathrm{DSC}_N^{(\kappa)}.
\]

\begin{theorem}[Consistency of model selection]
\label{thm_order_consistency}
Assume that data are generated by a softmax-gated Gaussian MoE, the parameter space $\vTheta$ is compact, the covariate support $\cX \subset \R^D$ is bounded, the DSC uses a penalty $\epsilon_N$ satisfying as above, and the true component $K_0\ge2$. Then $\widehat K_N\to K_0$ in $\sP_{G_0}$-probability as $N\to\infty$.
\end{theorem}

{\bf Interpretation.}
Unlike pure likelihood criteria (AIC/BIC/ICL), $\mathrm{DSC}_N^{(\kappa)}$ also penalizes \emph{structural closeness} through $\height_N^{(\kappa)}$. Small heights indicate either redundant atoms (near-duplicates) or atoms with tiny softmax weights; both are symptomatic of over-specification. The joint use of heights and likelihood therefore yields a more robust selection rule in SGMoE.

\subsection{Proof Sketches}

We sketch the proofs of \cref{lem:monotone_path,thm:path_rates,thm:heights,thm:likelihood_path,thm_order_consistency}, which together establish monotonicity along the dendrogram path and consistency of the dendrogram-based model selection. We first motivate the fast-rate-aware Voronoi distance in \cref{eq_def_new_voronoi_D}. When $\widehat{G}_N \to G_0$, over-specification yields Voronoi cells with $|\sA_k^N|>1$. Repeatedly merging such atoms eventually makes every cell singleton, which motivates our construction. Using the density decomposition
\begin{align*}
    Q_{N}&=\Big[\sum_{k=1}^{K_{0}} \exp \big((\vomega_{1 k}^{0}+\vt_1)^{\top} x+\omega_{0 k}^{0}+t_{0}\big)\Big]\\
    &\quad \times\big[p_{G_{N}}(y|\vx)-p_{G_{0}}(y|\vx)\big],
\end{align*}
we analyze the sums over indices with $|\sA_k^N|>1$ under $1 \le |\vell_1|+\ell_2 \le 2\bar r(|\sA_k^N|)$. For clarity, we also consider $(\vell_1,\ell_2)$ with $1 \le |\vell_1|+\ell_2 \le 2$, which corresponds to $|\sA_k^N|=1$. This reasoning leads to the merging algorithm.

{\bf Proof Sketch of \cref{lem:monotone_path}.}
Proceed by induction on $\kappa\in[K_0,K]$ and justify $\VDFRA(G^{(K)},G_0)\gtrsim \VDFRA(G^{(K-1)},G_0)$. As $\VDFRA(G^{(K)},G_0)\to0$, extract a sequence that satisfies $(\va_{\ell}^N,b_{\ell}^N,\sigma_{\ell}^N)\to(\va_k^0,b_k^0,\sigma_k^0)$ and there exist $t_0\in\R$, $\vt_1\in\R^{D}$ with $\sum_{\ell\in\sA_k^N}\exp(\omega_{0\ell}^N)\to \exp(\omega_{0k}^0+t_0)$ and $\vomega_{1\ell}^N\to \vomega_{1k}^0+\vt_1$ for all $\ell\in\sA_k^N$. The minimizing pair $(\ell_1,\ell_2)$ must belong to a common $\sA_k^N$. Using \cref{eq_merging_SGMoE_models} and Jensen’s inequality for the convex maps $z\mapsto\|z\|^{\bar r_k}$ and $z\mapsto\|z\|^{\bar r_k/2}$, it suffices to show
\begin{align*}
&(\exp\omega_{0\ell_1}^N+\exp\omega_{0\ell_2}^N)\, \|(\Delta_{\vt_1}\vomega_{1 * k}^N,\Delta b_{*k}^N)\|^{\bar r_k}\\
&\hspace{2.5cm} \lesssim \sum_{j\in\{\ell_1,\ell_2\}}\exp\omega_{0j}^N\,\|(\Delta_{\vt_1}\vomega_{1 j k}^N,\Delta b_{j k}^N)\|^{\bar r_k},\\
&(\exp\omega_{0\ell_1}^N+\exp\omega_{0\ell_2}^N)\, \|(\Delta \va_{* k}^N,\Delta \sigma_{* k}^N)\|^{\bar r_k/2}\\
&\hspace{2.5cm}\lesssim \sum_{j\in\{\ell_1,\ell_2\}}\exp\omega_{0j}^N\,\|(\Delta \va_{j k}^N,\Delta \sigma_{j k}^N)\|^{\bar r_k/2},
\end{align*}
which yields the desired monotonicity.

{\bf Proof Sketch of \cref{thm:path_rates}.}
Combine \cref{lem:monotone_path} with an inverse bound for $\VDFRA(\widehat G_N,G_0)$. Following \citealp{nguyen_demystifying_2023}, establish
\[
\bbE_\rvx\big[\TV(p_G(\cdot|\vx),p_{G_0}(\cdot|\vx))\big]\;\gtrsim\;\VDFRA(G,G_0),
\]
and use Proposition 2 in \citealp{nguyen_demystifying_2023},  $$\bbE_\rvx[\hels(p_{\widehat G_N}(\cdot|\vx),p_{G_0}(\cdot|\vx))]=\cO_\sP((\log N/N)^{1/2}),$$ to derive the rate for $\widehat G_N$. Apply \cref{lem:monotone_path} to obtain the bounds for $\kappa\in[K_0+1,K]$. For $\kappa'\in[K_0]$, combine the previous rate with the merging formula to conclude.

{\bf Proof Sketch of \cref{thm:heights}.}
Use \cref{thm:path_rates} and the fact that any merged pair lies in the same Voronoi cell. Inequalities analogous to those in \cref{lem:monotone_path,thm:path_rates} translate parameter rates into height bounds.

{\bf Proof Sketch of \cref{thm:likelihood_path}.}
Consider three cases. If $\kappa\ge K_0$, invoke empirical process tools \citep{van_de_geer_empirical_2000} and comparisons between Hellinger and Wasserstein distances \citep{chen_optimal_1995,villani_topics_2003,villani_optimal_2009}. If $\kappa=K_0$, combine \cref{thm:path_rates} with verification that $u(y|\vx;\vomega_1,\va,b,\sigma):=\exp(\vomega_1^\top \vx)\cN(y|\va^\top \vx + b,\sigma)$ satisfies \textbf{Condition K}. If $\kappa<K_0$, conclude via standard convergence arguments.

Finally, \cref{thm_order_consistency} follows from \cref{thm:heights,thm:likelihood_path}.

\section{SIMULATION STUDIES}\label{sec_simulation}

\begin{figure*}[ht]
    \centering
    \begin{subfigure}[b]{0.32\textwidth}
        \centering
        \includegraphics[width=\linewidth]{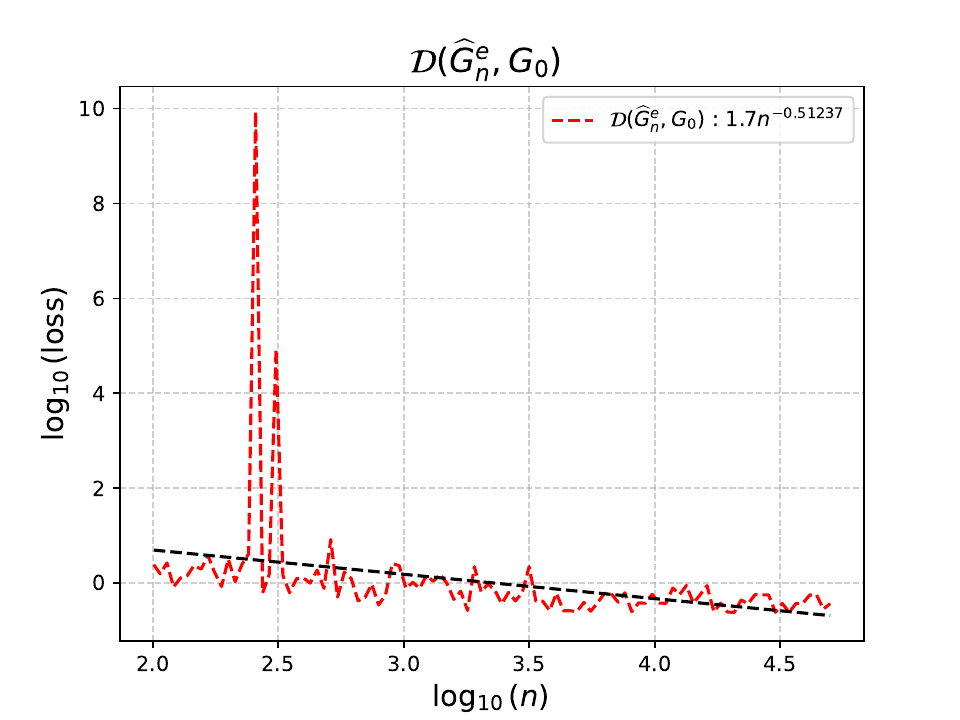}
        \caption{Exact-fitted $\hat{G}^e_n$.}
        \label{fig_voronoi_exact}
    \end{subfigure}
    \hfill
    \begin{subfigure}[b]{0.32\textwidth}
        \centering
        \includegraphics[width=\linewidth]{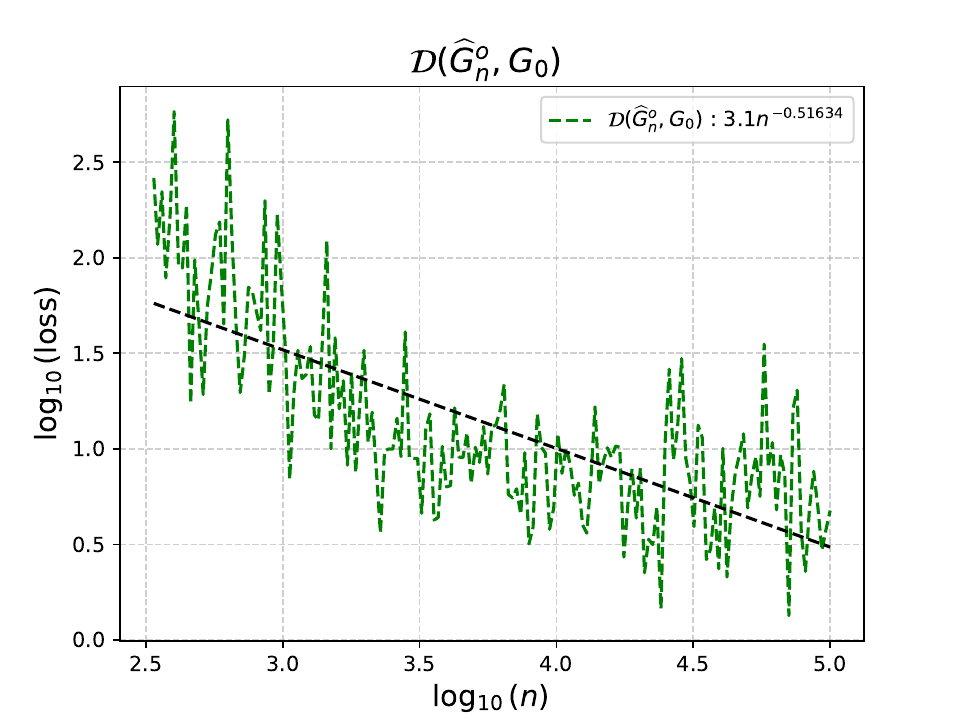}
        \caption{Over-fitted $\hat{G}^o_n$.}
        \label{fig_voronoi_over}
    \end{subfigure}
    \hfill
    \begin{subfigure}[b]{0.32\textwidth}
        \centering
        \includegraphics[width=\linewidth]{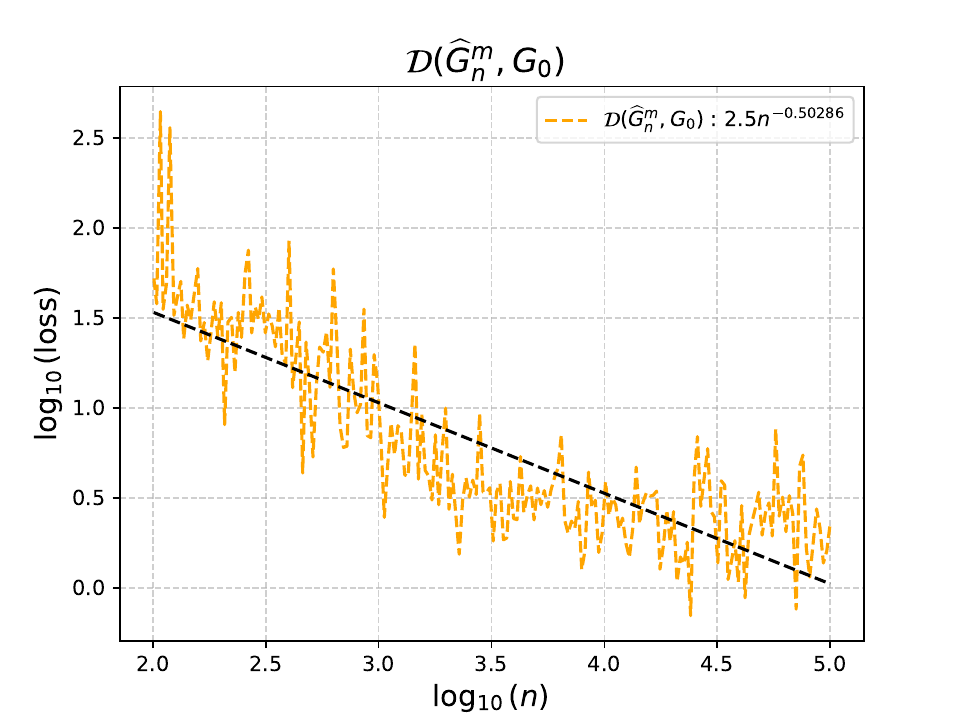}
        \caption{Merged $\hat{G}^m_n$.}
        \label{fig_voronoi_merge}
    \end{subfigure}
    \caption{Convergence under three settings: (a) exact-fitted, (b) over-fitted, and (c) merged mixing measures.}
    \label{fig_voronoi_three}
\end{figure*}

We first show that the dendrogram-based merge yields fast convergence of the mixing measure: starting from an over-fitted estimator that converges slowly, the merged estimator approaches the truth quickly. We then assess model selection via DSC against AIC, BIC, and ICL. Unlike these single-shot selectors, the dendrogram offers a hierarchical view of the fitted atoms, clarifying redundancy and structure. All simulations were run in Python~3.12 on a standard Unix-based system.

{\bf Numerical Schemes.}
The ground-truth mixing measure is
{\small
\begin{align*}
    G_0 &\equiv G_0(2) \;:=\; \sum_{k=1}^{2} \exp(\omega_{0k}^{0})\,\delta_{(\vomega_{1k}^{0}, \va_{k}, \vb_{k}, \sigma_{k})}\\
    &\;=\;
    \exp(-8)\,\delta_{(25,\,-20,\,15,\,0.3)}
    +
    \exp(0)\,\delta_{(0,\,20,\,-5,\,0.4)}.
\end{align*}
}
For each experiment, $N$ varies on a logarithmic grid from $\log_{10}(N_{\min})$ to $\log_{10}(N_{\max})$, yielding $N_{\text{num}}$ sizes in $[N_{\min},N_{\max}]$. At each $N$, we generate $N_{\text{rep}}$ datasets from $G_0$ and compute the exact-fitted MLE $\widehat{G}^{e}_{N}\in\cE_{2}$ and the over-fitted MLE $\widehat{G}^{o}_{N}\in\cO_{4}$ ($K=4$) using an EM variant of \cite{Chamroukhi_2009}. EM stops at tolerance $\epsilon=10^{-6}$ or 2000 iterations. Because the softmax gate in \cref{eq_mixture_expert_1} is translation-invariant, we fix a baseline by setting
$\omega_{0K_{0}}^{0}=0$ and $\vomega_{1K_{0}}^{0}=\zero$.

To stabilize estimation and highlight asymptotics, EM is favorably initialized. For each replication and $(K,K_0)$, split $[K]$ into $K_0$ disjoint sets $\mathbb{S}_1,\ldots,\mathbb{S}_{K_0}$, each nonempty. For $k\in\mathbb{S}_t$, draw
$\veta^0_k=(\omega_{0k}^{0},\vomega_{1k}^{0},\va^0_k,b^0_k,\sigma^0_k)$
from a Gaussian centered at
$\veta^0_t=(\omega_{0t}^{0},\vomega_{1t}^{0},\va^0_t,b^0_t,\sigma^0_t)$
with small covariance. After estimating $\widehat{G}^{o}_{N}$, apply the merging procedure in \cref{alg:sgmoe_path} to obtain $\widehat{G}^{m}_{N}\in\cE_{2}$.

{\bf Fast Parameter Estimation via the Dendrogram.}
We measure accuracy with the Voronoi distance in \cref{eq_def_new_voronoi_D}. For the exact-fitted setting, we use $30$ replicates over $100$ sample sizes with $N\in[10^{2},5\!\times\!10^{4}]$; for the over-fitted setting, $40$ replicates over $165$ sizes with $N\in[338,10^{5}]$; for the merged estimator, $40$ replicates over $200$ sizes with $N\in[10^{2},10^{5}]$. The average loss and a reference slope $N^{-1/2}$ are shown in \cref{fig_voronoi_three}. Results match \cref{thm:path_rates}: the exact-fitted and merged estimators attain the optimal $N^{-1/2}$ rate toward $G_0$, while merging drives the over-fitted estimator to the exact-fit level. For illustration of \cref{alg:sgmoe_path}, \cref{fig_merging_procedure} considers $ G_0(3)$ as follows:
{\small
\begin{align}
   e^{-2}\,\delta_{(3,\,1,\,0,\,1)}
       + e^{1}\,\delta_{(-3.5,\,8,\,7,\,0.8)}
       + e^{0}\,\delta_{(0,\,3,\,5,\,0.6)}. \label{eq_G_0_3}
\end{align}
}

\begin{figure*}[!ht]
    \centering
    \begin{subfigure}[b]{0.32\textwidth}
        \centering
        \includegraphics[width=\linewidth]{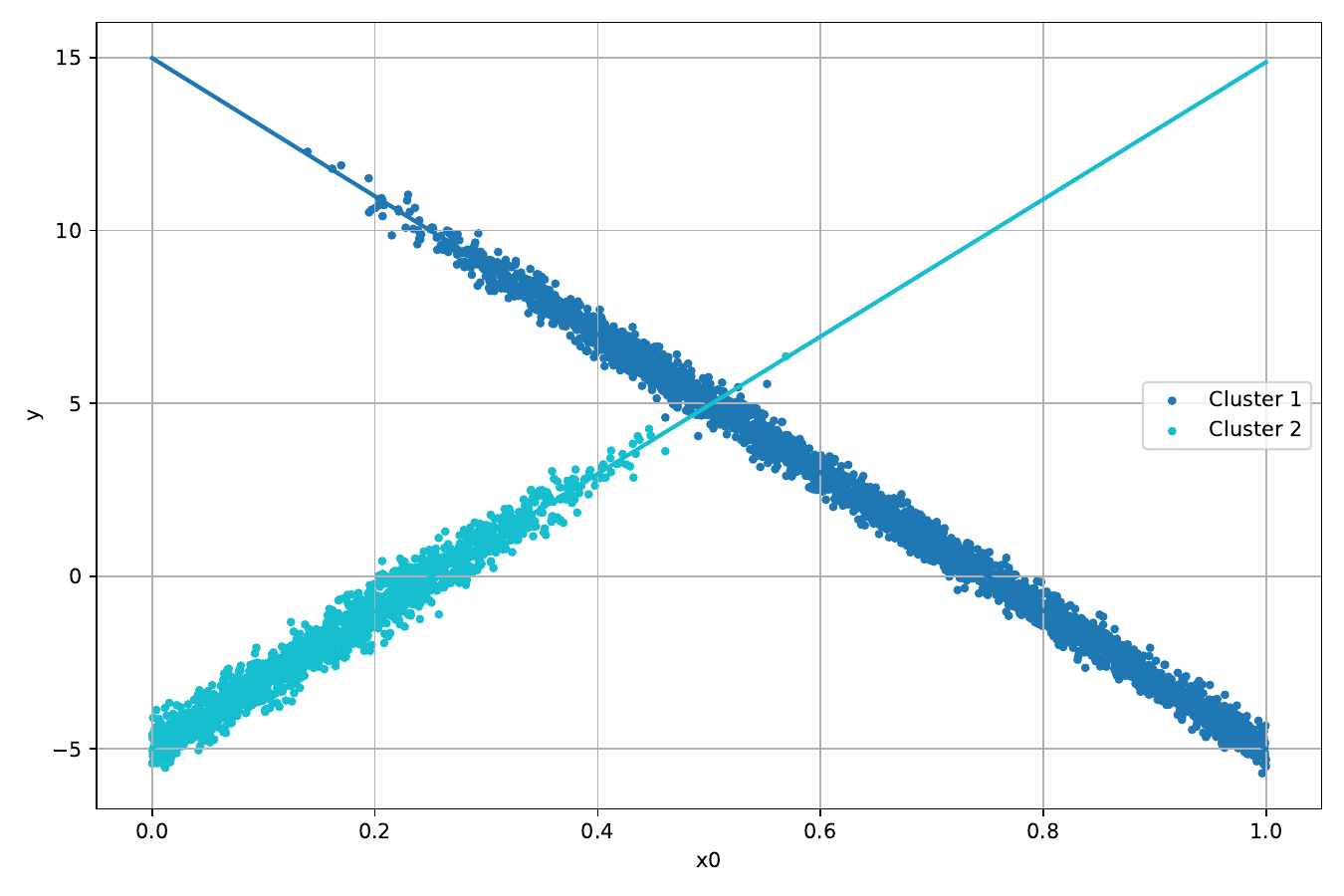}
        \caption{Data clusters for $G_0$ ($n=5000$).}
        \label{fig_2_exp5_5000}
    \end{subfigure}
    \hfill
    \begin{subfigure}[b]{0.32\textwidth}
        \centering
        \includegraphics[width=\linewidth]{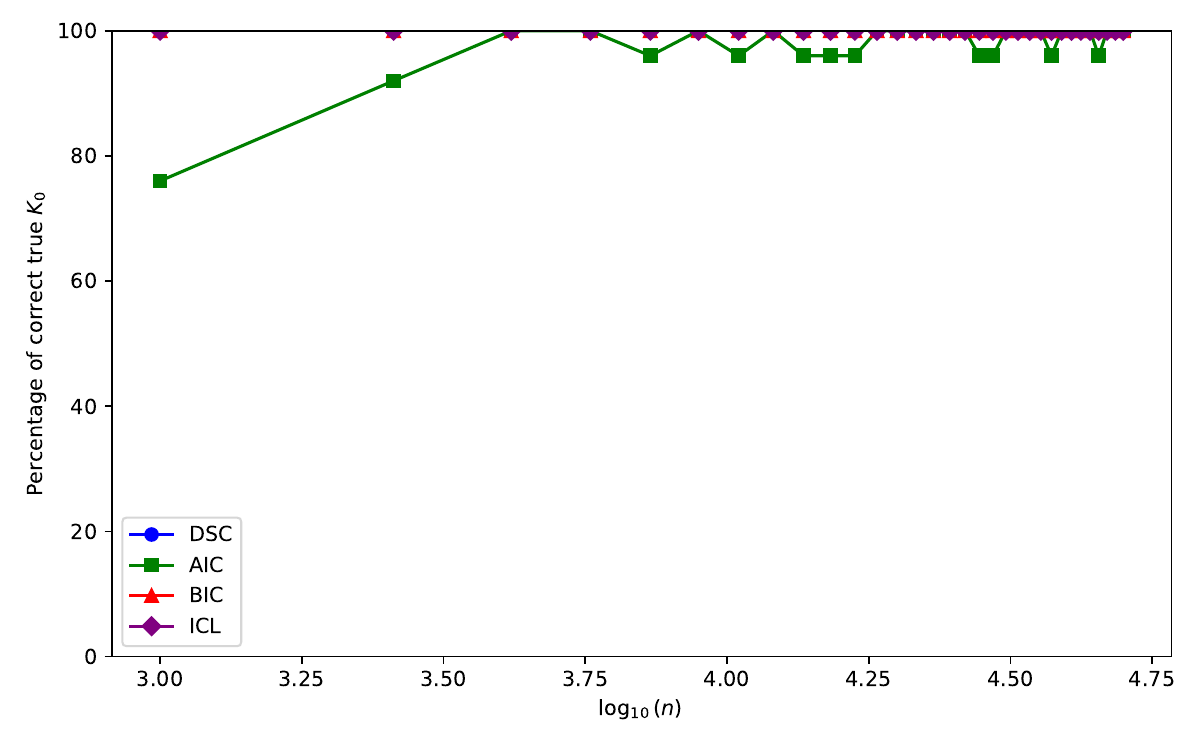}
        \caption{Proportion of correct selections.}
        \label{fig_50000_25tries_percentage}
    \end{subfigure}
    \hfill
    \begin{subfigure}[b]{0.32\textwidth}
        \centering
        \includegraphics[width=\linewidth]{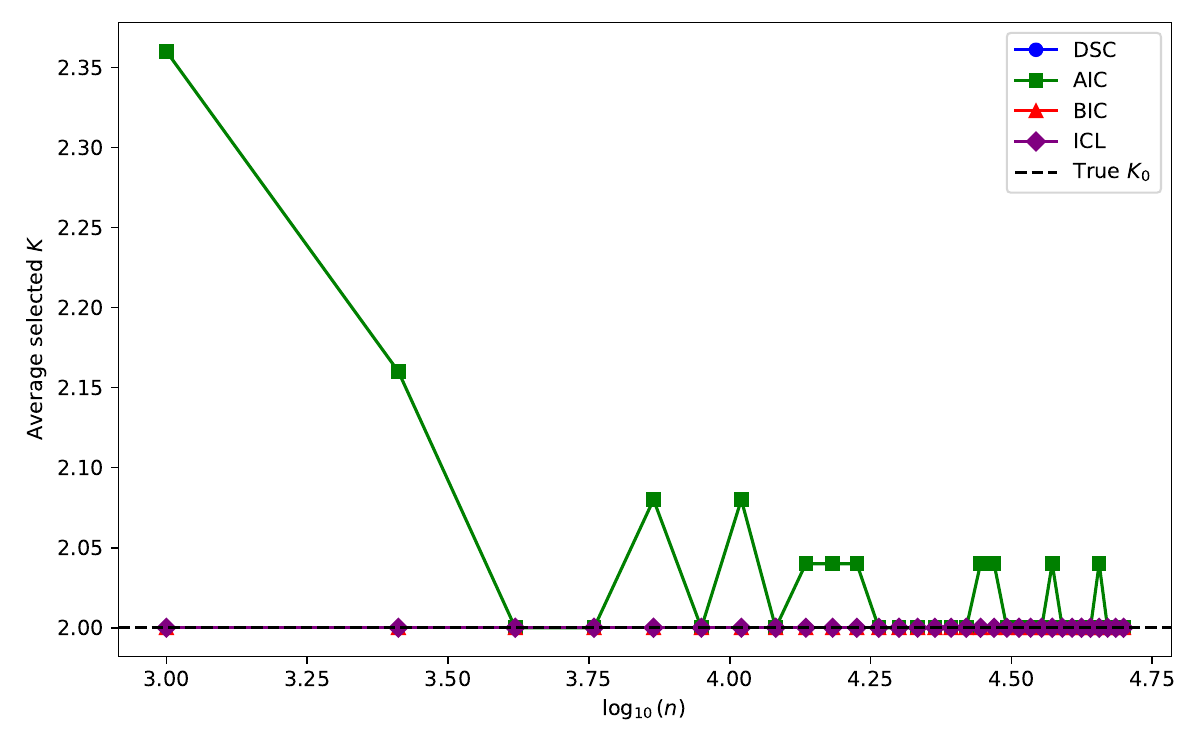}
        \caption{Average selected components.}
        \label{fig_K8_50000_25tries_average}
    \end{subfigure}
    \caption{DSC vs.\ AIC, BIC, and ICL for selecting $K_0 = 2$ of $G_0$.}
    \label{fig_DSC_well_specified}
\end{figure*}

{\bf Model Selection with DSC.}
We compare DSC to AIC, BIC, and ICL over $32$ sample sizes with $N\in[10^{3},5\times 10^{4}]$ and $N_{\text{rep}}=25$. For each method, we report the selection frequency of $K_0$ and the average selected size (see \cref{fig_DSC_well_specified}). AIC/BIC/ICL fit a model for each $\kappa\in[K]$ via EM and pick the best by the corresponding criterion. DSC fits a single SGMoE with $K=4$, builds its dendrogram, and evaluates the criterion with $\omega_N=\log N$ (\cref{subsec_order_selection}). AIC tends to overestimate at small $N$, while all methods recover $K_0$ for large $N$.

{\bf Misspecified Regime.}
We study $\epsilon$-contamination with
$p_0=(1-\epsilon)p_{G_0}+\epsilon q$,
where $q$ is Laplace$(0,1)$. \cref{fig_e-conta_plot_5000} shows the contaminated sample ($n=5000$). \cref{fig_proportion_e-conta,fig_average_e-conta} report the proportion of correct selections and the average selected size. AIC/BIC/ICL behave similarly: they may find $K_0=2$ at small $N$, but tend to overselect as $N$ grows, indicating sensitivity to contamination. DSC, leveraging dendrogram structure, is more robust and continues to select $K_0$ with non-negligible frequency even at large $N$. 

\begin{figure*}[!h]
    \centering
    \begin{subfigure}[b]{0.32\textwidth}
        \centering
        \includegraphics[width=\linewidth]{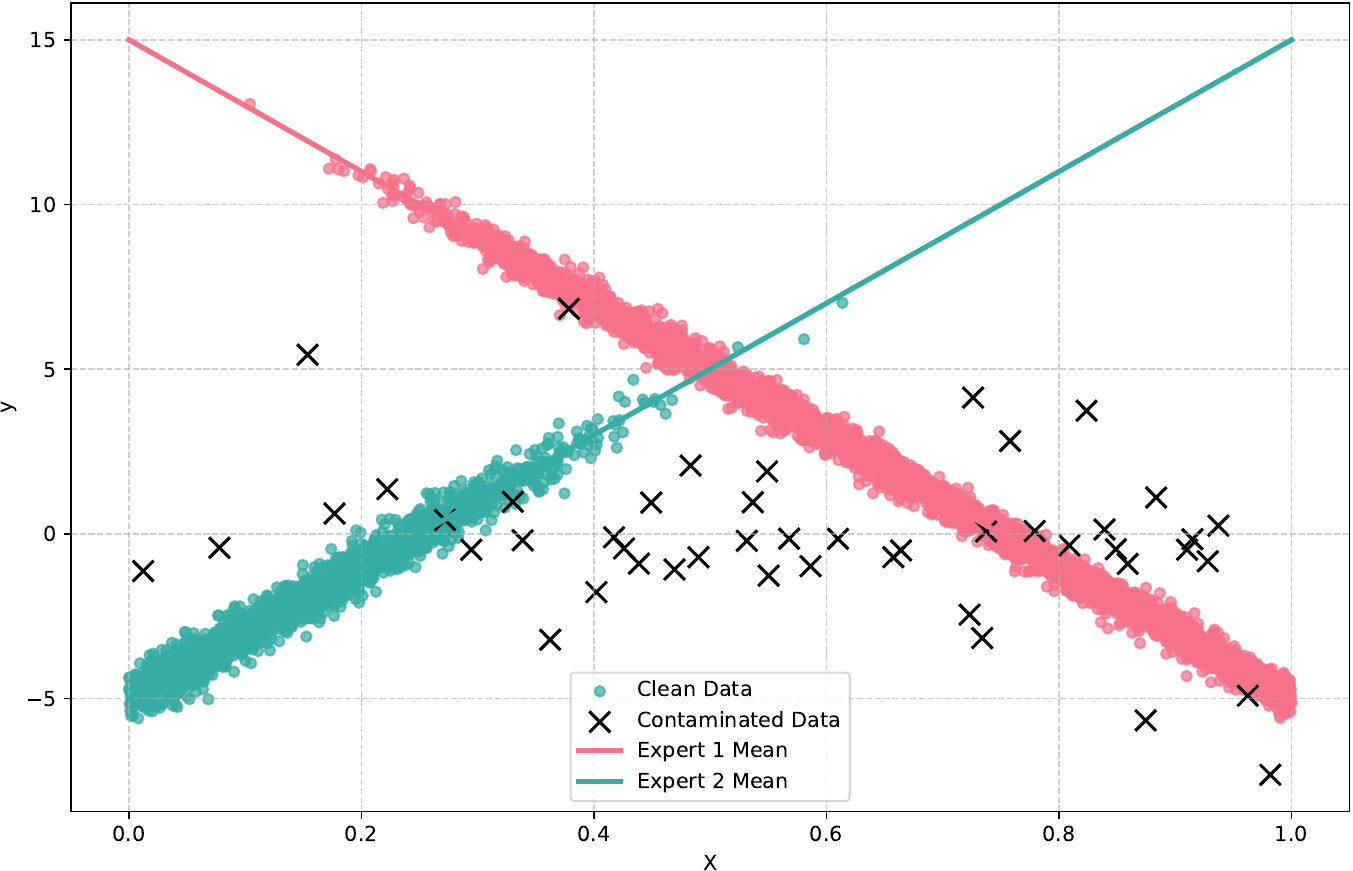}
        \caption{Contaminated sample ($n=5000$).}
        \label{fig_e-conta_plot_5000}
    \end{subfigure}
    \hfill
    \begin{subfigure}[b]{0.32\textwidth}
        \centering
        \includegraphics[width=\linewidth]{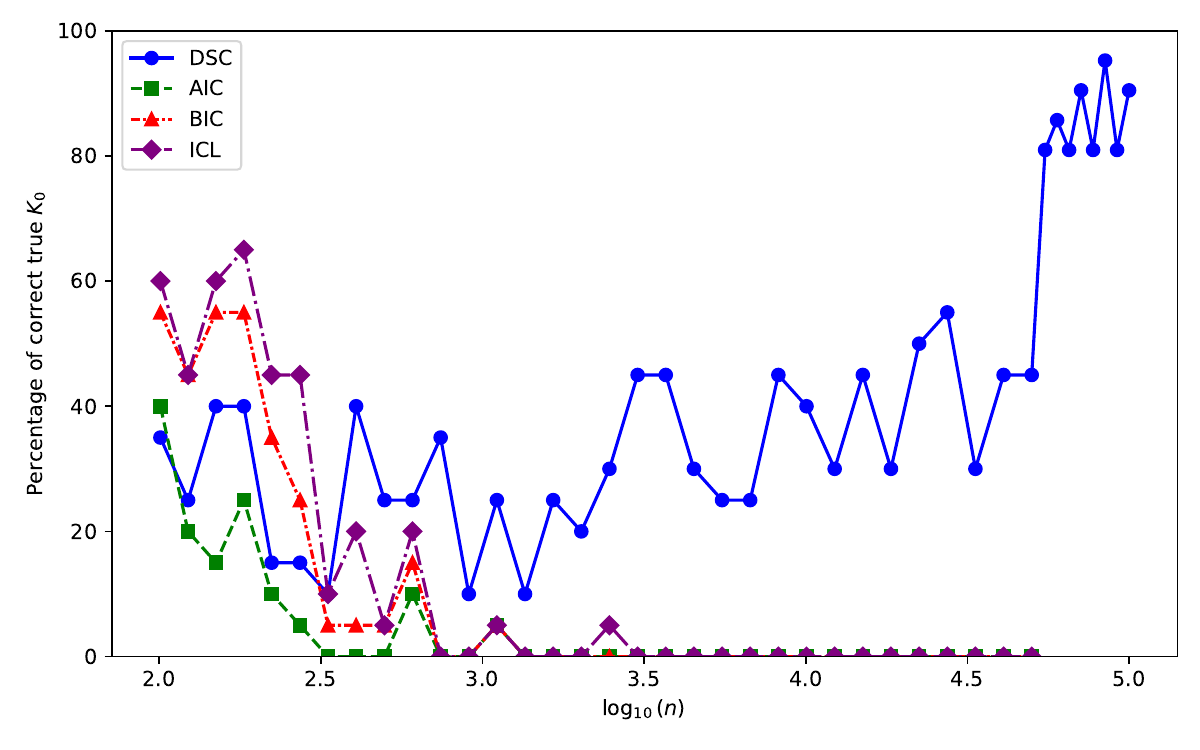}
        \caption{Proportion of correct selections.}
        \label{fig_proportion_e-conta}
    \end{subfigure}
    \hfill
    \begin{subfigure}[b]{0.32\textwidth}
        \centering
        \includegraphics[width=\linewidth]{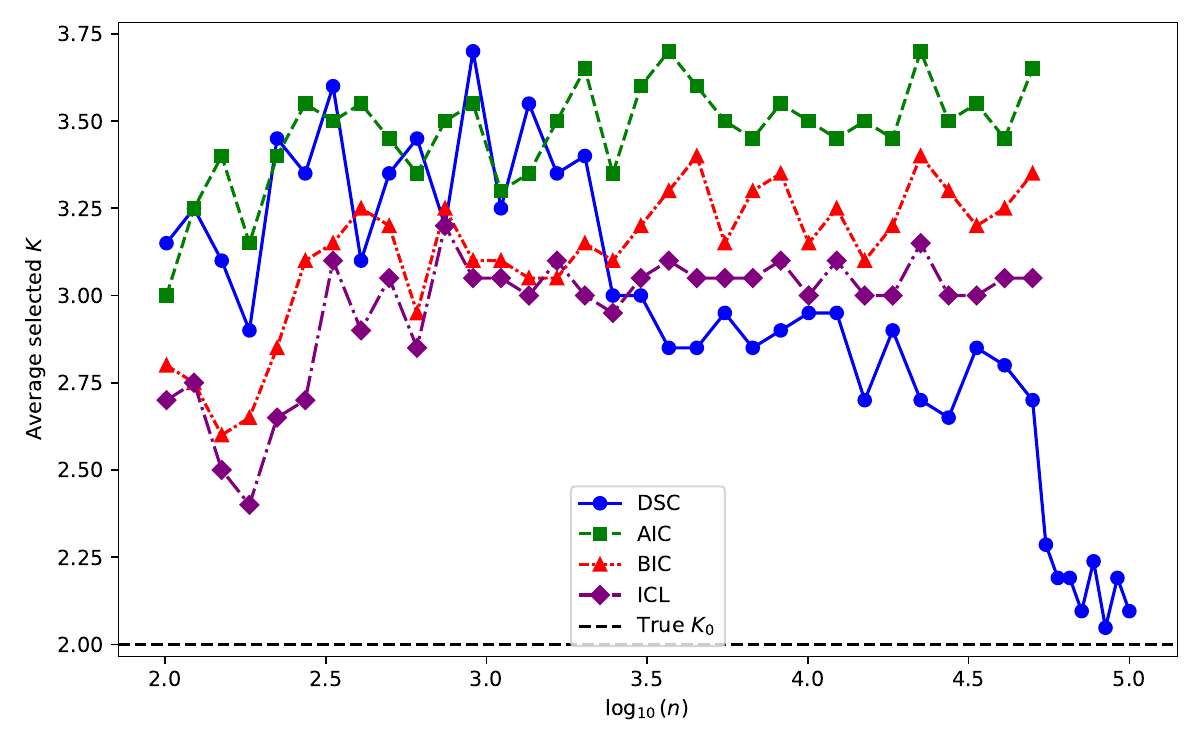}
        \caption{Average selected components.}
        \label{fig_average_e-conta}
    \end{subfigure}
    \caption{Model selection under $\epsilon$-contamination with $K_0=2$. After AIC, BIC, and ICL fail to recover $K_0$, we further test DSC on 8 sample sizes between $5.5\times 10^{4}$ and $10^{5}$, where it still recovers $K_0$ with high frequency.}
    \vspace{-.5cm}
    \label{fig_econtam_results}
\end{figure*}

\section{REAL DATA APPLICATION}\label{appendix_real_data}

We illustrate the dendrogram of mixing measures obtained from our SGMoE model using a real dataset from the study in \cite{10.1093/jrsssc/qlae012}. The data originate from a large-scale experiment on maize aimed at understanding the genetic and molecular bases of drought-responsive traits from proteins expressed in the leaf~\citep{prado2018phenomics, blein2020systems}, where 254 genotypes representing the genetic diversity of dent maize were grown under two watering conditions and phenotyped for seven ecophysiological traits. 

After preprocessing and removing missing data as described in \cite{10.1093/jrsssc/qlae012}, the final dataset consists of 233 maize genotypes ($N = 233$), two ecophysiological traits (outputs), which are \emph{water use} (WU) and the proteins quantified under the \emph{water deficit} (WD) condition, and 973 protein variables (inputs, $D = 973$). To reduce dimensionality and remove irrelevant features, we apply a Lasso procedure to select $D = 10$ protein variables most associated with the target trait and primarily focus on the ecophysiological trait WU. 

We then fit the SGMoE model with $K = 20$ clusters. To ensure a more robust initialization, we first cluster the data into 20 groups using the K-Means algorithm. The resulting cluster assignments are then used to initialize the gating and expert parameters of the SGMoE model, providing a stable starting point for the subsequent steps of the EM algorithm.
\begin{figure*}[!h]
    \centering
    \begin{subfigure}[b]{0.32\textwidth}
        \centering
        \includegraphics[width=\linewidth]{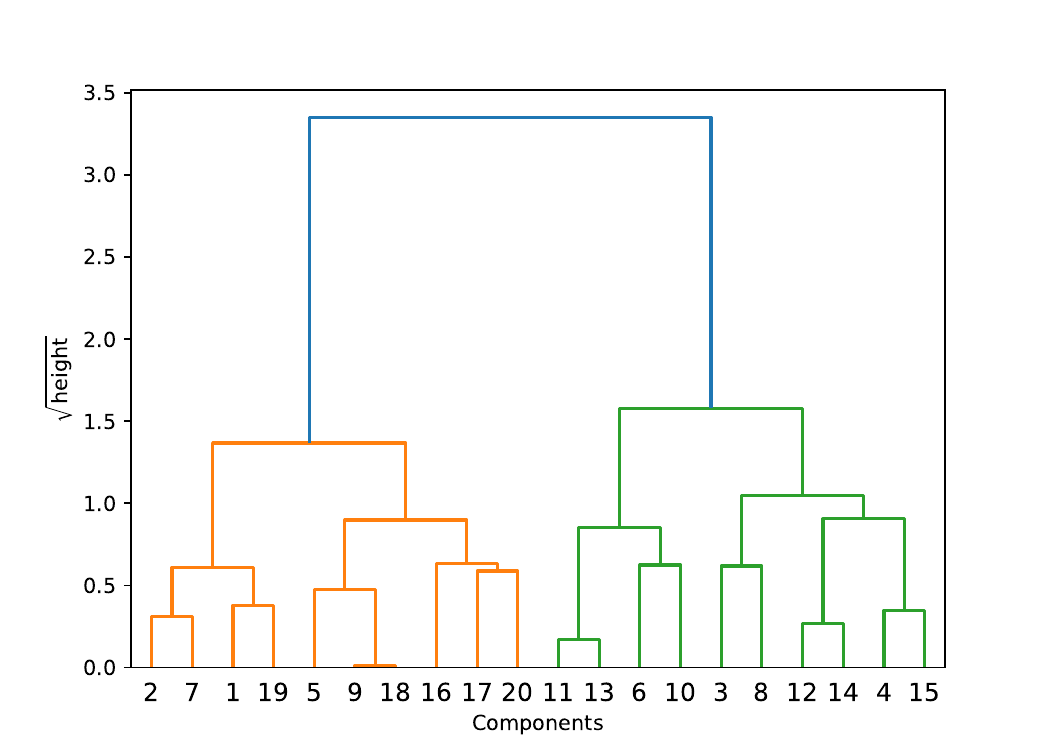}
        \caption{Dendrogram of mixing measure.}
        \label{ddg_mixing_measures}
    \end{subfigure}
    \hfill
    \begin{subfigure}[b]{0.32\textwidth}
        \centering
        \includegraphics[width=\linewidth]{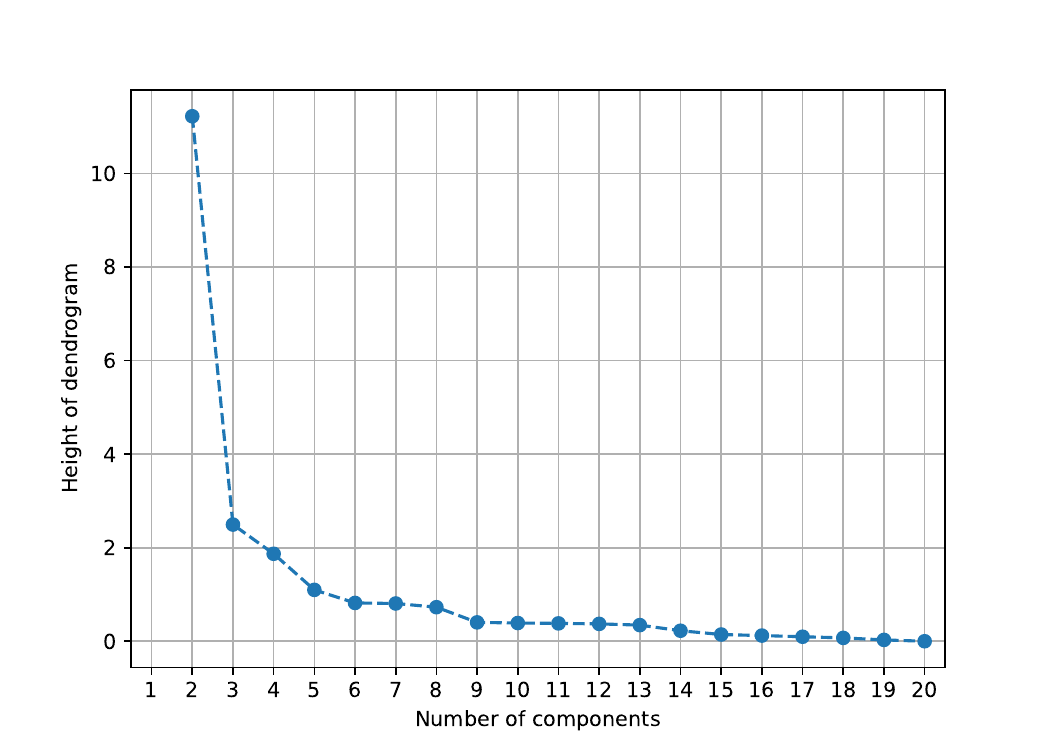}
        \caption{Heights between levels.}
        \label{ddg_height}
    \end{subfigure}
    \hfill
    \begin{subfigure}[b]{0.32\textwidth}
        \centering
        \includegraphics[width=\linewidth]{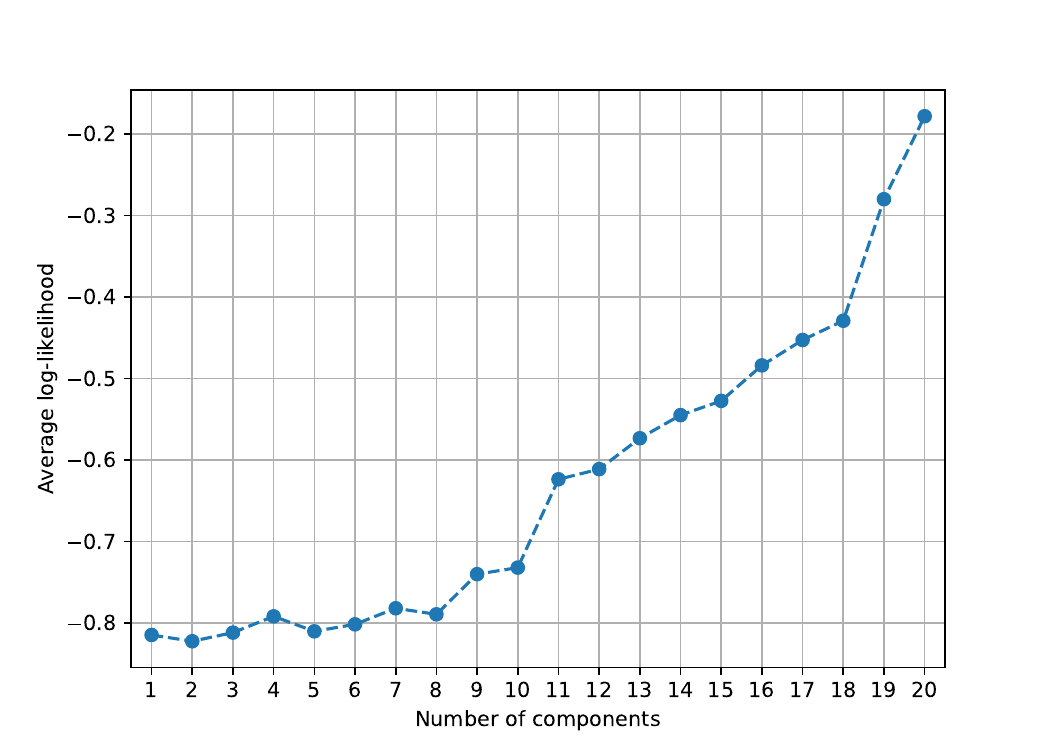}
        \caption{Average log-likelihood across levels.}
        \label{ddg_avg_lllh}
    \end{subfigure}
    \caption{Dendrogram of mixing measure inferred from maize drought-responsive traits dataset.}
    \label{fig_voronoi_three_real_data_set}
\end{figure*}

\cref{ddg_mixing_measures} displays the dendrogram of the fitted mixing measure obtained by \cref{alg:sgmoe_path}, which reveals the hierarchical structure underlying the data. In this experiment, both BIC and ICL select a single component, while DSC selects 2 components, and AIC overestimates with 18 components. The corresponding heights and average log-likelihoods across levels are shown in \cref{ddg_height} and \cref{ddg_avg_lllh}, respectively. We observe that the merging heights generally decrease and approach zero, while the average log-likelihood stabilizes in a few initial levels. Notably, the height at level~2 is much larger than those at subsequent levels, suggesting that there should be two clusters in the data.  

The dendrogram not only facilitates effective model selection but also unveils the hierarchical relationships among mixture components, thereby enhancing the interpretability of the estimated parameters in complex biological data settings.

\section{CONCLUSION}
\label{sec_conclusion}

This work shows that rate-aware geometry, realized through a Voronoi distance together with merging and dendrograms of mixing measures, delivers both fast parameter estimation and consistent, sweep-free model selection in SGMoE. We hope these ideas spur further advances in structured mixture models and expert architectures. Our analysis assumes linear softmax gates, Gaussian experts, compact \(\vTheta\), and bounded covariate support. Extending the theory beyond these settings will require additional regularity and tail controls. Exact values of \(\bar r(M)\) are known for \(M\le3\); for \(M\ge4\) only lower bounds are available. While our guarantees use these bounds, sharper algebraic results would further tighten rates. 

\newpage
\subsubsection*{Acknowledgments}
This project was funded primarily by the Australian Research Council Centre of Excellence for the Mathematical Analysis of Cellular Systems (CE230100001), which supported TrungTin Nguyen and Christopher Drovandi. Christopher Drovandi was also supported by an Australian Research Council Future Fellowship (FT210100260). Additional support was provided by Vietnam National University Ho Chi Minh City (VNU-HCM) under grant number A2025-18-02. The authors also acknowledge Dr.~Dat Do (University of Chicago) for helpful discussions about the dendrogram of mixing measures for mixture models~\citep{do_dendrogram_2024}.

\bibliographystyle{apalike2}
\bibliography{ref}

\section*{Checklist}

\begin{enumerate}

  \item For all models and algorithms presented, check if you include:
  \begin{enumerate}
    \item A clear description of the mathematical setting, assumptions, algorithm, and/or model. [Yes]
    \item An analysis of the properties and complexity (time, space, sample size) of any algorithm. [Yes]
    \item (Optional) Anonymized source code, with specification of all dependencies, including external libraries. [No]
  \end{enumerate}

  \item For any theoretical claim, check if you include:
  \begin{enumerate}
    \item Statements of the full set of assumptions of all theoretical results. [Yes]
    \item Complete proofs of all theoretical results. [Yes]
    \item Clear explanations of any assumptions. [Yes]     
  \end{enumerate}

  \item For all figures and tables that present empirical results, check if you include:
  \begin{enumerate}
    \item The code, data, and instructions needed to reproduce the main experimental results (either in the supplemental material or as a URL). [Yes]
    \item All the training details (e.g., data splits, hyperparameters, how they were chosen). [Yes]
    \item A clear definition of the specific measure or statistics and error bars (e.g., with respect to the random seed after running experiments multiple times). [Yes]
    \item A description of the computing infrastructure used. (e.g., type of GPUs, internal cluster, or cloud provider). [Yes]
  \end{enumerate}

  \item If you are using existing assets (e.g., code, data, models) or curating/releasing new assets, check if you include:
  \begin{enumerate}
    \item Citations of the creator If your work uses existing assets. [Not Applicable]
    \item The license information of the assets, if applicable. [Not Applicable]
    \item New assets either in the supplemental material or as a URL, if applicable. [Not Applicable]
    \item Information about consent from data providers/curators. [Not Applicable]
    \item Discussion of sensible content if applicable, e.g., personally identifiable information or offensive content. [Not Applicable]
  \end{enumerate}

  \item If you used crowdsourcing or conducted research with human subjects, check if you include:
  \begin{enumerate}
    \item The full text of instructions given to participants and screenshots. [Not Applicable]
    \item Descriptions of potential participant risks, with links to Institutional Review Board (IRB) approvals if applicable. [Not Applicable]
    \item The estimated hourly wage paid to participants and the total amount spent on participant compensation. [Not Applicable]
  \end{enumerate}

\end{enumerate}

\clearpage
\appendix
\thispagestyle{empty}

\onecolumn
\aistatstitle{Supplementary Materials for\\ ``Dendrograms of Mixing Measures for Softmax-Gated Gaussian Mixture of Experts: Consistency without Model Sweeps"}

\paragraph{Supplementary Organization.}
This supplement has six parts. \emph{First}, \cref{appendix_real_data_details} provides additional biological background and preprocessing details for the maize drought-response dataset used in the main-paper illustration. \emph{Second}, \cref{app:overview_moe} gives a unified overview of unconditional mixtures, MoE, and SGMoE, highlighting their geometric and statistical differences and clarifying the motivation for the dendrogram framework. \emph{Third}, \cref{appendix_Voronoi} illustrates the Voronoi cells and the merge step underlying the SGMoE aggregation path. \emph{Fourth}, \cref{sec:appendix:challenges} details the main technical challenges, namely softmax translation invariance, gate--expert PDE couplings, and algebraic cancellations; it also discusses in more detail the connection between the polynomial equations in \cref{eq_system_of_polynomial_recall} and the over-specified SGMoE setting. \emph{Fifth}, \cref{sec:appendix:proof-sketches} expands the proof sketches for \cref{lem:monotone_path,thm:path_rates,thm:heights,thm:likelihood_path}, highlighting how the Voronoi geometry drives the analysis. \emph{Finally}, \cref{sec:appendix:proofs-main} presents the full proofs, together with the main proof ingredients and notational conventions used throughout the appendix.

\section{ADDITIONAL DETAILS ON THE MAIZE DROUGHT-RESPONSE DATA}
\label{appendix_real_data_details}

This appendix provides additional biological context and a more explicit description of the data-processing pipeline used for the real-data illustration in the main paper. The goal is to clarify the provenance of the dataset, the meaning of the response and predictor variables, and the rationale for the preprocessing choices, while avoiding repetition of the main-text discussion.

\paragraph{Biological motivation and data provenance.}
The dataset comes from a broader systems-genetics effort on dent maize aimed at linking molecular variation in the leaf proteome to drought-related ecophysiological traits. In that line of work, a genetically diverse maize panel was grown under contrasting watering conditions and characterised using both high-throughput phenotyping and proteomics, with the broader objective of understanding how genotype-dependent molecular responses are related to drought adaptation and plant water-use behaviour \citep{prado2018phenomics,blein2020systems}. The statistical prediction study of \citet{10.1093/jrsssc/qlae012} used these biological measurements as a benchmark for multivariate trait prediction from high-dimensional proteomic covariates, specifically considering drought-related traits measured on a panel of 233 maize genotypes with 973 protein predictors \citep{10.1093/jrsssc/qlae012}. Systems-genetics reports associated with the same experimental programme also emphasise that the maize data were designed to study drought-related traits by integrating proteomic and genomic information \citep{blein2020systems}. 

\paragraph{Why this dataset is relevant here.}
This dataset is well suited to our SGMoE framework for three reasons. First, the sample is biologically heterogeneous: the maize panel spans substantial genetic diversity, so one should not expect all genotypes to follow a single homogeneous regression relationship. Second, drought response is known to be multi-mechanistic, with different molecular programmes potentially associated with different water-use strategies or stress-response profiles. Third, the predictor space is high-dimensional relative to the sample size, which makes model structure and interpretability especially important. These features make the dataset a natural test bed for a method that combines flexible conditional modelling with hierarchical aggregation and model selection. The resulting fitted components can then be interpreted as latent subgroups of genotypes sharing similar proteomic-to-phenotypic relationships rather than as merely algorithmic clusters.

\paragraph{Raw variables used in the illustration.}
Following \citet{10.1093/jrsssc/qlae012}, we work with a cleaned subset of the original experiment after removing observations with missing values. The final analysis set contains $N=233$ maize genotypes. The biological study recorded two drought-related ecophysiological outputs together with quantitative protein abundances measured under water-deficit conditions, yielding a predictor matrix with 973 protein variables before dimension reduction. In the present illustration, we focus primarily on the ecophysiological trait \emph{water use} (WU), while the predictors are the leaf protein abundances measured under the water-deficit regime. This choice is scientifically meaningful because WU is directly linked to drought adaptation and integrates the cumulative effect of genotype-specific physiological regulation under stress.

\paragraph{Preprocessing strategy.}
The preprocessing follows the protocol used for the statistical benchmark in \citet{10.1093/jrsssc/qlae012}, with the same starting point of a cleaned matrix after exclusion of incomplete observations. Since the original proteomic representation is very high-dimensional compared with the number of genotypes, we apply a supervised screening step before fitting the SGMoE. Concretely, we use a Lasso-based variable-selection procedure to extract a smaller subset of proteins that are most strongly associated with the target trait, and we retain $D=10$ proteins for the analysis shown in the main paper. This reduction serves two purposes. Statistically, it improves stability in the small-$N$, large-$D$ regime and reduces the risk that the fitted experts are driven by noise dimensions. Biologically, it yields a more interpretable model by restricting attention to a compact set of drought-informative protein signals. We stress that this Lasso step is used only as a preprocessing device; the clustering, aggregation path, and model selection are all performed by the SGMoE methodology thereafter.

\paragraph{Model fitting for the SGMoE path.}
After preprocessing, we fit an over-specified SGMoE with $K=20$ initial components. Because mixture models can be sensitive to starting values, we initialise the fit using a preliminary $K$-means partition of the genotypes. These initial groups are then used to seed the gating and expert parameters before running the estimation procedure. The purpose of this intentionally over-specified fit is not to interpret all 20 initial components literally, but rather to create a rich starting representation from which the dendrogram path can merge redundant atoms and reveal a more stable low-dimensional structure. In this sense, the over-specified fit plays the same exploratory role as in the synthetic studies: it allows the subsequent aggregation path to separate persistent large-scale structure from small within-cell duplications. 

\paragraph{Interpretation of the fitted path.}
In the main-text illustration, the fitted dendrogram suggests a pronounced split at level $2$, while the average log-likelihood stabilises quickly along the path. From a biological viewpoint, this pattern is consistent with the idea that the maize panel contains a small number of broad genotype groups with distinct proteomic-response profiles under drought, rather than many sharply separated subpopulations. Thus, the selected two-expert solution should be read as a parsimonious summary of two dominant genotype--phenotype response regimes. The value of the dendrogram is therefore twofold: it provides a data-driven model-selection tool, and it offers a hierarchical view of how more complex over-specified representations collapse into a small number of biologically interpretable regimes. 

\paragraph{Why the real-data example is informative for our methodology.}
Unlike the synthetic experiments, this dataset does not come with a known ground-truth number of experts. Its role is instead to illustrate the practical behaviour of the pathwise procedure on a genuinely heterogeneous biological problem. In particular, it shows that the dendrogram can remain informative even when standard information criteria disagree strongly, and that the selected solution can still be interpreted in domain terms through genotype--phenotype structure and early likelihood stabilisation. This complements the theory by demonstrating that the SGMoE aggregation path is not only a technical device for proving rates, but also a practically useful summary of heterogeneity in complex omics-assisted prediction problems.

\section{OVERVIEW OF MIXTURE AND MOE GEOMETRY}
\label{app:overview_moe}

This section provides a unified overview of unconditional mixtures, MoE, and SGMoE, clarifying the geometric and statistical differences that motivate our dendrogram framework.

First, we recall the definitions of unconditional mixtures, MoE, and covariate-free gates.

\begin{itemize}
        \item \textit{Unconditional mixture:}
        \begin{equation*}
            p(y) = \sum_{k=1}^{K} \pi_{k} \times f(y; \veta_{k}).
        \end{equation*}
        \item \textit{MoE:}
        \begin{equation*}
            p(y \mid \vx) = \sum_{k=1}^{K} \pi_{k}(\vx) \times f(y; \veta_{k}(\vx)),
        \end{equation*}
        where both the weights and the experts depend on $\vx$.
        \item \textit{Covariate-free gates:}
        \begin{equation*}
            \pi_{k}(\vx) \equiv \pi_{k},
        \end{equation*}
        which reduces to a \textbf{mixture of regressions} when $\veta_{k}(x)$ is a regression map.
    \end{itemize}

Next, we analyze the difference from existing dendrogram approaches and compare them to Gaussian-gated Gaussian MoE (GGMoE) \citep{thai_model_2025}.

\paragraph{Difference from existing dendrogram approaches.}
Our framework differs from classical dendrogram methods in three key aspects. 
First, we introduce Voronoi-type losses in the gate space that respect softmax symmetry (common translations). 
Second, our method is tailored to conditional SGMoE geometry and provides finite-sample predictive parameter rates, building on \cite{nguyen_demystifying_2023}. Third, earlier dendrogram approaches were developed for \textbf{unconditional} finite mixtures \citep{do_dendrogram_2024} and rely on standard Wasserstein-type losses; they are not tailored to the \textbf{conditional} geometry and softmax-induced couplings of SGMoE. We introduce a \textbf{fast-rate-aware Voronoi loss} $\VDFRA$ that (i) reduces to the exact-fit loss when cells are singletons and (ii) adds merged-moment block sums precisely in the slow directions created by Voronoi multi-coverage. 
This is motivated by the insufficiency of Wasserstein for SGMoE parameter geometry (and even the limitations of early Voronoi losses in \cite{nguyen_demystifying_2023}) and is spelled out in our appendix overview and the formal $\VDFRA$/merge analysis.

\paragraph{Compare to GGMoE.}
GGMoE is a \textbf{generative} MoE that models covariates and gates via Bayes’ rule, enabling closed-form EM M-steps but \textbf{not matching modern deep MoE practice}. Our framework aligns with contemporary \textbf{discriminative softmax/top-$k$} gating \textbf{learned end-to-end} over features: we \textbf{directly} optimize the conditional density $p(y \mid \vx)$ \textbf{without} a generative model for $\vx$ \citep{dai-etal-2024-deepseekmoe,nguyen2024statisticalperspectivetopksparse,Shazeer_JMLR,pham_competesmoeeffective_2024,do_hyperrouter_2023}. This conditional focus aligns with predictive use but is \textbf{analytically harder}: softmax gating introduces a tight numerator–denominator coupling and \textbf{nontrivial gate–expert interactions} that do not arise in GGMoE’s EM updates. Beyond this objective mismatch, we contribute \textbf{Voronoi-type losses} aligned with the \textbf{gate-induced partition} and establish \textbf{finite-sample MLE convergence rates} for SGMoE in both \textbf{exact-fit} and \textbf{over-specified} regimes, addressing the conditional SGMoE geometry directly rather than relying on a generative model for $x$. Empirically, beyond synthetic studies, we analyze a \textbf{maize proteomics} dataset of drought-responsive traits: the \textbf{dendrogram-guided SGMoE path} selects \textbf{two experts}, stabilizes the likelihood early, reveals a clear \textbf{hierarchical structure} in the mixing measure, and yields interpretable \textbf{genotype–phenotype} mappings, \textbf{complementing} GGMoE-centric work whose experiments are primarily synthetic \citep{thai_model_2025}.

\section{ILLUSTRATION OF VORONOI CELLS AND MERGE STEPS FOR SGMoE}\label{appendix_Voronoi}

For a candidate mixing measure
\(G=\sum_{k=1}^{K}\exp(\omega_{0k})\,\delta_{(\vomega_{1k},\va_k,b_k,\sigma_k)}\)
and the true
\(G_0=\sum_{k=1}^{K_0}\exp(\omega^0_{0k})\,\delta_{(\vomega^0_{1k},\va^0_k,b^0_k,\sigma^0_k)}\),
define, for \(k\in[K_0]\), the (parameter-space) Voronoi cell
\begin{equation}
\sA_k(G):=\big\{\ell\in[K]:\ \|\vtheta_\ell-\vtheta^0_k\|
\le \|\vtheta_\ell-\vtheta^0_j\|,\ \forall\, j\neq k\big\},
\label{eq_voronoi_cell_app}
\end{equation}
where \(\vtheta_\ell:=(\vomega_{1\ell},\va_\ell,b_\ell,\sigma_\ell)\).
We use the softmax translation \((t_0,\vt_1)\) from identifiability
(cf. Proposition~1 of \citealp{nguyen_demystifying_2023}) and the shorthand
\(\Delta_{\vt_1}\vomega_{1\ell k}:=\vomega_{1\ell}-\vomega^0_{1k}-\vt_1\),
\(\Delta\va_{\ell k}:=\va_\ell-\va^0_k\),
\(\Delta b_{\ell k}:=b_\ell-b^0_k\),
\(\Delta\sigma_{\ell k}:=\sigma_\ell-\sigma^0_k\).
For brevity we write \(\sA_k\) for \(\sA_k(G)\).
(We restate \cref{eq_voronoi_cell_app} only for completeness; throughout we reference the main-paper definition \cref{eq_voronoi_cell}.)

\textbf{Explanation.}
\cref{fig_voronoi_merge_combo} summarizes the geometry and the merge step used by our method for an example with \(K_0=6\) and \(K=10\): red squares denote true atoms of \(G_0\), blue circles denote fitted atoms of \(G\).
Each Voronoi cell is generated by one true atom, and its cardinality \(|\sA_k|\) equals the number of fitted atoms assigned to that true atom (e.g., two circles in a cell imply \(|\sA_k|=2\)).
Panel~\cref{fig_voronoi_cells} shows the Voronoi partition \(\{\sA_k\}_{k\in[K_0]}\) induced by \(G_0\) as in \cref{eq_voronoi_cell}.
Cells with \(|\sA_k|>1\) reveal redundancy: multiple fitted atoms approximate the same truth and create slow directions.
Panel~\cref{fig_merge_schematic_only} zooms into one such multi-covered cell and depicts the merge step at a visual level:
the closest pair (w.r.t.\ our rate-weighted dissimilarity) is merged into a single aggregate; iterating this operation produces the aggregation path.
Panel~\cref{fig_merge_equations_box} links the visuals to the mathematics: labels “fitted i,” “fitted j,” and “merged *” correspond to
\(\exp(\omega_{0i})\delta_{\vtheta_i}\), \(\exp(\omega_{0j})\delta_{\vtheta_j}\), and \(\exp(\omega_{0*})\delta_{\vtheta_*}\).
Pair selection uses \(\divclus\) from \cref{eq_dissim_sgmoe}, and the softmax-weighted update rules are given in \cref{eq_merging_SGMoE_models}.
Together, these steps collapse slow directions within a cell, strengthen the loss along the path \(\VDFRA\) (\cref{eq_def_new_voronoi_D}), and enable our fast pathwise guarantees and sweep-free model selection via DSC.

\begin{figure*}[!ht]
  \centering
  \begin{subfigure}[b]{0.48\textwidth}
    \centering
    \includegraphics[width=\linewidth]{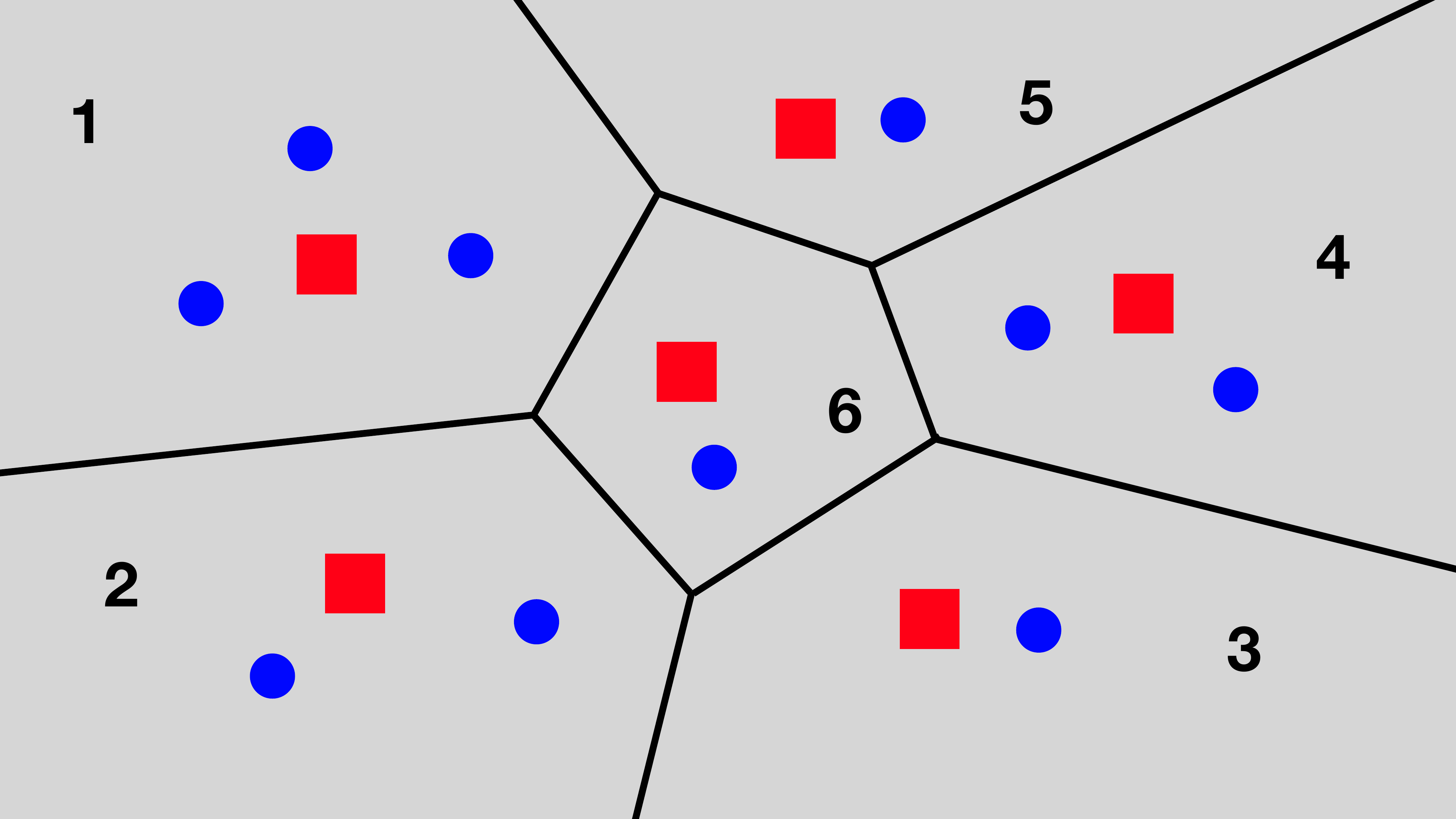}
    \caption{Voronoi cells \(\{\sA_k\}_{k\in[K_0]}, K_0 = 6, K = 10\) induced by \(G_0\) as in \cref{eq_voronoi_cell}. 
    Red squares are true atoms \(\{\vtheta_k^0\}\). Blue circles are fitted atoms \(\{\vtheta_\ell\}\). 
    The cardinality \(|\sA_k|\) equals the number of fitted atoms approximating the true atom in that cell.}
    \label{fig_voronoi_cells}
  \end{subfigure}
  \hfill
  \begin{subfigure}[b]{0.48\textwidth}
  \centering
  \tikzset{
    cell/.style       ={rounded corners, thick, draw=gray!45},
    trueatom/.style   ={rectangle, draw=red!70!black, fill=red!70, minimum size=6pt, inner sep=0pt},
    fittedatom/.style ={circle, draw=blue!65!black, fill=blue!60, minimum size=7pt, inner sep=0pt},
    mergedatom/.style ={circle, draw=purple!60!black, fill=purple!60, minimum size=8pt, inner sep=0pt},
    flow/.style       ={->, thick},
    weak/.style       ={dash pattern=on 2pt off 2pt, draw=gray!70},
    sel/.style        ={<->, very thick, draw=orange!80!black},
    lbl/.style        ={font=\scriptsize, inner sep=1pt}
  }
  \begin{tikzpicture}[>=latex,scale=1]
    \draw[cell] (-4.5,-2) rectangle (4.5,2);

    \node[trueatom, label={[lbl]above:{true}}] (true) at (0,0.95) {};

    \node[fittedatom, label={[lbl]below:{fitted i}}] (i) at (-1.30,-0.10) {};
    \node[fittedatom, label={[lbl]below:{fitted j}}] (j) at ( 1.30,-0.20) {};
    \node[fittedatom, label={[lbl]right:{fitted p}}] (p) at (-2.10, 0.55) {};
    \node[fittedatom, label={[lbl]left :{fitted q}}] (q) at ( 2.10, 0.55) {};

    \draw[sel] (i) -- node[below, lbl]{most similar} (j);

    \draw[weak] (i) -- (p);
    \draw[weak] (j) -- (q);

    \node[mergedatom, label={[lbl]right:{merged *}}] (star) at (0,0.35) {};
    \draw[flow] (i) -- (star);
    \draw[flow] (j) -- (star);
    \draw[flow] (star) -- (true);

    \begin{scope}[shift={(0,-1.35)}]
      \draw[rounded corners, draw=gray!60, fill=gray!5] (-2.9,-0.55) rectangle (2.9,0.65);
      \node[font=\scriptsize\bfseries] at (0,0.48) {Legend};

      \node[trueatom]   (Ltrue) at (-2.3,0.15) {};
      \node[lbl, anchor=west] at (-2.05,0.15) {true atom};

      \node[fittedatom] (Lfitt) at (-0.3,0.15) {};
      \node[lbl, anchor=west]  at (-0.05,0.15) {fitted atom};

      \node[mergedatom] (Lmerg) at (1.7,0.15) {};
      \node[lbl, anchor=west]  at (1.95,0.15) {merged};

      \draw[sel] (-1.2,-0.25) -- (-0.2,-0.25);
      \node[lbl, anchor=west] at (-0.05,-0.28) {pair selected to merge};
    \end{scope}
  \end{tikzpicture}
  \caption{Visual merge in a \textbf{multi-covered cell} \(|\sA_k|>1\). Among four fitted atoms, the closest pair (i, j) by a dissimilarity is merged first; repeating yields the aggregation path.}
  \label{fig_merge_schematic_only}
  \end{subfigure}

  \begin{subfigure}[b]{0.8\textwidth}
  \centering
  \fbox{%
    \parbox{0.92\linewidth}{\small
    \vspace{2pt}
    \textbf{Math key and merge equations.}  
    Visual labels \emph{i}, \emph{j}, and * correspond to
    \[
      \text{fitted i: } \exp(\omega_{0i})\delta_{\vtheta_i},\qquad
      \text{fitted j: } \exp(\omega_{0j})\delta_{\vtheta_j},\qquad
      \text{merged *: } \exp(\omega_{0*})\delta_{\vtheta_*}.
    \]
    Pair selection uses the rate-weighted dissimilarity \(\divclus\) in \cref{eq_dissim_sgmoe}.  
    The softmax-weighted merge (\cref{eq_merging_SGMoE_models}) is
    \[
      \omega_{0*}=\log\!\big(e^{\omega_{0i}}+e^{\omega_{0j}}\big),\quad
      \alpha_i=\frac{e^{\omega_{0i}}}{e^{\omega_{0i}}+e^{\omega_{0j}}},\;
      \alpha_j=\frac{e^{\omega_{0j}}}{e^{\omega_{0i}}+e^{\omega_{0j}}},
    \]
    \[
      \vomega_{1*}=\alpha_i\,\vomega_{1i}+\alpha_j\,\vomega_{1j},\qquad
      b_*=\alpha_i\,b_i+\alpha_j\,b_j,
    \]
    \[
      \va_*=\alpha_i\!\big[(\vomega_{1i}-\vomega_{1*})(b_i-b_*)+\va_i\big]
           +\alpha_j\!\big[(\vomega_{1j}-\vomega_{1*})(b_j-b_*)+\va_j\big],
    \]
    \[
      \sigma_*=\alpha_i\!\big[(b_i-b_*)^2+\sigma_i\big]
              +\alpha_j\!\big[(b_j-b_*)^2+\sigma_j\big].
    \]
    \vspace{1pt}
    }%
  }
  \caption{Mathematical notation and closed-form merge in \cref{subsec_merge_operator}.}
  \label{fig_merge_equations_box}
\end{subfigure}

\caption{Voronoi geometry and merge step for SGMoE. Multi-covered cells \(|\sA_k|>1\) signal redundant fitted atoms. The merge operator collapses them to a single aggregate that aligns with the true atom and improves the rate as formalized by our pathwise guarantees.}
\label{fig_voronoi_merge_combo}
\end{figure*}

\section{THEORETICAL CHALLENGES: MORE DETAILS}\label{sec:appendix:challenges}
The geometric picture above motivates the analytic tools below. We now detail three fundamental challenges in the statistical analysis of SGMoE that create substantial obstacles for parameter estimation and model selection:

(i) \emph{Softmax translation invariance.} Gating parameters are identifiable only up to common translations. Unlike covariate-independent gating functions, the softmax gate is invariant under simultaneous shifts of intercepts and slopes, which makes the parameterization non-unique. As a result, standard identifiability arguments break down, and it becomes necessary to design translation-invariant loss functions. We address this by introducing the Voronoi partition and loss (see \cref{eq_def_new_voronoi_D}), which takes an infimum over translations and thereby aligns the loss with the geometry of gating partitions.

(ii) \emph{Gate-expert PDE couplings.} The likelihood function exhibits intrinsic gate-expert interactions that induce coupled differential relations among parameters. These relations lead to numerous linear dependencies among derivative terms in Taylor expansions, which prevents a direct decomposition of density discrepancies \(p_{\widehat{G}_N} (y|\vx) - p_{G_0} (y|\vx)\) into independent components. Moreover, the parameters of the softmax gating numerators and the Gaussian experts are intrinsically linked through explicit PDEs,
\begin{equation}
    \frac{\partial^2 u}{\partial\vomega_1 \partial b} = \frac{\partial u}{\partial \va}, \qquad \frac{\partial^2 u}{\partial b^2} = 2 \frac{\partial u}{\partial \sigma}, \label{eq_pde}
\end{equation}
where \(u (y | \vx; \vomega, \va, b, \sigma) := \exp (\vomega_1^{\top} \vx)\,\cN (y| \va^{\top} \vx + b, \sigma)\).
Our analysis requires a systematic reorganization of these dependent terms to recover a meaningful set of independent directions.

(iii) \emph{Algebraic cancellations.} Due to the tight coupling between numerators and denominators in the softmax-induced conditional density, higher-order cancellations in the expansions give rise to systems of polynomial equations introduced in \cref{eq_system_of_polynomial_recall}. The solvability of these systems determines the order of the first non-vanishing terms and directly controls the convergence rates of the MLE in over-specified models. This algebraic obstruction is a key source of non-standard, slower rates unique to SGMoE.

These challenges indicate that previously used loss functions, such as the Wasserstein distance, are insufficient for analyzing parameter quantities in either standard mixture models or mixtures with covariate-free gating functions. Moreover, the convergence rates of parameter estimates, as reported in \cite{nguyen_demystifying_2023}, remain relatively slow due to the influence of the associated polynomial systems. Therefore, developing a dedicated method or algorithm, such as our DSC approach in \cref{sec_rate_aware_sgmoe}, for models of this type is well motivated.

In addition, we also clarify the relationship between polynomial equations in \cref{eq_system_of_polynomial_recall} and SGMoE in the over - specified case. Following \cref{thm:path_rates}, at $\kappa = K$, we can see that the convergence rate of Fast-Rate-Aware Voronoi distance $\VDFRA$ is $$\VDFRA (\widehat{G}_N, G_0) \lesssim \left( \frac{\log N}{N} \right)^{1/2},$$ this is an "optimal" rate for a mixing measure. However, the convergence rate of some parameters such as $\vomega_1, \va, b, \sigma$ are not $\left( \frac{\log N}{N} \right)^{1/2}$ (following the definition of $\VDFRA$). In particular, in the over-specified case, respectively $|\sA_k| \geq 2$, then the associated parameters suffer slower rates of the order $N^{-1/(2\bar r(|\sA_k|))}$ or $N^{-1/\bar r(|\sA_k|)}$ (see \cref{table_parameter_rates}). 

To explain this connection, we revisit our proof for over-specified case. Firstly, we want to show that $\E_{\rvx}\left[V\left(p_{G}(\cdot \mid \vx), p_{G_{0}}(\cdot \mid \vx)\right)\right] \gtrsim \VDFRA\left(G, G_{0}\right)$ because we can see that if we obtain this argument, we will get the "optimal" convergence rate of $\VDFRA$. We can rewrite the quantity $Q_N$ as follows:
\begin{align*}
Q_{N} & =\sum_{k=1}^{K_0} \sum_{\ell \in \sA_k} \exp \left(\omega_{0 \ell}^N\right) \bigg[u\left(y|\vx ; \vomega_{1 \ell}^N, \va_{\ell}^N, b_{\ell}^N, \sigma_{\ell}^N\right)-u\left(y|\vx ; \vomega_{1k}^{0}, \va_{k}^{0}, b_{k}^{0}, \sigma_{k}^{0}\right)-v\left(y|\vx ; \vomega_{1 \ell}^N\right) \\
&\quad +v\left(y|\vx ; \vomega_{1k}^{0}\right)\bigg]+\sum_{k=1}^{K_0}\left(\sum_{\ell \in \sA_k} \exp \left(\omega_{0 \ell}^N\right)-\exp \left(\omega_{0k}^{0}\right)\right)\left[u\left(y|\vx ; \omega_{0k}^{0}, \va_{k}^{0}, b_{k}^{0}, \sigma_{k}^{0}\right)-v\left(y|\vx ; \vomega_{1k}^{0}\right)\right],
\end{align*}

where we define $u\left(y|\vx ; \vomega_{1}, \va, b, \sigma\right):=\exp \left(\vomega_{1}^{\top} \vx\right) \cN\left(y| \va^{\top} \vx+b, \sigma\right)$ and $v\left(y|\vx ; \vomega_{1}\right):= \exp \left(\vomega_{1}^{\top} \vx\right) p_{G_{N}}(y|\vx)$. Next, for each $k \in\left[K_0\right]$ and $\ell \in \sA_k$, we denote $h_{1}\left(\vx, \va_{k}^{0}, b_{k}^{0}\right):=\left(\va_{k}^{0}\right)^{\top} \vx+b_{k}^{0}$ and then apply the Taylor expansions to the functions $u\left(y|\vx ; \vomega_{1 \ell}^N, \va_{\ell}^N, b_{\ell}^N, \sigma_{\ell}^N\right)$ and $v\left(y|\vx ; \vomega_{1 \ell}^N\right)$ up to orders $r_{1 k}$ and $r_{2 k}$ (which we will choose later), respectively, as follows:
\begin{align*}
& u\left(y|\vx ; \vomega_{1 \ell}^N, \va_{\ell}^N, b_{\ell}^N, \sigma_{\ell}^N\right)-u\left(y|\vx ; \vomega_{1k}^{0}, \va_{k}^{0}, b_{k}^{0}, \sigma_{k}^{0}\right) \\
& =\sum_{\left|\ell_{1}\right|+\ell_{2}=1}^{2 r_{1 k}} T_{\ell_{1}, \ell_{2}}^N(k) \vx^{\ell_{1}} \exp \left((\vomega_{1k}^{0})^{\top} \vx\right) \frac{\partial^{\ell_{2}} \cN}{\partial h_{1}^{\ell_{2}}}\left(y|\left(\va_{k}^{0}\right)^{\top} \vx+b_{k}^{0}, \sigma_{k}^{0}\right)+R_{1 \ell k}(\vx, y), \\
& v\left(y|\vx ; \vomega_{1 \ell}^N\right)-v\left(y|\vx ; \vomega_{1k}^{0}\right)=\sum_{|\gamma|=1}^{r_{2 k}} S_{\gamma}^N(k) \vx^{\gamma} \exp \left(\left(\vomega_{1k}^{0}\right)^{\top} \vx\right) p_{G_{N}}(y|\vx)+R_{2 \ell k}(\vx, y),
\end{align*}

where $R_{1 \ell k}(\vx, y)$ and $R_{2 \ell k}(\vx, y)$ are Taylor remainders such that $R_{\rho \ell k}(\vx, y) / \VDFRA\left(G_{N}, G_{0}\right)$ vanishes as $N \rightarrow \infty$ for $\rho \in\{1,2\}$. As a result, the limit of $Q_{N} / \VDFRA\left(G_{N}, G_{0}\right)$ when $n$ goes to infinity can be seen as a linear combination of elements of the following set:

$$
\begin{aligned}
\mathcal{W} & :=\left\{\vx^{\ell_{1}} \exp \left(\left(\vomega_{1k}^{0}\right)^{\top} \vx\right) \frac{\partial^{\ell_{2}} \cN}{\partial h_{1}^{\ell_{2}}}\left(y|\left(\va_{k}^{0}\right)^{\top} \vx+b_{k}^{0}, \sigma_{k}^{0}\right): k \in\left[K_0\right], 0 \leq 2\left|\ell_{1}\right|+\ell_{2} \leq 2 r_{1 k}\right\} \\
& \cup\left\{\vx^{\gamma} \exp \left((\vomega_{1k}^{0})^{\top} \vx\right) p_{G_{0}}(y|\vx): k \in\left[K_0\right], 0 \leq|\gamma| \leq r_{2 k}\right\}
\end{aligned}
$$

which is shown to be linearly independent. By the Fatou's lemma, we demonstrate that $Q_{N} / \VDFRA\left(G_{N}, G_{0}\right)$ goes to zero as $N \rightarrow \infty$, implying that all the coefficients in the representation of $Q_{N} / \VDFRA\left(G_{N}, G_{0}\right)$, denoted by $T_{\ell_{1}, \ell_{2}}^N(k) / \VDFRA\left(G_{N}, G_{0}\right)$ and $S_{\gamma}^N(k) / \VDFRA\left(G_{N}, G_{0}\right)$, vanish when $N \rightarrow \infty$. Given that result, we aim to select the Taylor orders $r_{1 k}$ and $r_{2 k}$ such that at least one among the limits of $T_{\ell_{1}, \ell_{2}}^N(k) / \VDFRA\left(G_{N}, G_{0}\right)$ and $S_{\gamma}^N(k) / \VDFRA\left(G_{N}, G_{0}\right)$ is different from zero, which leads to a contradiction. In the over-specified case, we assume that all the limits of $T_{\ell_{1}, \ell_{2}}^N(k) / \VDFRA\left(G_{N}, G_{0}\right)$ and $S_{\gamma}^N(k) / \VDFRA\left(G_{N}, G_{0}\right)$ equal zero. After some steps of considering typical limits as in the previous setting which requires $r_{2 k}=2$ for all $k \in\left[K_{0}\right]$, we encounter the following system of polynomial equations:

$$
\sum_{\ell \in \sA_{k}} \sum_{\left(\valpha_{1}, \valpha_{2}, \alpha_{3}, \alpha_{4}\right) \in \sI_{\ell_{1}, \ell_{2}}} \frac{p_{5 \ell}^{2} p_{1 \ell}^{\valpha_{1}} p_{2 \ell}^{\valpha_{2}} p_{3 \ell}^{\alpha_{3}} p_{4 \ell}^{\alpha_{4}}}{\valpha_{1}!\valpha_{2}!\alpha_{3}!\alpha_{4}!}=0
$$

for all $\left(\ell_{1}, \ell_{2}\right) \in \mathbb{N}^{D} \times \mathbb{N}$ such that $0 \leq\left|\ell_{1}\right| \leq r_{1 k}, 0 \leq \ell_{2} \leq r_{1 k}-\left|\ell_{1}\right|$ and $\left|\ell_{1}\right|+\ell_{2} \geq 1$ for some $k \in\left[K_{0}\right]$. Due to the construction of this system, it must have at least one non-trivial solution. Therefore, we choose $r_{1 k} = \bar{r} (|\sA_k|)$ for all $k \in [K_0]$.

To discuss about the value of $\bar{r} (M)$ with $M \ge 2$ in general, by \cref{lemma:value_r_unified}, we obtain $\bar{r} (M) = 2M$ for $M = 2, 3$ and as $M$ increases, so does $\bar{r} (M)$. Hence, we predict that $\bar{r} (M) = 2M$. With this conjecture, we can see that the slow convergence rate of parameter estimation of SGMoE before we apply the merging atoms process.

\section{PROOF SKETCHES}\label{sec:appendix:proof-sketches}

In this section we expand the sketches for
\cref{lem:monotone_path}, \cref{thm:path_rates,thm:heights,thm:likelihood_path}.

\paragraph{Why the \texorpdfstring{\(\VDFRA\)}{VDFRA} loss in \cref{eq_def_new_voronoi_D}?}
When \(\widehat G_N \to G_0\) with \(K > K_0\), some Voronoi cells \(\sA_k\) are multi-covered. The slow directions in \(\VDO\) (with exponents \(\bar r(|\sA_k|)\)) arise from these cells. \(\VDFRA\) augments \(\VDO\) with first-order \emph{merged-moment} block-sums that vanish when a cell behaves as a single aggregate. Thus \(\VDFRA\) is simultaneously (i) exact-fit consistent, it reduces to \(\VDE\) when \(|\sA_k|=1\), and (ii) overfit-aware, penalizing precisely the slow directions that merging removes. In the over-specified case, cells with \(|\sA_k|>1\) may persist; repeatedly merging atoms within such cells yields singletons and restores first-order behavior. Formally, using the density decomposition
\[
Q_{N}=\Big[\sum_{k=1}^{K_{0}} \exp \big((\vomega_{1 k}^{0}+\vt_1)^{\top} \vx+\omega_{0 k}^{0}+t_{0}\big)\Big]\cdot\big[p_{G_{N}}(y|\vx)-p_{G_{0}}(y|\vx)\big],
\]
we analyze the sums over indices with \(|\sA_k|>1\) under \(1 \le |\vell_1|+\ell_2 \le 2\bar r(|\sA_k|)\); for clarity, we also isolate the case \(1 \le |\vell_1|+\ell_2 \le 2\), corresponding to \(|\sA_k|=1\). This leads directly to the merge operator and the aggregation path.

\subsection{Proof Sketch of Lemma~\ref{lem:monotone_path}}

We argue for the first merge \(G^{(K)}\to G^{(K-1)}\); the rest follows by induction.
Assume \(\VDFRA(G^{(K)},G_0)\to0\).
Then, for the Voronoi partition \(\{\sA_k\}\), there exist \((t_0,\vt_1)\) such that, for every \(k\),
\[
\sum_{\ell\in\sA_k}\exp(\omega_{0\ell})\to \exp(\omega_{0k}^0+t_0),\qquad
(\vomega_{1\ell},\va_\ell,b_\ell,\sigma_\ell)\to (\vomega_{1k}^0+\vt_1,\va_k^0,b_k^0,\sigma_k^0).
\]
The minimizing pair \((i,j)\) of \(\divclus\) must lie in the same cell \(\sA_k\).
Let the merged atom be \(\exp(\omega_{0*})\delta_{(\vomega_{1*},\va_*,b_*,\sigma_*)}\) as in \cref{eq_merging_SGMoE_models}.
Using the convexity of \(z\mapsto\|z\|^{m}\) for \(m\in\{\bar r(|\sA_k|),\bar r(|\sA_k|)/2\}\) and the identities implicit in \cref{eq_merging_SGMoE_models}, we obtain the two key comparisons
\begin{align*}
(\exp\omega_{0i}+\exp\omega_{0j})\|(\Delta_{\vt_1}\vomega_{1*k},\Delta b_{*k})\|^{\bar r(|\sA_k|)}
&\;\lesssim\; \sum_{t\in\{i,j\}}\exp\omega_{0t}\|(\Delta_{\vt_1}\vomega_{1tk},\Delta b_{tk})\|^{\bar r(|\sA_k|)},\\
(\exp\omega_{0i}+\exp\omega_{0j})\|(\Delta \va_{*k},\Delta \sigma_{*k})\|^{\bar r(|\sA_k|)/2}
&\;\lesssim\; \sum_{t\in\{i,j\}}\exp\omega_{0t}\|(\Delta \va_{tk},\Delta \sigma_{tk})\|^{\bar r(|\sA_k|)/2}.
\end{align*}
The block-sum terms in \(\VDFRA\) also decrease since the merged parameters are softmax-weighted averages.
Collecting terms yields
\(\VDFRA(G^{(K)},G_0)\gtrsim \VDFRA(G^{(K-1)},G_0)\), proving monotonicity.

\subsection{Proof Sketch of Theorem~\ref{thm:path_rates}}

\paragraph{(A) Inverse bound.}
We first prove an inverse inequality:
there exists \(C>0\) depending only on \(G_0\) and \(\vTheta\) such that, for any \(G\in\cO_K(\vTheta)\),
\begin{equation}\label{eq:inverse_bound_vdfra}
\bbE_\rvx\big[\TV(p_G(\cdot|\vx),p_{G_0}(\cdot|\vx))\big]\;\ge\; C\,\VDFRA(G,G_0).
\end{equation}
The proof follows the \emph{density decomposition} strategy in \citet{nguyen_demystifying_2023} but keeps all merged-moment block-sums that define \(\VDFRA\).
Let
\[
Q_N(\vx,y)=\Big[\textstyle\sum_{k=1}^{K_0}\exp\big((\vomega^0_{1k}+\vt_1)^\top \vx+\omega^0_{0k}+t_0\big)\Big]\cdot\big[p_G(y|\vx)-p_{G_0}(y|\vx)\big].
\]
A multi-index Taylor expansion (around \((\vomega_{1k}^0+\vt_1,\va_k^0,b_k^0,\sigma_k^0)\) within each cell \(\sA_k\)) up to order \(\bar r(|\sA_k|)\), together with the PDE identities
\(
\partial^2 u / \partial\vomega_1\partial b=\partial u/\partial\va\) and
\(\partial^2 u/\partial b^2 = 2\,\partial u/\partial\sigma
\),
rewrites \(Q_N\) as a linear combination of basis functions
\[
\vx^{\vell_1}\exp\!\big((\vomega_{1k}^0+\vt_1)^\top\vx\big)\,\frac{\partial^{\ell_2}}{\partial h_1^{\ell_2}}
\cN\!\big(y| \va_k^{0\top}\vx+b_k^0,\sigma_k^0\big),
\quad 1\le |\vell_1|+\ell_2\le2\bar r(|\sA_k|),
\]
with coefficients that are precisely the atomwise sums appearing in \(\VDFRA\) (up to constants).
If \cref{eq:inverse_bound_vdfra} failed, all these coefficients would have to vanish at a rate faster than \(\VDFRA(G,G_0)\), forcing a non-trivial solution to the polynomial system of \cref{eq_system_of_polynomial_recall}, in contradiction with the definition of \(\bar r(\cdot)\)
(\cref{lemma:value_r_unified}).
This yields \cref{eq:inverse_bound_vdfra}.

\paragraph{(B) Applying density rates.}
By Proposition~2 of \citet{nguyen_demystifying_2023},
\(
\bbE_\rvx[\hels(p_{\widehat G_N}(\cdot|\vx),p_{G_0}(\cdot|\vx))]
=\cO_\sP((\log N/N)^{1/2}).
\)
Using the inequality \(\TV\le \sqrt{2}\, \hels^{1/2}\) and \cref{eq:inverse_bound_vdfra} with \(G=\widehat G_N\), we obtain
\[
\VDFRA(\widehat G_N,G_0)=\cO_\sP\big((\log N/N)^{1/2}\big).
\]
Now apply Lemma~\ref{lem:monotone_path} along the aggregation path:
for every \(\kappa\in[K_0+1,K]\),
\[
\VDFRA(\widehat G_N^{(\kappa)},G_0)\;\lesssim\;\VDFRA(\widehat G_N,G_0)
=\cO_\sP\!\big((\log N/N)^{1/2}\big).
\]
For the exact-fit and under-fit levels \(\kappa'\le K_0\),
\(\VDFRA=\VDE\) by definition, which gives the second claim.

\subsection{Proof Sketch of Theorem~\ref{thm:heights}}

For \(\kappa\in[K_0+1,K]\), the height \(\height_N^{(\kappa)}\) is the minimum \(\divclus\)-distance between any two atoms of \(\widehat G_N^{(\kappa)}\).
Inside a multi-covered cell \(\sA_k(\widehat G_N)\), the Taylor/merged-moment analysis from the proof of \cref{eq:inverse_bound_vdfra} implies that
\[
\divclus\!\big(\exp(\widehat\omega_{0i})\delta_{\widehat\vtheta_i},
\exp(\widehat\omega_{0j})\delta_{\widehat\vtheta_j}\big)\;\lesssim\;
\big\|(\Delta_{\vt_1}\widehat\vomega_{1ik},\Delta \widehat b_{ik})\big\|^2
+\big\|(\Delta \widehat\va_{ik},\Delta \widehat\sigma_{ik})\big\|.
\]
The right-hand side is controlled by \(\VDFRA(\widehat G_N^{(\kappa)},G_0)\) with the exponents \(\bar r(|\sA_k|)\), hence
\(
\height_N^{(\kappa)}\lesssim (\log N/N)^{1/\bar r(\widehat G_N)}
\).
For \(\kappa'\le K_0\), heights converge at parametric rate because atoms are separated and \(\VDE(\widehat G_N^{(\kappa')},G_0^{(\kappa')})=\cO_\sP((\log N/N)^{1/2})\).

\subsection{Proof Sketch of Theorem~\ref{thm:likelihood_path}}

Let \(\bar\ell_N(p_G)=N^{-1}\sum_{n=1}^N\log p_G(y_n| \rvx_n)\) and \(\cL(p_G)=\E_{(\rvx,y)\sim P_{G_0}}[\log p_G(y| \vx)]\).
Under Condition~K, a local Lipschitz/curvature argument yields
\[
\big|\bar\ell_N(p_{G})-\cL(p_{G})\big|\;\lesssim\;
\E_{(\rvx,y)\sim P_{G_0}}\big[\TV(p_G(\cdot|\vx),p_{G_0}(\cdot|\vx))\big]
+\text{empirical fluctuation}.
\]
For \(\kappa\ge K_0\), combine the inverse bound
\(\E[\TV]\gtrsim \VDFRA\) with \(\VDFRA(\widehat G_N^{(\kappa)},G_0)=\cO_\sP((\log N/N)^{1/2})\) and standard empirical-process bounds (e.g., \citealp{van_de_geer_empirical_2000}) to obtain
\(
|\bar\ell_N(p_{\widehat G_N^{(\kappa)}})-\cL(p_{G_0})|\lesssim (\log N/N)^{1/(2\bar r(\widehat G_N))}
\).
For \(\kappa'\le K_0\), \(\widehat G_N^{(\kappa')}\) is exact/under-fit and converges at parametric rate, hence
\(\bar\ell_N(p_{\widehat G_N^{(\kappa')}})\to \cL(p_{G_0^{(\kappa')}})\) in probability.

\section{PROOF OF MAIN RESULTS}\label{sec:appendix:proofs-main}

Before proving the main results, we fix notation used throughout this appendix. For any natural number \(N\in\Ns\), write \([N]:=\{1,2,\ldots,N\}\).
Given two sequences of positive real numbers \(\{a_N\}_{N=1}^\infty\) and \(\{b_N\}_{N=1}^\infty\), we write \(a_N=\cO(b_N)\) (equivalently, \(a_N\lesssim b_N\)) to mean that there exists a constant \(C>0\) such that \(a_N\le C\,b_N\) for all \(N\in\Ns\).
For a vector \(\vv\in\R^D\) and any multi-index \(\vp \in \Ns^D\), set \(|\vp|:=p_1+\cdots+p_D\), \(\vv^{\vp} := v_1^{p_1} v_2^{p_2} \cdots v_D^{p_D}\), \(\vp! := p_1! p_2! \cdots p_D!\), and let \(\|\vv\|_{p}\) denote its \(p\)-norm; by default, \(\|\vv\|\) refers to the \(2\)-norm unless otherwise stated.
We also use \(\|\mA\|\) for the Frobenius norm of a matrix \(\mA\in\R^{D\times D}\).
For any set \(\sS\), \(|\sS|\) denotes its cardinality.
For two probability density functions \(p\) and \(q\) with respect to the Lebesgue measure \(\mu\), define \(\TV(p,q) := \tfrac 1 2 \int |p-q|\,\mathrm d\mu\) as their total variation distance, while \(\hels(p,q) := \tfrac 1 2 \int (\sqrt p - \sqrt q)^2\,\mathrm d\mu\) denotes the squared Hellinger distance. Moreover, for \(\vmu \in \R^D\), \(\valpha \in \Ns^D\), and a differentiable function \(f\) of \(\vmu\), we write the partial derivative of order \(|\valpha|\) as
\[
\frac{\partial^{|\valpha|}}{\partial^{\valpha} \vmu} f(\vmu) := \frac{\partial^{|\valpha|}}{\partial \mu_1^{\valpha_1} \partial \mu_2^{\valpha_2} \cdots \partial \mu_D^{\valpha_D}} f(\vmu).
\]

Let \(\vTheta\) be the parameter space.
Write \(\cE_K(\vTheta)\) for the collection of discrete probability measures on \(\vTheta\) with exactly \(K\) atoms, and \(\cO_K(\vTheta):=\bigcup_{K'\le K}\cE_{K'}(\vTheta)\) for those with at most \(K\) atoms.
For a mixing measure \(G=\sum_{k=1}^{K}\pi_k\,\delta_{\vtheta_k}\), we (slightly abusively) refer to each component \(\pi_k\,\delta_{\vtheta_k}\) as an “atom,” comprising both its weight \(\pi_k\) and parameter \(\vtheta_k\). Finally, the domain of parameters in the SGMoE is \(\vTheta\), where \(\veta_{k}^{0}:=(\omega_{0k}^{0}, \vomega_{1k}^{0}, \va_{k}^{0}, b_{k}^{0}, \sigma_{k}^{0})
\in \vTheta \subset \R \times \R^{D} \times \R^{D} \times \R \times \R_{>0}\). Furthermore, assume \(\vTheta\) is compact and \(\cX\subset\R^{D}\), the support of \(\rvx\), is bounded. When clear from context, we drop \(\vTheta\) and simply write \(\cE_K\) and \(\cO_K\).

\subsection{Proof of Lemma~\ref{lem:monotone_path}}

     We prove the inequality $\VDFRA (G^{(K)}, G_0) \gtrsim \VDFRA (G^{(K - 1)}, G_0)$, and the rest are similar.

     Assume that $G_N := G_N^{(K)} = \sum_{k = 1}^{K}  \exp(\omega_{0k}^{N}) \delta_{(\vomega_{1k}^{N}, \va^N_k, b^N_k, \sigma^N_k)} \in \cE_{K}$ varies so that $\VDFRA (G_N, G_0) \to 0$. We consider the Voronoi cells $\sA_k^N := \sA_k(G_N)$, for $k \in [K_0]$, of the mixing measure $G_N$ generated by the true components of $G_0$. Since the argument in this proof is asymptotic, we assume without loss of generality that those Voronoi cells are independent of $N$ for all $N \in \mathbb{N}$, i.e, $\sA_k = \sA^N_k$.
     
     Then, we have $(\va_{\ell}^N, b_{\ell}^N, \sigma_{\ell}^N) \to (\va_k^0, b_k^0, \sigma_k^0)$, and there exist $t_0 \in \mathbb{R}$ and $\vt_1 \in \mathbb{R}^D$ such that $\sum\limits_{\ell \in \sA_k^N} \exp{(\omega_{0 \ell}^N)} \to \exp{(\omega_{0k}^0 + t_0)}$ and $\vomega_{1\ell}^N \to \vomega_{1k}^0 + \vt_1$ for any $\ell \in \sA_k$ and $k \in [K_0]$ as $N$ approaches infinity. 
     
     We are going to show that the merging pair of indices $(\ell_1, \ell_2)$ must belong to a common $\sA_k$. 
Indeed, for every pair $(\ell_1, \ell_2)$ in a common $\sA_k$, 
since $(\va_{\ell_1}^N, b_{\ell_1}^N, \sigma_{\ell_1}^N) \to (\va_k^0, b_k^0, \sigma_k^0)$ and 
$\vomega_{1\ell_1}^N \to \vomega_{1k}^0 + \vt_1$, 
and $(\va_{\ell_2}^N, b_{\ell_2}^N, \sigma_{\ell_2}^N) \to (\va_k^0, b_k^0, \sigma_k^0)$ and 
$\vomega_{1\ell_2}^N \to \vomega_{1k}^0 + \vt_1$, 
we have 
$$
\divclus \left( 
   \exp(\omega_{0\ell_1}^{N}) \delta_{(\vomega_{1\ell_1}^{N}, \va_{\ell_1}^N, b_{\ell_1}^N, \sigma_{\ell_1}^N)}, \,
   \exp(\omega_{0\ell_2}^{N}) \delta_{(\vomega_{1\ell_2}^{N}, \va_{\ell_2}^N, b_{\ell_2}^N, \sigma_{\ell_2}^N)} 
\right) \to 0, 
\quad \text{as $N \to \infty$}.
$$

     On the other hand, for every pair $(\ell, \ell') \in \sA_k \times \sA_{k'}$, where $k \ne k'$, 
because $(\va_{\ell}^N, b_{\ell}^N, \sigma_{\ell}^N) \to (\va_k^0, b_k^0, \sigma_k^0)$ and 
$\vomega_{1\ell}^N \to \vomega_{1k}^0 + \vt_1$, 
and $(\va_{\ell'}^N, b_{\ell'}^N, \sigma_{\ell'}^N) \to (\va_{k'}^0, b_{k'}^0, \sigma_{k'}^0)$ and 
$\vomega_{1\ell'}^N \to \vomega_{1k'}^0 + \vt_1$, 
we have 
$$
\divclus \left( 
   \exp(\omega_{0\ell}^{N}) \delta_{(\vomega_{1\ell}^{N}, \va_{\ell}^N, b_{\ell}^N, \sigma_{\ell}^N)}, \,
   \exp(\omega_{0\ell'}^{N}) \delta_{(\vomega_{1\ell'}^{N}, \va_{\ell'}^N, b_{\ell'}^N, \sigma_{\ell'}^N)} 
\right) \gtrsim 
\norm{(\vomega^0_{1k}, b^0_{k}) - (\vomega^0_{1k'}, b^0_{k'})}^2 
+ \norm{(\va^0_{k}, \sigma^0_{k}) - (\va^0_{k'}, \sigma^0_{k'})},
$$
where the multiplicative constant is not dependent on $N$. 
Hence, the merging pair must belong to a common $\sA_k$.

Next, for any $(t_{0}, \vt_{1}) \in \mathbb{R} \times \mathbb{R}^{D}$ such that $\omega_{0k}^{0} + t_{0}$ and $\vomega_{1k}^{0} + \vt_{1}$ still lie inside the domain of the parameter space $\vTheta$, we define $ \cD (G_N, G_0, t_0, \vt_1)$ as
\begin{align}
    &\cD(G_N, G_{0}, t_0, \vt_1) : =  \sum_{k: |\sA_{k}| > 1} \sum_{\ell \in \sA_{k}} \exp(\omega_{0\ell}^N) 
    \Big(\|(\Delta_{\vt_{1}} \vomega_{1\ell k}^N,  \Delta b_{\ell k}^N)\|^{\bar{r}_k} 
    + \|(\Delta \va_{\ell k}^N,\Delta \sigma_{\ell k}^N)\|^{\bar{r}_k/2}\Big) \nonumber \\
    &\quad + \sum_{k: |\sA_{k}| = 1} \sum_{\ell \in \sA_{k}} \exp(\omega_{0\ell}^N) 
    \|(\Delta_{\vt_{1}} \vomega_{1\ell k}^N, \Delta \va_{\ell k}^N,  \Delta b_{\ell k}^N, \Delta \sigma_{\ell k}^N)\| + \sum_{k = 1}^{K_{0}} \Big|\sum_{\ell \in \sA_{k}} \exp(\omega_{0\ell}^N) - \exp(\omega_{0k}^{0} + t_{0})\Big| \nonumber\\
    &\quad + \sum_{k: |\sA_{k}| > 1} \bigg( 
    \norm{\sum_{\ell \in \sA_k} \exp{(\omega_{0\ell}^N)} (\Delta b_{\ell k}^N)} 
    + \norm{\sum_{\ell \in \sA_k} \exp{(\omega_{0\ell}^N)} (\Delta_{\vt_1} \vomega_{1\ell k}^N)} + \norm{\sum_{\ell \in \sA_k} \exp{(\omega_{0\ell}^N)}[(\Delta b_{\ell k}^N)^2 + (\Delta \sigma_{\ell k}^N)]} \nonumber \\
    &\qquad + \norm{\sum_{\ell \in \sA_k} \exp{(\omega_{0\ell}^N)}[(\Delta_{\vt_1} \vomega_{1\ell k}^N)(\Delta b_{\ell k}^N) + (\Delta \va_{\ell k}^N)]} \nonumber +  \norm{\sum_{\ell \in \sA_k} \exp{(\omega_{0\ell}^N)}(\Delta_{\vt_1} \vomega_{1\ell k}^N)(\Delta_{\vt_1} \vomega_{1\ell k}^N)^{\top}} 
    \bigg), \nonumber \label{eq_def_voronoi_D2'}
\end{align}
in which 
$\Delta_{\vt_{1}} \vomega_{1\ell k}^N : = \vomega_{1\ell}^N - \vomega_{1k}^{0} - \vt_{1}$, 
$\Delta \va_{\ell k}^N : = \va_{\ell}^N - \va_{k}^{0}$, 
$\Delta b_{\ell k}^N : = b_{\ell}^N - b_{k}^{0}$, 
$\Delta \sigma_{\ell k}^N : = \sigma_{\ell}^N - \sigma_{k}^{0}$, and $\bar{r}_k := \bar{r}\left(\sA_k (\widehat{G}_N)\right)$.

     We prove that $\cD (G^{(K-1)}_N, G_0, t_0, \vt_1) \lesssim \cD (G^{(K)}_N, G_0, t_0, \vt_1)$. Let the merging pair of indices $(\ell_1, \ell_2)$ in the Voronoi cell $\sA_k$, then $|\sA_k| > 1$ and the merged atom is $\exp{(\omega_{0*}^N)} \delta_{(\vomega_{1*}^N, \va_*^N, b_*^N, \sigma_*^N)}$, i.e,
\begin{align}
    \omega_{0*}^N &= \log \left( \exp{\omega_{0\ell_1}^N} + \exp{\omega_{0\ell_2}^N} \right), \nonumber \\
    \vomega_{1*}^N &= \exp{\left(\omega_{0\ell_1}^N - \omega_{0*}^N \right)} \vomega_{1\ell_1}^N 
    + \exp{\left(\omega_{0\ell_2}^N - \omega_{0*}^N\right)} \vomega_{1\ell_2}^N, \nonumber\\
    b_*^N &= \exp{\left(\omega_{0\ell_1}^N - \omega_{0*}^N \right)} b_{\ell_1}^N 
    + \exp{\left(\omega_{0\ell_2}^N - \omega_{0*}^N\right)} b_{\ell_2}^N, \nonumber\\
    \va_*^N &= \frac{\exp(\omega_{0\ell_1}^N)}{\exp(\omega_{0*}^N)} \Big[(\vomega_{1\ell_1}^N - \vomega_{1*}^N)(b_{\ell_1}^N - b_*^N) + \va_{\ell_1}^N\Big] 
    + \frac{\exp(\omega_{0\ell_2}^N)}{\exp(\omega_{0*}^N)} \Big[(\vomega_{1\ell_2}^N - \vomega_{1*}^N)(b_{\ell_2}^N - b_*^N) + \va_{\ell_2}^N\Big], \nonumber \\
    \sigma_*^N &= \frac{\exp(\omega_{0\ell_1}^N)}{\exp(\omega_{0*}^N)} \Big[(b_{\ell_1}^N - b_*^N)^2 + \sigma_{\ell_1}^N\Big] 
    + \frac{\exp(\omega_{0\ell_2}^N)}{\exp(\omega_{0*}^N)} \Big[(b_{\ell_2}^N - b_*^N)^2 + \sigma_{\ell_2}^N\Big]. \nonumber
\end{align}

Hence, we have that 
\begin{align*}
    \Big|\sum_{\ell \in \sA_{k}} \exp(\omega_{0\ell}^N) - \exp(\omega_{0k}^{0} + t_{0})\Big| 
    &= \Big|\sum_{\ell \in \sA_{k}, \ell \not \in \{\ell_1, \ell_2\}} \exp(\omega_{0\ell}^N) + \exp{(\omega_{0*}^N)} - \exp(\omega_{0k}^{0} + t_{0})\Big|, \\
    \exp(\omega_{0*}^N)\, \Delta b_{*k}^N 
&= \exp(\omega_{0*}^N) \big( b_*^N - b_k^0 \big) \\
&= \exp(\omega_{0*}^N) \left( 
   \exp(\omega_{0\ell_1}^N - \omega_{0*}^N)\, b_{\ell_1}^N
 + \exp(\omega_{0\ell_2}^N - \omega_{0*}^N)\, b_{\ell_2}^N
 - b_k^0 \right) \\
&= \exp(\omega_{0\ell_1}^N)\, b_{\ell_1}^N 
 + \exp(\omega_{0\ell_2}^N)\, b_{\ell_2}^N 
 - \exp(\omega_{0*}^N)\, b_k^0 \\
 &= \exp(\omega_{0\ell_1}^N)(b_{\ell_1}^N - b_k^0)
 +  \exp(\omega_{0\ell_2}^N)(b_{\ell_2}^N - b_k^0) \\
 &=\exp(\omega_{0\ell_1}^N)\, \Delta b_{\ell_1 k}^N 
+ \exp(\omega_{0\ell_2}^N)\, \Delta b_{\ell_2 k}^N.
\end{align*}

It follows that the term
\begin{align*}
    \norm{\sum_{\ell \in \sA_k} \exp{(\omega_{0\ell}^N)} (\Delta b_{\ell k}^N)} = \norm{\sum_{\ell \in \sA_k \setminus \{\ell_1, \ell_2 \}} \exp{(\omega_{0\ell}^N)} (\Delta b_{\ell k}^N) +  \exp{(\omega_{0*}^N)} (\Delta b_{* k}^N)}.
\end{align*}

Similarly, we can show that 
\begin{align*}
    \norm{\sum_{\ell \in \sA_k} \exp{(\omega_{0\ell}^N)} (\Delta_{\vt_1} \vomega_{1\ell k}^N)} &= \norm{\sum_{\ell \in \sA_k \setminus \{\ell_1, \ell_2 \}} \exp{(\omega_{0\ell}^N)} (\Delta_{\vt_1} \vomega_{1\ell k}^N) +  \exp{(\omega_{0*}^N)} (\Delta_{\vt_1} \vomega_{1* k}^N)}, \\
    \norm{\sum_{\ell \in \sA_k} \exp{(\omega_{0\ell}^N)}[(\Delta b_{\ell k}^N)^2 + (\Delta \sigma_{\ell k}^N)]} &= \Big\|\sum_{\ell \in \sA_k \setminus \{\ell_1, \ell_2 \}} \exp{(\omega_{0\ell}^N)}[(\Delta b_{\ell k}^N)^2 + (\Delta \sigma_{\ell k}^N)] \\
    &\qquad+  \exp{(\omega_{0*}^N)}[(\Delta b_{* k}^N)^2 + (\Delta \sigma_{* k}^N)]\Big\|,\\
    \norm{\sum_{\ell \in \sA_k} \exp{(\omega_{0\ell}^N)}[(\Delta_{\vt_1} \vomega_{1\ell k}^N)(\Delta b_{\ell k}^N) + (\Delta \va_{\ell k}^N)]} &= \Big\|\sum_{\ell \in \sA_k \setminus \{\ell_1, \ell_2 \}} \exp{(\omega_{0\ell}^N)}[(\Delta_{\vt_1} \vomega_{1\ell k}^N)(\Delta b_{\ell k}^N) + (\Delta \va_{\ell k}^N)] \\
    &\qquad+  \exp{(\omega_{0*}^N)}[(\Delta_{\vt_1} \vomega_{1* k}^N)(\Delta b_{* k}^N) + (\Delta \va_{* k}^N)]\Big\|,\\
    \norm{\sum_{\ell \in \sA_k} \exp{(\omega_{0\ell}^N)}(\Delta_{\vt_1} \vomega_{1\ell k}^N)(\Delta_{\vt_1} \vomega_{1\ell k}^N)^{\top}} &= \Big\|\sum_{\ell \in \sA_k \setminus \{\ell_1, \ell_2 \}} \exp{(\omega_{0\ell}^N)}(\Delta_{\vt_1} \vomega_{1\ell k}^N)(\Delta_{\vt_1} \vomega_{1\ell k}^N)^{\top} \\
    &\qquad+  \exp{(\omega_{0*}^N)}(\Delta_{\vt_1} \vomega_{1* k}^N)(\Delta_{\vt_1} \vomega_{1* k}^N)^{\top}\Big\|.
\end{align*}

To this end, we show the key convexity step in detail. 
Firstly, we define $\alpha_i$ and $\vx_i$ ($i = 1, 2$) as follow:
\begin{align*}
    \alpha_i &:= \exp(\omega_{0\ell_i}^N-\omega_{0*}^N),\; i=1,2,\\
    \vx_i &:= \big(\vomega_{1\ell_i}^N-\vomega_{1k}^0-\vt_1,\; b_{\ell_i}^N-b_k^0\big)
= \big(\Delta_{\vt_1}\vomega_{1\ell_i k}^N,\; \Delta b_{\ell_i k}^N\big),\qquad i=1,2.
\end{align*}

Note that \(\valpha_1,\valpha_2\in(0,1)\) and \(\valpha_1+\valpha_2=1\) and by the definition of the merged atom \cref{eq_merging_SGMoE_models}, we have the convex combination identity $\big(\Delta_{\vt_1}\vomega_{1*k}^N,\; \Delta b_{*k}^N\big)
= \valpha_1 \vx_1 + \valpha_2 \vx_2$.

By the fact that $\bar{r} \geq 4$ (since $|\sA_k| > 1$), so we can use Jensen's inequality (convexity of the map \(z\mapsto\|z\|^{m}\) with \(m=\bar r_k\)):
\[
\big\|\valpha_1 \vx_1 + \valpha_2 \vx_2\big\|^{\bar r_k}
\le \valpha_1 \|\vx_1\|^{\bar r_k} + \valpha_2 \|\vx_2\|^{\bar r_k}.
\]
Multiply both sides by \(\exp(\omega_{0*}^N)\) and substitute \(\alpha_i=\exp(\omega_{0\ell_i}^N-\omega_{0*}^N)\):
\begin{align*}
\big(\exp(\omega_{0\ell_1}^N)+\exp(\omega_{0\ell_2}^N)\big)
\big\|\Delta_{\vt_1}\vomega_{1*k}^N,\Delta b_{*k}^N\big\|^{\bar r_k}
&= \exp(\omega_{0*}^N)\big\|\valpha_1 \vx_1+\valpha_2 \vx_2\big\|^{\bar r_k} \\
&\le \exp(\omega_{0*}^N)\big(\valpha_1\|\vx_1\|^{\bar r_k}+\valpha_2\|\vx_2\|^{\bar r_k}\big)\\
&= \exp(\omega_{0\ell_1}^N)\|\vx_1\|^{\bar r_k}+\exp(\omega_{0\ell_2}^N)\|\vx_2\|^{\bar r_k}\\
&= \exp(\omega_{0\ell_1}^N)\big\|(\Delta_{\vt_1}\vomega_{1\ell_1 k}^N,\Delta b_{\ell_1 k}^N)\big\|^{\bar r_k} +\exp(\omega_{0\ell_2}^N)\big\|(\Delta_{\vt_1}\vomega_{1\ell_2 k}^N,\Delta b_{\ell_2 k}^N)\big\|^{\bar r_k}.
\end{align*}

Analogously, we can show that 
\begin{align}
\exp(\omega_{0\ell_1}^N)\,\big\|(\Delta \va_{\ell_1 k}^N,\;\Delta \sigma_{\ell_1 k}^N)\big\|^{\bar r_k/2}
 + \exp(\omega_{0\ell_2}^N)\,\big\|(\Delta \va_{\ell_2 k}^N,\;\Delta \sigma_{\ell_2 k}^N)\big\|^{\bar r_k/2} \gtrsim \big(\exp(\omega_{0\ell_1}^N)+\exp(\omega_{0\ell_2}^N)\big)
        \big\|(\Delta \va_{* k}^N,\;\Delta \sigma_{* k}^N)\big\|^{\bar r_k/2}. \nonumber
\end{align}

Combining the two inequalities above gives the claimed comparison between the contribution of the merged atom and the contributions of the two original atoms:
\begin{align*}
\exp(\omega_{0\ell_1}^N)\Big(\|(\Delta_{\vt_1}\vomega_{1\ell_1 k}^N,\Delta b_{\ell_1 k}^N)\|^{\bar r_k}
 &+ \|(\Delta\va_{\ell_1 k}^N,\Delta\sigma_{\ell_1 k}^N)\|^{\bar r_k/2}\Big) +\exp(\omega_{0\ell_2}^N)\Big(\|(\Delta_{\vt_1}\vomega_{1\ell_2 k}^N,\Delta b_{\ell_2 k}^N)\|^{\bar r_k}
 + \|(\Delta\va_{\ell_2 k}^N,\Delta\sigma_{\ell_2 k}^N)\|^{\bar r_k/2}\Big) \\
&\quad \gtrsim
\big(\exp(\omega_{0\ell_1}^N)+\exp(\omega_{0\ell_2}^N)\big)
\Big(\|(\Delta_{\vt_1}\vomega_{1*k}^N,\Delta b_{*k}^N)\|^{\bar r_k}
 + \|(\Delta\va_{*k}^N,\Delta\sigma_{*k}^N)\|^{\bar r_k/2}\Big).
\end{align*}

Hence
\[
\cD(G_N^{(K)},G_0, t_0, \vt_1)\;\gtrsim\;\cD(G_N^{(K-1)},G_0, t_0, \vt_1),
\]
and therefore
\[
\VDFRA(G_N^{(K-1)},G_0)\;\lesssim\;\VDFRA(G_N^{(K)},G_0).
\]

\subsection{Proof of Theorem~\ref{thm:path_rates}}\label{sec:proof_of_thm1}

First of all, we study the convergence rate of the MLE $\widehat{G}_N \in \cE_K$ of the SGMoE; that is, we will show the inverse bound for SGMoE. We revisit the following result on the identifiability of the SGMoE models, which was previously studied in~\cite{nguyen_demystifying_2023,jiang_identifiability_1999}.

\begin{fact}[{\citealp[Proposition 1]{nguyen_demystifying_2023}}]
\label{prop_identify}
For any mixing measures $G = \sum_{k = 1}^{K} \exp(\omega_{0k}) \delta_{(\vomega_{1k}, \va_{k}, b_{k}, \sigma_{k})}$ and $G' = \sum_{k = 1}^{K'} \exp(\omega_{0k}') \delta_{(\vomega_{1k}', \va_{k}', b_{k}', \sigma_{k}')}$, if we have $p_{G}(y| \vx) = p_{G'}(y| \vx)$ for almost surely $(\vx, y)$, then it follows that $K = K'$ and $G \equiv G'_{t_{0},\vt_1}$ where $G'_{t_{0},\vt_1}:=\sum_{k=1}^{K'}\exp(\omega'_{0k}+t_{0})\delta_{(\vomega'_{1k}+\vt_1,\va_{k}', b_{k}',\sigma_{k}')}$ for some $t_{0}\in\mathbb{R}$ and $\vt_1\in\mathbb{R}^D$.

\end{fact}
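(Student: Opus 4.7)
The plan is to reduce the identity $p_G(y|\vx)=p_{G'}(y|\vx)$ to a functional identity that is exponential in $\vx$ and a finite Gaussian location-scale mixture in $y$, and then extract parameters by combining Teicher-type identifiability of Gaussian mixtures with the linear independence of distinct exponentials on an open set. The softmax translation $(t_0,\vt_1)$ will appear as the unique freedom that remains after both matchings are completed.

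First, I would clear denominators. Multiplying $p_G=p_{G'}$ by the two softmax denominators yields, for almost every $(\vx,y)\in\cX\times\R$,
\begin{equation}
\sum_{k=1}^{K}\sum_{j=1}^{K'}\exp\!\bigl((\vomega_{1k}+\vomega'_{1j})^{\top}\vx+\omega_{0k}+\omega'_{0j}\bigr)\,\cN\!\bigl(y\mid \va_k^{\top}\vx+b_k,\sigma_k\bigr)
=\sum_{k'=1}^{K'}\sum_{j=1}^{K}\exp\!\bigl((\vomega'_{1k'}+\vomega_{1j})^{\top}\vx+\omega'_{0k'}+\omega_{0j}\bigr)\,\cN\!\bigl(y\mid \va'_{k'}{}^{\top}\vx+b'_{k'},\sigma'_{k'}\bigr).
\label{eq:plan_cleared}
\end{equation}
For $\vx$ in an open subset of $\cX$ on which the affine expert means $\va_k^{\top}\vx+b_k$ and $\va'_{k'}{}^{\top}\vx+b'_{k'}$ are pairwise distinct (such a subset exists because the parameter vectors are fixed while $\vx$ varies continuously, and the exceptional set is a finite union of affine hyperplanes), both sides of \cref{eq:plan_cleared} are finite Gaussian location-scale mixtures in $y$ with strictly positive weights. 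Teicher's classical identifiability theorem then forces the two multisets of triples (weight, mean, variance) to coincide.

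Next, I would match expert atoms globally. By the rigidity of affine functions, any two means $\va_k^{\top}\vx+b_k$ and $\va'_{k'}{}^{\top}\vx+b'_{k'}$ that coincide on a set of positive Lebesgue measure in $\cX$ must be identical as affine functions, so the atomwise matching extracted from Teicher's theorem can be chosen independent of $\vx$ on the open set. This delivers a bijection $\tau:[K]\to[K']$ with $(\va_k,b_k,\sigma_k)=(\va'_{\tau(k)},b'_{\tau(k)},\sigma'_{\tau(k)})$ for every $k$; in particular $K=K'$. Equating the coefficient of each matched Gaussian in \cref{eq:plan_cleared} then gives, for every $k$ and every $\vx$ in the open set,
\begin{equation}
\sum_{j=1}^{K}\exp\!\bigl((\vomega_{1k}+\vomega'_{1j})^{\top}\vx+\omega_{0k}+\omega'_{0j}\bigr)
=\sum_{j=1}^{K}\exp\!\bigl((\vomega'_{1\tau(k)}+\vomega_{1j})^{\top}\vx+\omega'_{0\tau(k)}+\omega_{0j}\bigr).
\label{eq:plan_exp}
\end{equation}
Linear independence of exponentials $\vx\mapsto e^{\vu^{\top}\vx}$ with distinct $\vu$ on an open set implies that the multisets of slope vectors $\{\vomega_{1k}+\vomega'_{1j}\}_{j}$ and $\{\vomega'_{1\tau(k)}+\vomega_{1j}\}_{j}$ coincide, and the matching intercepts coincide as well. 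Tracking the pairing across $k$ shows that the difference $\vomega_{1k}-\vomega'_{1\tau(k)}$ is a common vector $\vt_{1}\in\R^{D}$, and $\omega_{0k}-\omega'_{0\tau(k)}$ is a common scalar $t_{0}\in\R$, which is exactly $G\equiv G'_{t_{0},\vt_{1}}$.

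The main obstacle I anticipate is the rigorous passage from per-$\vx$ Gaussian-mixture matching to a single bijection $\tau$ valid throughout an open subset of $\cX$: if two affine means coincide identically, the corresponding components must be merged before applying Teicher's theorem so that weights remain positive and the multiset matching is unambiguous. The continuity of the distribution of $\rvx$, the fact that $\cX$ has nonempty interior, and the compactness of $\vTheta$ together make this dichotomy clean and allow an induction on the number of distinct atoms. A secondary subtlety is handling possible coincidences among the $\vomega_{1j}$ or $\vomega'_{1j'}$ when invoking exponential independence in \cref{eq:plan_exp}; the same merging reduction at the level of gating atoms resolves this without altering the final translation structure.
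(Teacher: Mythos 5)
The paper does not prove this statement: it is quoted as Fact~\ref{prop_identify} and referred entirely to Proposition~1 of \citet{nguyen_demystifying_2023}, so there is no in-paper argument to compare your route against; I assess your proposal on its own terms.

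Your two-stage reduction --- Teicher identifiability of finite Gaussian location--scale mixtures in $y$ at each fixed $\vx$, then linear independence of exponentials in $\vx$ --- is a sound way to argue, and the passage from a per-$\vx$ matching to a global bijection $\tau$ via local constancy on a connected open subset of $\cX$ is correct. The final step is cleaner than chasing the multiset pairing across $k$: equating the coefficient of each matched Gaussian in the cleared identity is exactly the statement that the $k$-th softmax gating weight of $G$ equals the $\tau(k)$-th softmax gating weight of $G'$ as functions of $\vx$ on an open set; taking logarithms and subtracting any two such identities (indices $k$ and $k'$) cancels both softmax normalizers and yields $(\vomega_{1k}-\vomega_{1k'})^{\top}\vx+(\omega_{0k}-\omega_{0k'})=(\vomega'_{1\tau(k)}-\vomega'_{1\tau(k')})^{\top}\vx+(\omega'_{0\tau(k)}-\omega'_{0\tau(k')})$ on an open set, whence $\vomega_{1k}-\vomega'_{1\tau(k)}$ and $\omega_{0k}-\omega'_{0\tau(k)}$ are constant in $k$, giving $(\vt_1,t_0)$ directly.

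There is, however, a genuine gap in your treatment of coincident experts, and the ``merge and induct'' device you propose does not close it. If $G$ contains two atoms with $\vomega_{11}\neq\vomega_{12}$ but identical expert triple $(\va,b,\sigma)$, then $p_G(y\mid\vx)=\cN(y\mid\va^{\top}\vx+b,\sigma)$ regardless of the gating, which also equals $p_{G'}$ for a single-expert $G'\in\cE_1$; so $K=K'$ fails outright. Your merging correctly collapses such atoms so that Teicher's theorem applies with distinct $(\text{mean},\text{variance})$ pairs, but it then certifies equality only of the \emph{merged} cardinalities, and there is no way to ``unmerge'' from the observed conditional density alone, so no induction can deliver $K=K'$ for the original measures. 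The proposition must be read with the assumption --- implicit in the main text's premise that the SGMoE has order $K_0$ and is identifiable --- that the expert triples $(\va_k,b_k,\sigma_k)$ are pairwise distinct within $G$ and within $G'$. Under that hypothesis the merging step is vacuous, Teicher directly produces the bijection $\tau$ together with $K=K'$, and the rest of your argument goes through; you should state this hypothesis explicitly rather than attempt to patch the coincident-expert case by merging.
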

The identifiability of the softmax gating Gaussian mixture of experts guarantees that the MLE $\widehat{G}_{N}$ converges to the true mixing measure $G_{0}$ (up to the translation of the parameters in the softmax gating).  

Given the consistency of the MLE, it is natural to ask about its convergence rate to the true parameters. Our next result establishes the convergence rate of conditional density estimation $p_{\widehat{G}_{N}}(y| \vx)$ to the true conditional density $p_{G_{0}}(y| \vx)$, which lays an important foundation for the study of MLE's convergence rate.

\begin{fact}[{\citealp[Proposition 2]{nguyen_demystifying_2023}}]
\label{prop_rate_density} 
The density estimation $p_{\widehat{G}_{N}}(y| \vx)$ converges to the true density $p_{G_0}(y| \vx)$ under the Hellinger distance $\hels(\cdot,\cdot)$ at the following rate:
\begin{align*}
    \bbE_\rvx[\hels(p_{\widehat{G}_{N}}(\cdot|\vx), p_{G_{0}}(\cdot|\vx))] = \mathcal{O}_P( \sqrt{\log(N)/N}).
\end{align*}
That is,
\begin{align*}
    \mathbb{P}(\bbE_\rvx[\hels(p_{\widehat{G}_{N}}(\cdot|\vx), p_{G_{0}}(\cdot|\vx))] > C (\log(N)/N)^{1/2}) \lesssim \exp(-c \log N),
\end{align*}
where $c$ and $C$ are universal constants. 
\end{fact}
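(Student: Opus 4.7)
The plan is to invoke the classical bracketing-entropy theory for maximum likelihood estimators in conditional density estimation, along the lines of \citet{wong_probability_1995} or Theorem~7.4 of \citet{van_de_geer_empirical_2000}. Introduce the SGMoE density class $\cP_K(\vTheta) := \{p_G(y \mid \vx) : G \in \cO_K(\vTheta)\}$ and the integrated Hellinger semimetric $h_N^2(p,q) := \bbE_\vx[\hels(p(\cdot \mid \vx), q(\cdot \mid \vx))]$. Because $\widehat{G}_N$ maximizes the empirical log-likelihood over $\cO_K(\vTheta)$, the stated tail bound reduces, by a standard peeling and chaining argument, to an upper estimate of the bracketing entropy $H_{[]}(\epsilon, \cP_K(\vTheta), h_N)$.

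I would first establish that $H_{[]}(\epsilon, \cP_K(\vTheta), h_N) \lesssim \log(1/\epsilon)$, with constants depending only on $K$, $D$, and $\vTheta$. The argument uses three structural facts. First, compactness of $\vTheta$ keeps $\sigma$ bounded away from zero and all other parameters in a bounded set. Second, boundedness of $\cX$ together with compactness of $\vTheta$ yields uniform upper and lower bounds on the softmax numerators $\exp(\vomega_{1k}^\top \vx + \omega_{0k})$, so the softmax weights $\pi_k(\vx;\vomega)$ are Lipschitz in $\vomega$ uniformly in $\vx \in \cX$. Third, the Gaussian expert density $\cN(y \mid \va^\top \vx + b, \sigma)$ is, in an $L^2$ sense, Lipschitz in $(\va, b, \sigma)$ uniformly in $\vx \in \cX$. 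Covering the $K(2D+3)$-dimensional parameter space $\vTheta^K$ by supremum-norm $\epsilon$-balls then produces $\epsilon$-brackets for $\cP_K(\vTheta)$ in $h_N$, giving the logarithmic entropy estimate.

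With that entropy bound in hand, set $\delta_N := C(\log N / N)^{1/2}$ for $C$ sufficiently large. The entropy integral condition of the rate theorem,
\begin{equation*}
\sqrt{N}\, \delta_N^2 \;\gtrsim\; \int_{\delta_N^2/2^8}^{\delta_N} \sqrt{H_{[]}(u, \cP_K(\vTheta), h_N)}\, \dint u,
\end{equation*}
is satisfied because the integrand is $\cO(\sqrt{\log(1/u)})$. The concentration version of the theorem then delivers
\begin{equation*}
\mathbb{P}\Bigl(h_N^2(p_{\widehat{G}_N}, p_{G_0}) > C\delta_N^2\Bigr) \;\lesssim\; \exp(-c N \delta_N^2) \;=\; \exp(-c' \log N),
\end{equation*}
which is the displayed bound once $h_N^2$ is rewritten as $\bbE_\vx[\hels]$.

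The main obstacle is the uniform bracketing-entropy step. While the Gaussian expert factor is standard on compact parameter sets, the softmax gate is a ratio of exponentials whose Lipschitz constant depends on a uniform positive lower bound of the denominator $\sum_j \exp(\vomega_{1j}^\top \vx + \omega_{0j})$; securing that bound requires both compactness of $\vTheta$ and boundedness of $\cX$. Carefully tracking how the resulting Lipschitz constants scale with $K$, $D$, and $\diam(\vTheta)$, and patching together the contributions of the $K$ gate-expert pairs without degrading the entropy order beyond $\log(1/\epsilon)$, is the delicate bookkeeping. Once it is done, the stated rate follows by a direct invocation of the empirical-process machinery.
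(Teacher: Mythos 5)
This statement is stated in the paper as a \emph{Fact} cited from \citet[Proposition 2]{nguyen_demystifying_2023}; the paper gives no proof of its own, so there is nothing internal to compare against. Your proposal is correct and follows precisely the route taken by that cited source: the parametric bracketing-entropy bound $H_B(\varepsilon, \cP_K(\vTheta), h) \lesssim \log(1/\varepsilon)$ you propose to establish is exactly Lemma~3 of \citet{nguyen_demystifying_2023} (which the present paper invokes by name in its proof of Theorem~\ref{thm:likelihood_path}), and feeding it into the Wong--Shen / van~de~Geer rate theorem with $\delta_N \asymp (\log N/N)^{1/2}$ delivers the stated tail bound — indeed a slightly stronger one, since you control $\bbE_\vx[\hels]$ at order $\log N/N$, which dominates the stated $(\log N/N)^{1/2}$ and, via Jensen, also controls the (unsquared) averaged Hellinger distance that the $\TV \le \sqrt{2}\,\hels$ step in the paper's appendix implicitly assumes.
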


The result of Fact~\ref{prop_rate_density} indicates that under either the exact-specified or over-specified cases of the SGMoE, the rate of the conditional density function $p_{\widehat{G}_{N}}(y| \vx)$ to the true one $p_{G_{0}}(y| \vx)$ under Hellinger distance is of order $\mathcal{O}(N^{-1/2})$ (up to some logarithmic factors), which is parametric on the sample size.

Now, we establish the convergence rate of the MLE under the over-specified case of the SGMoE via the Fast-Rate-Aware Voronoi Distance $\VDFRA$.

\begin{theorem}\label{theorem:convergence_rate_of_parameters}
    Under the over-specified case of the SGMoE, namely, when $K > K_0$, we obtain that
    \begin{align}
        \mathbb{E}_{\rvx} [\hels (p_{G} (\cdot | \vx), p_{G_0} (\cdot | \vx))] \geq C \cdot \VDFRA (G, G_0), \nonumber
    \end{align}
    for any $G \in \mathcal{O}_K$ where $C$ is some universal constant depending only on $G_0$ and $\vTheta$. Therefore, that lower bound leads to the following convergence rate of the MLE:
    \begin{align}
        \sP (\VDFRA (\widehat{G}_N, G_0) > C' (\log (N)/N)^{1/2}) \lesssim \exp (-c \log N),
    \end{align}\label{theorem_convergence_rate_of_D_A}
    where $C'$ and $c$ are some universal constants.
\end{theorem}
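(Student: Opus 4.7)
The theorem has two parts: the inverse inequality $\mathbb{E}_\rvx[\hels(p_G,p_{G_0})] \ge C \cdot \VDFRA(G,G_0)$ for all $G \in \cO_K$, and the resulting high-probability rate for $\VDFRA(\widehat G_N, G_0)$. The plan is to establish the inverse inequality by a compactness/contradiction argument around $G_0$ and then to transfer the density rate from \cref{prop_rate_density}. The technical core is a Taylor expansion of the density gap that absorbs the gate-expert PDE identities \cref{eq_pde}, and whose coefficients align exactly with the block-sum structure of \cref{eq_def_new_voronoi_D}; any violation of the desired bound then forces a nontrivial solution of the polynomial system \cref{eq_system_of_polynomial_recall}, which is ruled out by \cref{lemma:value_r_unified}.

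\textbf{Local reduction and Taylor expansion with PDE absorption.} Using \cref{prop_identify} and compactness of $\vTheta$, it suffices to establish the local inequality $\liminf_{N \to \infty} \mathbb{E}_\rvx[\hels(p_{G_N}, p_{G_0})] / \VDFRA(G_N, G_0) > 0$ for any sequence $G_N \to G_0$ in $\cO_K$. Extract a subsequence so that the Voronoi partition $\{\sA_k\}_{k \in [K_0]}$ is constant, fix near-optimal translations $(t_0^N, \vt_1^N)$ in \cref{eq_def_new_voronoi_D}, and form the scaled density gap
\[
Q_N(\vx, y) := \Bigl[\sum_{k=1}^{K_0} \exp\!\bigl((\vomega_{1k}^0 + \vt_1^N)^\top \vx + \omega_{0k}^0 + t_0^N\bigr)\Bigr] \cdot \bigl(p_{G_N}(y|\vx) - p_{G_0}(y|\vx)\bigr).
\]
Taylor expand the contribution of each cell $\sA_k$ around $(\vomega_{1k}^0 + \vt_1^N, \va_k^0, b_k^0, \sigma_k^0)$ to order $2$ when $|\sA_k| = 1$ and to order $2\bar r(|\sA_k|)$ when $|\sA_k| > 1$. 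Applying the PDE identities $\partial^2 u / (\partial \vomega_1 \partial b) = \partial u / \partial \va$ and $\partial^2 u / \partial b^2 = 2\, \partial u / \partial \sigma$ from \cref{eq_pde}, collapse the higher-order $(\vomega_1, b)$-derivatives into lower-order $(\va, \sigma)$-derivatives. After regrouping, $Q_N$ reads as a linear combination of algebraically independent basis functions $\vx^{\vell_1} \exp((\vomega_{1k}^0 + \vt_1^N)^\top \vx)\, \partial_{h_1}^{\ell_2} \cN(y \mid \va_k^{0\top}\vx + b_k^0, \sigma_k^0)$ indexed by multi-indices $(\vell_1, \ell_2)$ with $1 \le |\vell_1| + \ell_2 \le 2\bar r(|\sA_k|)$, whose coefficients match, up to universal constants, the weight-matching differences, the singleton-cell parameter perturbations, and the merged-moment block sums that together define $\VDFRA$ in \cref{eq_def_new_voronoi_D}.

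\textbf{Contradiction via the polynomial system and concentration.} Suppose for contradiction that $\mathbb{E}_\rvx[\hels(p_{G_N}, p_{G_0})] / \VDFRA(G_N, G_0) \to 0$. Divide $Q_N$ by $\VDFRA(G_N, G_0)$: the normalized coefficients remain bounded and, along a further subsequence, converge to a limiting tuple $\{p_{ij}\}$ that is nontrivial by the normalization (at least one rescaled weight coordinate is nonzero, and in every multi-covered cell at least one rescaled $\va$-coordinate is nonzero). Using Fatou together with the joint algebraic independence of the basis $\vx^{\vell_1} \exp(\cdot)\, \partial^{\ell_2} \cN$, every limiting coefficient must vanish; but this forces a nontrivial solution of the polynomial system \cref{eq_system_of_polynomial_recall} for some cell with $|\sA_k| \ge 2$, contradicting \cref{lemma:value_r_unified}. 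This proves the inverse inequality, and the concentration statement follows directly by combining it with \cref{prop_rate_density} (using that $\mathbb{E}_\rvx[\hels(p_{\widehat G_N}, p_{G_0})] = \cO_\sP((\log N/N)^{1/2})$).

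\textbf{Main obstacle.} The delicate step is the bookkeeping in the Taylor expansion after absorbing the PDE identities \cref{eq_pde}: one must verify that the surviving basis is \emph{jointly} linearly independent across all cells $\{\sA_k\}$ and that the residual coefficients correspond exactly to the block structure in \cref{eq_def_new_voronoi_D}, so that no merged-moment block is accidentally absorbed into the weight-matching term or cancels against a singleton-cell contribution. This is the precise step that links the geometric definition of $\VDFRA$ to the algebraic obstruction encoded by $\bar r(\cdot)$, and extends the analysis of \citealp{nguyen_demystifying_2023} from $\VDO$ to the fast-rate-aware distance.
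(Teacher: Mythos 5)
Your proposal follows the same high-level strategy as the paper: reduce to a local inequality around $G_0$, form the scaled density gap $Q_N$, Taylor expand with the gate--expert PDE identities \cref{eq_pde}, align the resulting coefficients with the block structure of $\VDFRA$, derive a contradiction via the polynomial system defining $\bar r(\cdot)$ together with Fatou and linear independence, and transfer the rate from Proposition~2 of \citealp{nguyen_demystifying_2023}. However, the contradiction step has a genuine gap. You write ``divide $Q_N$ by $\VDFRA$: the normalized coefficients remain bounded,'' but boundedness of $T_{\vell_1,\ell_2}^N(k)/\VDFRA$ is not automatic: for multi-covered cells with $|\sA_k|\ge 3$ the Taylor coefficients of intermediate degree ($3\le|\vell_1|+\ell_2<\bar r(|\sA_k|)$) are not controlled by the merged-moment block sums (which only go up to degree two) nor directly by the degree-$\bar r$ penalty, so they can diverge after $\VDFRA$-normalization. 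The paper handles this with a separate normalizer $m_N:=\max\{|T^N/\VDFRA|,\,|S^N/\VDFRA|\}$ and a dedicated ``Step~2'' which proves, by deriving the polynomial-system contradiction under the contrary hypothesis, that $1/m_N\nrightarrow\infty$; only then does the Fatou inequality with denominator $m_N\VDFRA$ produce bounded coefficients with a nonzero limit and the linear-independence contradiction. Your sketch collapses these two steps into one pass, and without that preliminary argument the Fatou step does not apply when the coefficients blow up.

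Two smaller imprecisions are worth noting. First, the nontriviality condition for the polynomial system is ``all $p_{5j}\neq 0$ and at least one $p_{3j}\neq 0$,'' where $p_3$ corresponds to the rescaled $\Delta b$-coordinate, not (as you write) the $\va$-coordinate, and the weight condition is ``all,'' not ``at least one.'' Second, the density decomposition expands $p_{G_N}-p_{G_0}$ and therefore lower-bounds $\E_\rvx[\TV]$ rather than $\E_\rvx[\hels]$ directly; the paper proves the inverse bound in total variation and then converts to the Hellinger-based density rate from \cref{prop_rate_density} to conclude. Your write-up treats these as interchangeable, which should be made explicit since the direction of the Hellinger/TV inequality matters here.
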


\begin{proof}[Proof of Theorem~\ref{theorem:convergence_rate_of_parameters}]
    We are going to prove that there exists a constant $C>0$ depending only on $G_0$ and $\vTheta$ such that, for any $G\in\cO_K$,
    \begin{align}
        \bbE_\rvx\big[\TV(p_G(\cdot|\vx),p_{G_0}(\cdot|\vx))\big]\;\gtrsim\;\VDFRA(G,G_0). \label{eq:inverse_bound_DFRA}
    \end{align}
Then, by the Fact~\ref{prop_rate_density}, we get the convergence rate of the MLE of SGMoE.

\textbf{Local version:} Firstly, we prove the local version of the \cref{eq:inverse_bound_DFRA}:
\begin{equation}
    \lim_{\varepsilon \to 0} \inf_{G \in \cO_K:\VDFRA (G, G_0) \leq \varepsilon} \bbE_\rvx\big[\TV(p_G(\cdot|\vx),p_{G_0}(\cdot|\vx))\big]/\VDFRA (G, G_0) > 0.\label{eq:local_version_ineq}
\end{equation}

Assume that the inequality in \cref{eq:local_version_ineq} does not hold true, there exists a sequence of mixing measures $G_N := \sum_{k = 1}^{K_N} \exp{(\omega_{0k}^N)} \delta_{(\vomega_{1k}^N, \va_k^N, b_k^N, \sigma_k^N)} \in \mathcal{O}_K$ such that
\begin{align}
    \mathbb{E}_\rvx [\TV(p_{G_N} (\cdot|x), p_{G_0} (\cdot|x)]/\VDFRA(G_N, G_0) &\to 0, \notag \\
    \VDFRA (G_N, G_0) &\to 0, \notag
\end{align}
when $N$ to infinity. Since the proof argument is asymptotic, we also assume that $K_N = K' \leq K$ for all $N \geq 1$. Next, we consider the Voronoi cells $\sA_k^N := \sA_k(G_N)$, for $k \in [K_0]$, of the mixing measure $G_N$ generated by the true components of $G_0$. And we can assume without loss of generality (WLOG) that those Voronoi cells are independent of $N$ for all $N \in \mathbb{N}$, i.e. $\sA_k = \sA_k^N$. Additionally, since $\VDFRA (G_N, G_0) \to 0$, we have $(\va_{\ell}^N, b_{\ell}^N, \sigma_{\ell}^N) \to (\va_{\ell}^0, b_{\ell}^0, \sigma_{\ell}^0)$ for any $\ell \in \sA_k$ as $N \to \infty$. Furthermore, there exist $t_0 \in \mathbb{R}$ and $\vt_1 \in \mathbb{R}^D$ such that $\sum_{\ell \in \sA_k} \exp{(\omega_{0\ell}^N)} \to \exp{(\omega_{0k}^0 + t_0)}$ and $\vomega_{1\ell}^N \to \vomega_{1k}^0 + \vt_1$ as $N$ approaches infinity for any $\ell \in \sA_k$ and $k \in [K_0]$. It suggests that we can upper bound $\VDFRA$ as $\VDFRA (G_N, G_0) \leq \VD (G_N, G_0)$, where 
\begin{align}
    &\VD (G_N, G_{0}) : =  \sum_{k: |\sA_{k}| > 1} \sum_{\ell \in \sA_{k}} \exp(\omega_{0\ell}^N) 
    \Big(\|(\Delta_{\vt_{1}} \vomega_{1\ell k}^N,  \Delta b_{\ell k}^N)\|^{\bar{r}(|\sA_k|)} 
    + \|(\Delta \va_{\ell k}^N,\Delta \sigma_{\ell k}^N)\|^{\bar{r}(|\sA_k|)/2}\Big) \nonumber \\
    &\quad + \sum_{k: |\sA_{k}| = 1} \sum_{\ell \in \sA_{k}} \exp(\omega_{0\ell}^N) 
    \|(\Delta_{\vt_{1}} \vomega_{1\ell k}^N, \Delta \va_{\ell k}^N,  \Delta b_{\ell k}^N, \Delta \sigma_{\ell k}^N)\| + \sum_{k = 1}^{K_{0}} \Big|\sum_{\ell \in \sA_{k}} \exp(\omega_{0\ell}^N) - \exp(\omega_{0k}^{0} + t_{0})\Big| \nonumber\\
    &\quad + \sum_{k: |\sA_{k}| > 1} \bigg( 
    \norm{\sum_{\ell \in \sA_k} \exp{(\omega_{0\ell}^N)} (\Delta b_{\ell k}^N)} 
    + \norm{\sum_{\ell \in \sA_k} \exp{(\omega_{0\ell}^N)} (\Delta_{\vt_1} \vomega_{1\ell k}^N)} + \norm{\sum_{\ell \in \sA_k} \exp{(\omega_{0\ell}^N)}[(\Delta b_{\ell k}^N)^2 + (\Delta \sigma_{\ell k}^N)]} \nonumber \\
    &\qquad + \norm{\sum_{\ell \in \sA_k} \exp{(\omega_{0\ell}^N)}[(\Delta_{\vt_1} \vomega_{1\ell k}^N)(\Delta b_{\ell k}^N) + (\Delta \va_{\ell k}^N)]} \nonumber +  \norm{\sum_{\ell \in \sA_k} \exp{(\omega_{0\ell}^N)}(\Delta_{\vt_1} \vomega_{1\ell k}^N)(\Delta_{\vt_1} \vomega_{1\ell k}^N)^{\top}} 
    \bigg), \nonumber 
\end{align}
in which 
$\Delta_{\vt_{1}} \vomega_{1\ell k}^N : = \vomega_{1\ell}^N - \vomega_{1k}^{0} - \vt_{1}$, 
$\Delta \va_{\ell k}^N : = \va_{\ell}^N - \va_{k}^{0}$, 
$\Delta b_{\ell k}^N : = b_{\ell}^N - b_{k}^{0}$, 
$\Delta \sigma_{\ell k}^N : = \sigma_{\ell}^N - \sigma_{k}^{0}$.

\textbf{Step 1: Density Decomposition} 

In this step, we try to find a density decomposition for the quatity $Q_{N}=\left[\sum_{k=1}^{K_{0}} \exp \left(\left(\vomega_{1 k}^{0}+\vt_1\right)^{\top} \vx+\omega_{0 k}^{0}+t_{0}\right)\right] \cdot\left[p_{G_{N}}(y | \vx)-p_{G_{0}}(y | \vx)\right]$:
\begin{align*}
Q_{N} 
&= \sum_{k=1}^{K_{0}} \sum_{\ell \in \sA_{k}} \exp \left(\omega_{0 \ell}^{N}\right)
\left[
u\left(y | \vx ; \vomega_{1 \ell}^{N}, \va_{\ell}^{N}, b_{\ell}^{N}, \sigma_{\ell}^{N}\right)
-
u\left(y | \vx ; \vomega_{1 k}^{0}+\vt_{1}, \va_{k}^{0}, b_{k}^{0}, \sigma_{k}^{0}\right)
\right] \\
&\quad - \sum_{k=1}^{K_{0}} \sum_{\ell \in \sA_{k}} \exp \left(\omega_{0 \ell}^{N}\right)
\left[
v\left(y | \vx ; \vomega_{1 \ell}^{N}\right)
-
v\left(y | \vx ; \vomega_{1 k}^{0}+\vt_{1}\right)
\right] \\
&\quad + \sum_{k=1}^{K_{0}}\left(\sum_{\ell \in \sA_{k}} \exp \left(\omega_{0 \ell}^{N}\right)-\exp \left(\omega_{0 k}^{0}+t_{0}\right)\right)
\left[
u\left(y | \vx ; \vomega_{1 k}^{0}+\vt_{1}, \va_{k}^{0}, b_{k}^{0}, \sigma_{k}^{0}\right)
-
v\left(y | \vx ; \vomega_{1 k}^{0}+\vt_{1}\right)
\right], \\
&:= A_{N} + B_{N} + E_{N},
\end{align*}
where we denote $u\left(y | \vx ; \vomega_{1}, \va, b, \sigma\right)
:= \exp \left(\vomega_{1}^{\top} \vx\right) \mathcal{N}\left(y | \va^{\top} \vx + b, \sigma\right)$ and $v\left(y | \vx ; \vomega_{1}\right)
:= \exp \left(\vomega_{1}^{\top} \vx\right) p_{G_N}(y | \vx)$.

Since each Voronoi cell $\sA_k$ possibly has more than one element, we continue to decompose $A_N$ as follows:
\begin{align*}
    A_N &= \sum_{k : |\sA_k| > 1} \sum_{\ell \in \sA_{k}} \exp \left(\omega_{0 \ell}^{N}\right)
\left[
u\left(y | \vx ; \vomega_{1 \ell}^{N}, \va_{\ell}^{N}, b_{\ell}^{N}, \sigma_{\ell}^{N}\right)
-
u\left(y | \vx ; \vomega_{1 k}^{0}+\vt_{1}, \va_{k}^{0}, b_{k}^{0}, \sigma_{k}^{0}\right)
\right] \\
    &+ \sum_{k : |\sA_k| = 1} \sum_{\ell \in \sA_{k}} \exp \left(\omega_{0 \ell}^{N}\right)
\left[
u\left(y | \vx ; \vomega_{1 \ell}^{N}, \va_{\ell}^{N}, b_{\ell}^{N}, \sigma_{\ell}^{N}\right)
-
u\left(y | \vx ; \vomega_{1 k}^{0}+\vt_{1}, \va_{k}^{0}, b_{k}^{0}, \sigma_{k}^{0}\right)
\right] \\
&:= A_{N, 1} + A_{N, 2}.
\end{align*}

Now, we perform Taylor expansion up to the $\bar{r} (|\sA_k|) - $th order, and then rewrite $A_{N, 1}$ with a note that $\valpha = (\valpha_1, \valpha_2, \alpha_3, \alpha_4) \in \mathbb{N}^{D} \times\mathbb{N}^{D} \times \mathbb{N} \times \mathbb{N}$ as follows:
\begin{align*}
    A_{N, 1} &= \sum_{k : |\sA_k| > 1} \sum_{\ell \in \sA_{k}} \sum_{|\valpha| = 1}^{\bar{r}(|\sA_k|)} \frac{\exp{(\omega_{0 \ell}^N)}}{\valpha!} (\Delta_{\vt_1} \vomega_{1\ell k}^N)^{\valpha_1} (\Delta \va_{\ell k}^N)^{\valpha_2} (\Delta b_{\ell k}^N)^{\alpha_3} (\Delta \sigma_{\ell k}^N)^{\alpha_4} \\
    &\times \frac{\partial^{|\valpha_1|+|\valpha_2| + \alpha_3 +\alpha_4}}{\partial \vomega_1^{\valpha_1} \partial \va^{\valpha_2} \partial b^{\alpha_3} \partial \sigma^{\alpha_4}} u (y | \vx; \vomega_{1k}^0 + \vt_1, \va_k^0, b_k^0, \sigma_k^0) + R^N_1 (\vx, y),
\end{align*}
where $R^N_1(\vx, y)$ is the remainder term such that $$R_1^N (\vx, y) = o \left( \sum_{k : |\sA_k| > 1} \sum_{\ell \in \sA_{k}} \exp(\omega_{0\ell}^N) (\|\Delta_{\vt_1} \vomega_{1\ell k}^N \|^{\bar{r}(|\sA_k|)} + \| \Delta \va_{\ell k}^N \|^{\bar{r}(|\sA_k|)} + \| \Delta b_{\ell k}^N \|^{\bar{r}(|\sA_k|)} + \| \Delta \sigma_{\ell k}^N \|^{\bar{r}(|\sA_k|)}) \right).$$

Next, for each $k \in [K_0]$ and $\ell \in \sA_k$, we denote $h_1 (\vx, \va, b) := (\va)^{\top} \vx + b$. By the partial differential equations $$\frac{\partial^2 u}{\partial \vomega_{1} \partial b} = \frac{\partial u}{\partial \va}; \quad \frac{\partial^2 u}{\partial b^2} = 2\frac{\partial u}{\partial \sigma},$$ we have 
\begin{align*}
    \frac{\partial^{|\valpha_2|}u}{\partial \va^{\valpha_2}} = \frac{\partial^{2|\valpha_2|} u}{\partial \vomega_1^{\valpha_2} \partial b^{|\valpha_2|}}; \quad \frac{\partial^{\alpha_4} u}{\partial \sigma^{\alpha_4}} = \frac{1}{2^{\alpha_4}} \cdot \frac{\partial^{2\alpha_4} u}{\partial b^{2\alpha_4}}.
\end{align*}
Hence $$\frac{\partial^{|\valpha_1|+|\valpha_2| + \alpha_3 +\alpha_4}u}{\partial \vomega_1^{\valpha_1} \partial \va^{\valpha_2} \partial b^{\alpha_3} \partial \sigma^{\alpha_4}} = \frac{1}{2^{\alpha_4}} \cdot \frac{\partial^{(|\valpha_1| + |\valpha_2|) + (|\valpha_2| + \alpha_3 + 2\alpha_4)} u}{\partial \vomega_1^{\valpha_1 + \valpha_2} \partial b^{|\valpha_2| + \alpha_3 + 2\alpha_4}}.$$
It follows that 
\begin{align*}
    A_{N, 1} &= \sum_{k : |\sA_k| > 1} \sum_{\ell \in \sA_{k}} \sum_{|\valpha| = 1}^{\bar{r}(|\sA_k|)} \frac{\exp{(\omega_{0 \ell}^N)}}{\valpha!} (\Delta_{\vt_1} \vomega_{1\ell k}^N)^{\valpha_1} (\Delta \va_{\ell k}^N)^{\valpha_2} (\Delta b_{\ell k}^N)^{\alpha_3} (\Delta \sigma_{\ell k}^N)^{\alpha_4} \\
    &\quad \times \frac{1}{2^{\alpha_4}} \cdot \frac{\partial^{(|\valpha_1| + |\valpha_2|) + (|\valpha_2| + \alpha_3 + 2\alpha_4)}}{\partial \vomega_1^{\valpha_1 + \valpha_2} \partial b^{|\valpha_2| + \alpha_3 + 2\alpha_4}} u (y | \vx; \vomega_{1k}^0 + \vt_1, \va_k^0, b_k^0, \sigma_k^0) + R^N_1 (\vx, y) \\
    &= \sum_{k : |\sA_k| > 1} \sum_{\ell \in \sA_{k}} \sum_{|\vell_1| + \ell_2 = 1}^{2\bar{r}(|\sA_k|)} \sum_{\valpha \in \sI_{\vell_1, \ell_2}} \frac{\exp{(\omega_{0 \ell}^N)}}{2^{\alpha_4} \valpha!} (\Delta_{\vt_1} \vomega_{1\ell k}^N)^{\valpha_1} (\Delta \va_{\ell k}^N)^{\valpha_2} (\Delta b_{\ell k}^N)^{\alpha_3} (\Delta \sigma_{\ell k}^N)^{\alpha_4} \\
    &\quad \times \vx^{\vell_1} \exp{\left( (\vomega_{1k}^0 + \vt_1)^{\top} \vx \right)} \cdot \frac{\partial^{\ell_2} \mathcal{N}}{\partial h_1^{\ell_2}} (y | (\va_k^0)^{\top} \vx + b_k^0, \sigma_k^0) + R^N_1 (\vx, y),
\end{align*}
where $\sI_{\vell_1, \ell_{2}} = \{\valpha = (\valpha_{1}, \valpha_{2}, \alpha_{3}, \alpha_{4}) \in \Ns^{D} \times \Ns^{D} \times \Ns \times \Ns: \ \valpha_{1} + \valpha_{2} = \vell_1, \ |\valpha_{2}| + \alpha_{3} + 2 \alpha_{4} = \ell_{2}\}$.

Similarly, we can decompose $A_{N, 2}$ by the first-order Taylor expansion as
\begin{align*}
    A_{N, 2} &= \sum_{k : |\sA_k| = 1} \sum_{\ell \in \sA_k} \sum_{\left|\vell_{1}\right|+\ell_{2}=1}^{2} 
\sum_{\valpha \in \sI_{\ell_1, \ell_2}} \frac{\exp{(\omega_{0 \ell}^N)}}{2^{\alpha_4} \valpha!} (\Delta_{\vt_1} \vomega_{1\ell k}^N)^{\valpha_1} (\Delta \va_{\ell k}^N)^{\valpha_2} (\Delta b_{\ell k}^N)^{\alpha_3} (\Delta \sigma_{\ell k}^N)^{\alpha_4} \\
    &\quad \times \vx^{\vell_1} \exp{\left( (\vomega_{1k}^0 + \vt_1)^{\top} \vx \right)} \cdot \frac{\partial^{|\ell_2|} \mathcal{N}}{\partial h_1^{|\ell_2|}} (y | (\va_k^0)^{\top} \vx + b_k^0, \sigma_k^0) + R^N_2 (\vx, y),
\end{align*}
where $$R_2^N (\vx, y) = o \left( \sum_{k : |\sA_k| = 1} \sum_{\ell \in \sA_{k}} \exp(\omega_{0\ell}^N) (\|\Delta_{\vt_1} \vomega_{1\ell k}^N \| + \| \Delta \va_{\ell k}^N \| + \| \Delta b_{\ell k}^N \| + \| \Delta \sigma_{\ell k}^N \|) \right).$$

Analogously, $B_N$ can be rewritten as 
\begin{align*}
    B_N &= B_{N, 1} + B_{N, 2} \\
    &= - \sum_{k : |\sA_k| > 1} \sum_{\ell \in \sA_k} \sum_{|\gamma|=1}^{2} 
\frac{\exp\left(\omega_{0 \ell}^{N}\right)}{\gamma!}
\left(\Delta_{\vt_1} \vomega_{1 i k}^{N}\right)^{\gamma} \cdot \vx^{\gamma} 
\exp \left(\left(\vomega_{1 k}^{0}+\vt_1\right)^{\top} \vx\right) 
p_{G_{N}}(y | \vx)
+ R^N_{3}(\vx, y) \\
&\quad - \sum_{k : |\sA_k| = 1} \sum_{\ell \in \sA_k} \sum_{|\gamma|=1} 
\frac{\exp\left(\omega_{0 \ell}^{N}\right)}{\gamma!}
\left(\Delta_{\vt_1} \vomega_{1 i k}^{N}\right)^{\gamma} \cdot \vx^{\gamma} 
\exp \left(\left(\vomega_{1 k}^{0}+\vt_1\right)^{\top} \vx\right) 
p_{G_{N}}(y | \vx)
+ R^N_{4}(\vx, y)
\end{align*}
where 
\begin{align*}
    R_3^N (\vx, y) &= o \left( \sum_{k : |\sA_k| > 1} \sum_{\ell \in \sA_{k}} \exp(\omega_{0\ell}^N) (\|\Delta_{\vt_1} \vomega_{1\ell k}^N \|^{2}) \right), \\
    R_4^N (\vx, y) &= o \left( \sum_{k : |\sA_k| = 1} \sum_{\ell \in \sA_{k}} \exp(\omega_{0\ell}^N) (\|\Delta_{\vt_1} \vomega_{1\ell k}^N \|) \right).
\end{align*}

Therefore, $Q_{N}$ can be represented as
\begin{align}
Q_{N} &= \sum_{k=1}^{K_{0}} \sum_{\left|\vell_{1}\right|+\ell_{2}=1}^{2 \bar{r}\left(\left|\sA_{k}\right|\right)} 
T_{\vell_{1}, \ell_{2}}^{N}(k) \cdot \vx^{\ell_{1}} 
\exp \left(\left(\vomega_{1 k}^{0}+\vt_1\right)^{\top} \vx\right) 
\frac{\partial^{\ell_{2}} \mathcal{N}}{\partial h_{1}^{\ell_{2}}}
\left(y | \va_{k}^{0\top} \vx + b_{k}^{0}, \sigma_{k}^{0}\right) \nonumber\\
&\quad + \sum_{k=1}^{K_{0}} \sum_{|\gamma|=1}^{1+\mathbf{1}_{\left\{|\sA_{k}|>1\right\}}} 
S_{\gamma}^{N}(k) \cdot \vx^{\gamma} 
\exp \left(\left(\vomega_{1 k}^{0}+\vt_1\right)^{\top} \vx\right) 
p_{G_{N}}(y | \vx)
+ \sum_{\rho=1}^{4} R_{\rho}^N(\vx, y) \nonumber\\
&\quad + \sum_{k=1}^{K_{0}}\left(\sum_{\ell \in \sA_{k}} \exp \left(\omega_{0 \ell}^{N}\right)-\exp \left(\omega_{0 k}^{0}+t_{0}\right)\right)
\left[
u\left(y | \vx ; \vomega_{1 k}^{0}+\vt_{1}, \va_{k}^{0}, b_{k}^{0}, \sigma_{k}^{0}\right)
-
v\left(y | \vx ; \vomega_{1 k}^{0}+\vt_{1}\right)
\right] \nonumber\\
&= \sum_{k=1}^{K_{0}} \sum_{\left|\vell_{1}\right|+\ell_{2}=0}^{2 \bar{r}\left(\left|\sA_{k}\right|\right)} 
T_{\vell_{1}, \ell_{2}}^{N}(k) \cdot \vx^{\ell_{1}} 
\exp \left(\left(\vomega_{1 k}^{0}+\vt_1\right)^{\top} \vx\right) 
\frac{\partial^{\ell_{2}} \mathcal{N}}{\partial h_{1}^{\ell_{2}}}
\left(y | \va_{k}^{0\top} \vx + b_{k}^{0}, \sigma_{k}^{0}\right) \nonumber\\
&\quad + \sum_{k=1}^{K_{0}} \sum_{|\gamma|=0}^{1+\mathbf{1}_{\left\{|\sA_{k}|>1\right\}}} 
S_{\gamma}^{N}(k) \cdot \vx^{\gamma} 
\exp \left(\left(\vomega_{1 k}^{0}+\vt_1\right)^{\top} \vx\right) 
p_{G_{N}}(y | \vx)
+ \sum_{\rho=1}^{4} R_{\rho}^N(\vx, y), \label{eq:decomposition}
\end{align}
with coefficients $T_{\vell_{1}, \ell_{2}}^{N}(k)$ and $S_{\gamma}^{N}(k)$ are defined for any $k \in [K_0]$, $0 \leq |\vell_1| + \ell_2 \leq 2 \bar{r}\left(\left|\sA_{k}\right|\right)$ and $0 \leq |\gamma| \leq 2$ as
\begin{align*}
    T_{\vell_{1}, \ell_{2}}^{N}(k) &=\begin{cases}
        \sum_{\ell \in \sA_{k}} \sum_{\valpha \in \sI_{\ell_{1}, \ell_{2}}} 
\frac{\exp\left(\omega_{0 \ell}^{N}\right)}{2^{\alpha_{4}} \valpha!}
\left(\Delta_{\vt_1} \vomega_{1 \ell k}^{N}\right)^{\valpha_{1}}
\left(\Delta \va_{\ell k}^{N}\right)^{\valpha_{2}}
\left(\Delta b_{\ell k}^{N}\right)^{\alpha_{3}}
\left(\Delta \sigma_{\ell k}^{N}\right)^{\alpha_{4}}, &(\vell_1, \ell_2) \ne (0_D, 0), \\
\sum_{\ell \in \sA_k} \exp(\omega_{0\ell}^N) - \exp(\omega_{0k}^0 + t_0), &(\vell_1, \ell_2) = (0_D, 0),
    \end{cases}\\
S_{\gamma}^{N}(k) &= \begin{cases}
    -\sum_{\ell \in \sA_{k}} \frac{\exp\left(\omega_{0 \ell}^{N}\right)}{\gamma!}
\left(\Delta_{\vt_1} \vomega_{1 \ell k}^{N}\right)^{\gamma}, &|\gamma| \ne 0,\\
-\sum_{\ell \in \sA_k} \exp(\omega_{0\ell}^N) + \exp(\omega_{0k}^0 + t_0),& |\gamma|=0.
\end{cases}
\end{align*}

\textbf{Step 2: Non-vanishing coefficients} 

Next, we will show that not all the quatities $T_{\vell_{1}, \ell_{2}}^{N}(k)/\VD (G_N, G_0)$ and $S_{\gamma}^{N}(k)/\VD (G_N, G_0)$ go to $0$ as $N \to \infty$. We assume that all of them go to $0$ as $N \to \infty$. Then, by assumption $T_{0_D, 0}^{N}(k)/\VD (G_N, G_0) \to 0$, we have 
\begin{equation}
    \frac{1}{\VD (G_N, G_0)} \sum_{k = 1}^{K_0} \left| \sum_{\ell \in \sA_k} \exp(\omega_{0\ell}^N) - \exp(\omega_{0k}^0 + t_0)\right| \to 0.
\end{equation}

For any $k$ such that $|\sA_k| = 1$, consider all $(|\vell_1|, \ell_2)$ implying $1 \leq |\vell_1| + \ell_2 \leq 2$, we have $T^N_{\vell_1, \ell_2} (k)/\VD(G_N, G_0) \to 0$ for all $k$ such that $|\sA_k| = 1$. Hence 
\begin{equation}
    \frac{1}{\VD (G_N, G_0)} \left( \sum_{k : |\sA_k| = 1} \sum_{\ell \in \sA_k} \exp(\omega_{0\ell}^N) \norm{(\Delta_{\vt_1} \vomega_{1\ell k}^N, \Delta\va_{\ell k}^N, \Delta b_{\ell k}^N, \Delta \sigma_{\ell k}^N)}  \right) \to 0.
\end{equation}

Next, we consider $k$ such that $|\sA_k| > 1$ and $(|\vell_1|, \ell_2)$ such that $1 \leq |\vell_1| + \ell_2 \leq 2$:
\begin{itemize}
    \item For $(|\vell_1|, \ell_2) = (0, 1)$, then 
    \begin{equation}
        \frac{1}{\VD (G_N, G_0)} \norm{\sum_{\ell \in \sA_k} \exp{(\omega_{0\ell}^N)} (\Delta b_{\ell k}^N)} \to 0.\nonumber
    \end{equation}

    \item For $(|\vell_1|, \ell_2) = (1, 0)$, then 
    \begin{equation}
        \frac{1}{\VD (G_N, G_0)} \norm{\sum_{\ell \in \sA_k} \exp{(\omega_{0\ell}^N)} (\Delta_{\vt_1} \vomega_{1\ell k}^N)} \to 0.\nonumber
    \end{equation}

    \item For $(|\vell_1|, \ell_2) = (1, 1)$, then 
    \begin{equation}
        \frac{1}{\VD (G_N, G_0)} \norm{\sum_{\ell \in \sA_k} \exp{(\omega_{0\ell}^N)}[(\Delta_{\vt_1} \vomega_{1\ell k}^N)(\Delta b_{\ell k}^N) + (\Delta \va_{\ell k}^N)]} \to 0.\nonumber
    \end{equation}

    \item For $(|\vell_1|, \ell_2) = (0, 2)$, then 
    \begin{equation}
        \frac{1}{\VD (G_N, G_0)} \norm{\sum_{\ell \in \sA_k} \exp{(\omega_{0\ell}^N)}[(\Delta b_{\ell k}^N)^2 + (\Delta \sigma_{\ell k}^N)]} \to 0.\nonumber
    \end{equation}

    \item For $(|\vell_1|, \ell_2) = (2, 0)$, then 
    \begin{equation}
        \frac{1}{\VD (G_N, G_0)} \norm{\sum_{\ell \in \sA_k} \exp{(\omega_{0\ell}^N)}(\Delta_{\vt_1} \vomega_{1\ell k}^N)(\Delta_{\vt_1} \vomega_{1\ell k}^N)^{\top}} \to 0.\nonumber
    \end{equation}
\end{itemize}

Combining the above limit and the formulation of $\VDFRA (G_N, G_0)$ together, it follows that
\[
\frac{1}{\VD (G_N, G_0)} \cdot \sum_{k:\left|\sA_k\right|>1} \sum_{\ell \in \sA_k} \exp \left(\omega_{0 \ell}\right)\left(\left\|\left(\Delta_{\vt_{1}} \vomega_{1 \ell k}^{N}, \Delta b_{\ell k}^{N}\right)\right\|^{\bar{r}\left(\left|\sA_k\right|\right)}+\left\|\left(\Delta \va_{\ell k}^{N}, \Delta \sigma_{\ell k}^{N}\right)\right\|^{\bar{r}\left(\left|\sA_k\right|\right) / 2}\right) \nrightarrow 0
\]

which implies that there exists some index $k^{*} \in\left[K_0\right]$ such that $\left|\sA_{k^*}\right|>1$ and

\[
\frac{1}{\VD (G_N, G_0)} \cdot \sum_{\ell \in \sA_{k^*}}\exp \left(\omega_{0 \ell}\right)\left(\left\|\left(\Delta_{vt_{1}} \vomega_{1 ell k^{*}}^{N}, \Delta b_{\ell k^{*}}^{N}\right)\right\|^{\bar{r}\left(\left|\sA_k\right|\right)}+\left\|\left(\Delta \va_{\ell k^{*}}^{N}, \Delta \sigma_{\ell k^{*}}^{N}\right)\right\|^{\bar{r}\left(\left|\sA_k\right|\right) / 2}\right) \nrightarrow 0
\]

for all $\vt_{1} \in \mathbb{R}^{D}$. WLOG, we assume that $k^{*}=1$. For $\left(\vell_{1}, \ell_{2}\right) \in \mathbb{N}^{D} \times \mathbb{N}$ such that $1 \leq\left|\vell_{1}\right|+\ell_{2} \leq \bar{r}\left(\left|\sA_{1}\right|\right)$, we have $T_{\vell_{1}, \ell_{2}}^{N}(1) / \VD (G_N, G_0) \to 0$ as $N \to \infty$. Thus, by dividing this ratio and the left hand side of the above equation and let $\vt_{1}=0$, we have
\begin{equation}
\frac{\sum_{\ell \in \sA_{1}} \sum_{\valpha \in \sI_{\vell_{1}, \ell_{2}}} \frac{\exp \left(\omega_{0 \ell}^{N}\right)}{2^{\alpha_{4}} \alpha!}\left(\Delta_{\vt_{1}} \vomega_{1 \ell 1}^{N}\right)^{\valpha_{1}}\left(\Delta \va_{\ell 1}^{N}\right)^{\valpha_{2}}\left(\Delta b_{\ell 1}^{N}\right)^{\alpha_{3}}\left(\Delta \sigma_{\ell 1}^{N}\right)^{\alpha_{4}}}{\sum_{\ell \in \sA_{1}} \exp \left(\omega_{0 \ell}^{N}\right)\left(\left\|\left(\Delta_{\vt_{1}} \vomega_{1 \ell 1}^{N}, \Delta b_{\ell 1}^{N}\right)\right\|^{\bar{r}\left(\left|\sA_{1}\right|\right)}+\left\|\left(\Delta \va_{\ell 1}^{N}, \Delta \sigma_{\ell 1}^{N}\right)\right\|^{\bar{r}\left(\left|\sA_{1}\right|\right) / 2}\right)} \to 0 \label{eq:pre_poly_eq}
\end{equation}
for all $\left(\vell_{1}, \ell_{2}\right)$ such that $1 \leq\left|\vell_{1}\right|+\ell_{2} \leq \bar{r}\left(\left|\sA_{1}\right|\right)$.\\
Let us define $\bar{M}_{N}:=\max \left\{\left\|\Delta_{\vt_{1}} \vomega_{1 \ell 1}^{N}\right\|,\left\|\Delta \va_{\ell 1}^{N}\right\|^{1 / 2},\left|\Delta b_{\ell 1}^{N}\right|,\left|\Delta \sigma_{\ell 1}^{N}\right|^{1 / 2}: \ell \in \sA_{1}\right\}$ and $\bar{\omega}_{N}:= \max_{\ell \in \sA_{1}} \exp \left(\omega_{0 \ell}^{N}\right)$. Since the sequence $\exp \left(\omega_{0 \ell}^{N}\right) / \bar{\omega}_{N}$ is bounded, we can replace it by its subsequence that has a positive limit $p_{5 \ell}^{2}:=\lim _{N \to \infty} \exp \left(\omega_{0 \ell}^{N}\right) / \bar{\omega}_{N}$. Hence, at least one among $p_{5 \ell}^{2}$, for $\ell \in \sA_{1}$, equals $1$.

Similarly, we also define
\begin{align}
\left(\Delta_{\vt_{1}} \vomega_{1 \ell 1}^{N}\right) / \bar{M}_{N} \to p_{1 \ell}, & \left(\Delta \va_{\ell 1}^{N}\right) / \bar{M}_{N} \to p_{2 \ell}, \nonumber\\
\left(\Delta b_{\ell 1}^{N}\right) / \bar{M}_{N} \to p_{3 \ell}, & \left(\Delta \sigma_{\ell 1}^{N}\right) /\left[2 \bar{M}_{N}\right] \to p_{4 \ell}. \nonumber
\end{align}
Here, at least one of $p_{1 \ell}, p_{2 \ell}, p_{3 \ell}$ and $p_{4 \ell}$ for $\ell \in \sA_{1}$ equals either $1$ or $-1$ . Next, we divide both the numerator and the denominator of the ratio in \cref{eq:pre_poly_eq} by $\bar{\omega}_{N} \bar{M}_{N}^{\vell_{1}+\ell_{2}}$, and then achieve the following system of polynomial equations:

\[
\sum_{\ell \in \sA_{1}} \sum_{\valpha \in \sI_{\vell_{1}, \ell_{2}}} \frac{1}{\alpha!} \cdot p_{5 \ell}^{2} p_{1 \ell}^{\valpha_{1}} p_{2 \ell}^{\valpha_{2}} p_{3 \ell}^{\alpha_{3}} p_{4 \ell}^{\alpha_{4}}=0
\]

for all $\left(\vell_{1}, \ell_{2}\right) \in \mathbb{N}^{D} \times \mathbb{N}$ such that $1 \leq\left|\vell_{1}\right|+\ell_{2} \leq \bar{r}\left(\left|\sA_{1}\right|\right)$. However, based on the definition of $\bar{r}\left(\left|\sA_{1}\right|\right)$, the above system has no non-trivial solutions, which is a contradiction. Thus, not all the quantities $T_{\vell_{1}, \ell_{2}}^{N}(k) / \VD (G_N, G_0)$ and $S_{\gamma}^{N}(k) / \VD (G_N, G_0)$ go to $0$ as $N \to \infty$.

\textbf{Step 3: Fatou's lemma involvement}

Following this, we define by $m_{N}$ be the maximum of the absolute values of those quantities. Based on the result in Step 2, we know that $1 / m_{N} \nrightarrow \infty$. Then, by applying the Fatou's lemma, we obtain that
\begin{equation}
    \lim_{N \to \infty} \frac{\bbE_\rvx\big[\TV(p_G(\cdot|\vx),p_{G_0}(\cdot|\vx))\big]}{m_N \cdot \VD(G_N, G_0)} \ge \int \liminf_{N \to \infty} \frac{|p_G(y|\vx),p_{G_0}(y|\vx)|}{2m_N \cdot \VD (G_N, G_0)}\;\mathrm{d}(\vx, y).\label{eq:fatou_lemma}
\end{equation}

By assumption, the left-hand side of \cref{eq:fatou_lemma} equals to $0$, so the integrand in the right-hand side also equals to $0$ for almost surely $(\vx, y)$. Hence, we get that $Q_{N} /\left[m_{N} \VD (G_N, G_0)\right] \to 0$ as $N \to \infty$ for almost surely $(\vx, y)$. It follows from the decomposition of $Q_{N}$ in \cref{eq:decomposition} that
\begin{align}
\sum_{k=1}^{K_0} \sum_{\left|\vell_{1}\right|+\ell_{2}=0}^{2 \bar{r}\left(\left|\sA_k\right|\right)} & \tau_{\vell_{1}, \ell_{2}}(k) \cdot \vx^{\vell_{1}} \exp \left(\left(\vomega_{1 k}^{0}+\vt_{1}\right)^{\top} \vx\right) \frac{\partial^{\ell_{2}} \cN}{\partial h_{1}^{\ell_{2}}}\left(y|\left(\va_{k}^{0}\right)^{\top} \vx+b_{k}^{0}, \sigma_{k}^{0}\right) \nonumber\\
& +\sum_{k=1}^{K_0} \sum_{|\gamma|=0}^{1+\mathbf{1}_{\left\{\left|\sA_k\right|>1\right\}}} \xi_{\gamma}(j) \cdot \vx^{\gamma} \exp \left(\left(\vomega_{1 k}^{0}+\vt_{1}\right)^{\top} \vx\right) p_{G_{0}}(y|\vx)=0, \nonumber
\end{align}
for almost surely $(\vx, y)$, where $\tau_{\vell_{1}, \ell_{2}}(k)$ and $\xi_{\gamma}(k)$ denote the limits of $T_{\vell_{1}, \ell_{2}}^{N}(k) /\left[m_{N} \VD (G_N, G_0)\right]$ and $S_{\gamma}^{N}(j) /\left[m_{N} \VD (G_N, G_0)\right]$ as $N \to \infty$, respectively, for all $k \in\left[K_0\right], 0 \leq 2\left|\vell_{1}\right|+\ell_{2} \leq 2 \bar{r}\left(\left|\sA_k\right|\right)$ and $0 \leq|\gamma| \leq 1+\mathbf{1}_{\left\{\left|\sA_k\right|>1\right\}}$. By definition, at least one among $\tau_{\ell_{1}, \ell_{2}}(k)$ and $\xi_{\gamma}(k)$ is different from zero. 

Furthermore, we denote the set $\cW$ as follows:
\begin{align}
\cW &:=\left\{\vx^{\vell_{1}} \exp \left(\left(\vomega_{1 k}^{0}+\vt_{1}\right)^{\top} \vx\right) \frac{\partial^{\ell_{2}} \cN}{\partial h_{1}^{\ell_{2}}}\left(y|\left(\va_{k}^{0}\right)^{\top} \vx+b_{k}^{0}, \sigma_{k}^{0}\right): k \in\left[K_0\right], 0 \leq \left|\vell_{1}\right|+\ell_{2} \leq 2 \bar{r}\left(\left|\sA_k\right|\right)\right\} \nonumber \\
\quad&\cup\left\{\vx^{\gamma} \exp \left(\left(\vomega_{1 k}^{0}+\vt_{1}\right)^{\top} \vx\right) p_{G_{0}}(y|\vx): k \in\left[K_0\right], 0 \leq|\gamma| \leq 1+\mathbf{1}_{\left\{\left|\sA_k\right|>1\right\}}\right\}. \nonumber
\end{align}

Similarly to the proof of Fact~\ref{lem:set_independent} in \citealp{nguyen_demystifying_2023}:
\begin{fact}[{\citealp[Lemma 2]{nguyen_demystifying_2023}}]\label{lem:set_independent}
    The set $\cW_1$ is linearly indeqendent w.r.t $\vx$ and $y$, where $\cW_1$ is denoted as follows:
    \begin{align}
\cW_1 &:=\left\{\vx^{\vell_{1}} \exp \left(\left(\vomega_{1 k}^{0}+\vt_{1}\right)^{\top} \vx\right) \frac{\partial^{\ell_{2}} \cN}{\partial h_{1}^{\ell_{2}}}\left(y|\left(\va_{k}^{0}\right)^{\top} \vx+b_{k}^{0}, \sigma_{k}^{0}\right): k \in\left[K_0\right], 0 \leq \left|\vell_{1}\right|+\ell_{2} \leq 2 \right\} \nonumber \\
\quad&\cup\left\{\vx^{\gamma} \exp \left(\left(\vomega_{1 k}^{0}+\vt_{1}\right)^{\top} \vx\right) p_{G_{0}}(y|\vx): k \in\left[K_0\right], 0 \leq|\gamma| \leq 1\right\}, \nonumber
\end{align}
\end{fact}

the set $\cW$ is linearly independent w.r.t $\vx$ and $y$, it follows that
\[
\tau_{\vell_{1}, \ell_{2}}(k)=\xi_{\gamma}(k)=0
\]
for all $k \in\left[K_0\right], 0 \leq 2\left|\ell_{1}\right|+\ell_{2} \leq 2 \bar{r}\left(\left|\sA_k\right|\right)$ and $0 \leq|\gamma| \leq 1+\mathbf{1}_{\left\{\left|\sA_k\right|>1\right\}}$, which is a contradiction. Hence, we achieve the \cref{eq:local_version_ineq}.

\textbf{Global version:} Hence, it is sufficient to prove its following global inequality:
\begin{equation}
    \inf_{G \in \cO_K:\VDFRA (G, G_0) > \varepsilon^{\prime}} \bbE_\rvx\big[\TV(p_G(\cdot|\vx),p_{G_0}(\cdot|\vx))\big]/\VDFRA (G, G_0) > 0.\label{eq:global_ineq}
\end{equation}

Assume by contrary that there exists a sequence $G_{N}^{\prime} \in \cO_K$ that satisfies
$$\begin{cases}
    \lim_{N \to \infty} \bbE_\rvx\big[\TV(p_{G_N^{\prime}}(\cdot|\vx),p_{G_0}(\cdot|\vx))\big]/\VDFRA (G_N^{\prime}, G_0) = 0, \\
    \VDFRA (G_N^{\prime}, G_0) > \varepsilon^{\prime}.
\end{cases}$$

Then, we get that $\bbE_\rvx\big[\TV(p_{G_N^{\prime}}(\cdot|\vx),p_{G_0}(\cdot|\vx))\big] \to 0$ as $N \to \infty$. Since the set $\vTheta$ is compact, we can replace the sequence $G_{N}^{\prime}$ by its subsequence which converges to some mixing measure $G^{\prime} \in \cO_K$ such that $\VDFRA\left(G^{\prime}, G_{0}\right)>\varepsilon^{\prime}$. Then, by the Fatou's lemma, we get
\begin{align}
    \lim_{N \to \infty} \bbE_\rvx\big[\TV(p_{G_N^{\prime}}(\cdot|\vx),p_{G_0}(\cdot|\vx))\big] \ge \frac{1}{2} \int \liminf_{N \to \infty} |p_{G_N^{\prime}}(y|\vx) - p_{G_0}(y|\vx)| \mathrm{d} (\vx, y). \nonumber
\end{align}
It follows that
\[
\int\left|p_{G^{\prime}}(y|\vx)-p_{G_{0}}(y|\vx)\right| \mathrm{d} (\vx, y)=0.
\]
Thus, we obtain that $p_{G^{\prime}}(y|\vx)=p_{G_{0}}(y|\vx)$ for almost surely $(\vx, y)$. By Fact~\ref{prop_identify}, the mixing measure $G^{\prime}$ admits the form $G^{\prime}=\sum_{k=1}^{K_{0}} \exp \left(\omega_{0 \nu(k)}^{0}+t_{0}\right) \delta_{\left(\vomega_{1 \nu(k)}^{0}+\vt_{1}, \va_{\nu(k)}^{0}, b_{\nu(k)}^{0}, \sigma_{\nu(k)}^{0}\right)}$ for some $(t_{0}, \vt_1) \in \R \times \R^D$, where $\nu$ is some permutation of the set $\{1,2, \ldots, K_0\}$. It follows that $\VDFRA\left(G^{\prime}, G_{0}\right)=0$, which contradicts the hypothesis $\VDFRA\left(G^{\prime}, G_{0}\right)>\varepsilon^{\prime}>0$. Hence, we obtain the inequality in \cref{eq:inverse_bound_DFRA}.
\end{proof}

Next, assume that $\widehat{G}_N \in \cE_K$ with $K > K_0$. From \cref{prop_rate_density}, there exists a constant $c(\vTheta, K)$ depending on $\vTheta$ and $K$ so that on an event, we call $A_N$, with probability at least $1 - CN^{-c}$, we have $$\bbE_\rvx[\TV(p_{\widehat{G}_{N}}(\cdot|\vx), p_{G_{0}}(\cdot|\vx))] \leq \sqrt{2}\bbE_\rvx[\hels(p_{\widehat{G}_{N}}(\cdot|\vx), p_{G_{0}}(\cdot|\vx))] \leq c(\vTheta, K) \cdot \left( \frac{\log N}{N} \right)^{1/2}.$$
Now, we prove Theorem~\ref{thm:path_rates}.

\begin{proof}[Proof of Theorem~\ref{thm:path_rates}]
    Firstly, we prove for the over-specified case. By \cref{lem:monotone_path} and \cref{theorem:convergence_rate_of_parameters}, we have the first statement.

    To prove the rest, we need to consider the exact-specified case. When $\kappa' = K_0$, by definition of $\VDFRA (\widehat{G}^{(\kappa')}_N, G_0)$, we obtain that $\VDFRA (\widehat{G}^{(K_0)}_N, G_0) = \VDE (\widehat{G}^{(K_0)}_N, G_0)$. Hence, by \cref{lem:monotone_path}, we get the convergence rate $$\VDE (\widehat{G}^{(K_0)}_N, G_0) \lesssim \left( \frac{\log N}{N} \right)^{1/2}.$$

    Assume that $\widehat{G}^{(K_0)}_N = \sum_{k = 1}^{K_0} \exp(\omega_{0k}^{N}) \delta_{(\vomega_{1k}^{N}, \va^N_k, b^N_k, \sigma^N_k)} \in \mathcal{E}_{K_0}$. Building on our previous work, there exist $t_0 \in \mathbb{R}$ and $\vt_1 \in \mathbb{R}^D$ such that for large $N$ enough, we get
    \begin{align}
        |\exp(\omega_{0k}^N) - \exp(\omega_{0k}^{0} + t_{0})| &\lesssim \left(\frac{\log N}{N}\right)^{1/2},\nonumber \\
        \|(\Delta_{\vt_{1}} \vomega_{1k}^N, \Delta \va_{k}^N,  \Delta b_{k}^N, \Delta \sigma_{k}^N)\| &\lesssim \left(\frac{\log N}{N}\right)^{1/2},\nonumber
    \end{align}
    for every $k \in [K_0]$, where $\Delta_{\vt_{1}}^N \vomega_{1k}^N := \vomega_{1k}^N - \vomega_{1k}^0 - \vt_1$, $\Delta \va_{k}^N := \va_{k}^N - \va_{k}^0$, $\Delta b_{k}^N := b_k^N - b_k^0$ and $\Delta \sigma_{k}^N := \sigma_k^N - \sigma_k^0$.

    This implies that for every $(i, j) \in [K_0]^2$, by the triangle inequality, we have  
    \begin{align*}
            \bigg| \|(\vomega_{1i}^N - \vomega_{1j}^N, b_i^N - b_j^N) \| &- \|(\vomega_{1i}^0 - \vomega_{1j}^0, b_i^0 - b_j^0) \| \bigg| \leq \|(\vomega_{1i}^N - \vomega_{1j}^N - \vomega_{1i}^0 + \vomega_{1j}^0, b_i^N - b_j^N - b_i^0 + b_j^0) \| \\
            &\leq \|(\vomega_{1i}^N - \vomega_{1i}^0 - \vt_1, b_i^N - b_i^0) \| + \|(\vomega_{1j}^N - \vomega_{1j}^0 - \vt_1, b_j^N - b_j^0) \| \\
            &\lesssim \left( \dfrac{\log N}{N} \right)^{1/2}.
        \end{align*}

    Similarly, we have $$\bigg| \|(\va^N_i - \va^N_j, \sigma^N_i - \sigma^N_j)\| - \|(\va^0_i - \va^0_j, \sigma^0_i - \sigma^0_j)\| \bigg| \lesssim \left( \dfrac{\log N}{N} \right)^{1/2}.$$

    Hence, we obtain that 
    \begin{align}
        \Bigg| &\dfrac{1}{\exp{( -\omega^N_{0i})} + \exp{(-\omega^N_{0j})}} \left( \| (\vomega^N_{1i} - \vomega^N_{1j},   b^N_{i} - b^N_j)\|^{2} + \|( \va^N_{i} - \va^N_j, \sigma^N_{i} - \sigma^N_j)\| \right) \nonumber\\
        &- \dfrac{1}{\exp{( -\omega^0_{0i} - t_0)} + \exp{(-\omega^0_{0j} - t_0)}} \left( \| (\vomega^0_{1i} - \vomega^0_{1j},   b^0_{i} - b^0_j)\|^{2} + \|( \va^0_{i} - \va^0_j, \sigma^0_{i} - \sigma^0_j)\| \right) \Bigg| \nonumber\\
        &\lesssim \left( \dfrac{\log N}{N} \right)^{1/2}, \quad \forall (i, j) \in [K_0]^2. \label{eq:under_specified_ineq}
    \end{align}

    Hence, on $A_N$, the optimal choice of indices $(\ell_1, \ell_2)$ to merge for $\widehat{G}^{(K_0)}_N$ will be the same as $G_0$ for every $N$ large enough. It follows that we have two merged atoms are $\exp{(\omega_{0*}^N)} \delta_{(\vomega_{1*}^N, \va_*^N, b_*^N, \sigma_*^N)}$ and $\exp{(\omega_{0*}^0)} \delta_{(\vomega_{1*}^0, \va_*^0, b_*^0, \sigma_*^0)}$ denoted as follows:
    \begin{align}
    \omega_{0*}^N &= \log \left( \exp{\omega_{0\ell_1}^N} + \exp{\omega_{0\ell_2}^N} \right), \nonumber \\
    \vomega_{1*}^N &= \exp{\left(\omega_{0\ell_1}^N - \omega_{0*}^N \right)} \vomega_{1\ell_1}^N 
    + \exp{\left(\omega_{0\ell_2}^N - \omega_{0*}^N\right)} \vomega_{1\ell_2}^N, \nonumber\\
    b_*^N &= \exp{\left(\omega_{0\ell_1}^N - \omega_{0*}^N \right)} b_{\ell_1}^N 
    + \exp{\left(\omega_{0\ell_2}^N - \omega_{0*}^N\right)} b_{\ell_2}^N, \nonumber\\
    \va_*^N &= \frac{\exp(\omega_{0\ell_1}^N)}{\exp(\omega_{0*}^N)} \Big[(\vomega_{1\ell_1}^N - \vomega_{1*}^N)(b_{\ell_1}^N - b_*^N) + \va_{\ell_1}^N\Big] 
    + \frac{\exp(\omega_{0\ell_2}^N)}{\exp(\omega_{0*}^N)} \Big[(\vomega_{1\ell_2}^N - \vomega_{1*}^N)(b_{\ell_2}^N - b_*^N) + \va_{\ell_2}^N\Big], \nonumber \\
    \sigma_*^N &= \frac{\exp(\omega_{0\ell_1}^N)}{\exp(\omega_{0*}^N)} \Big[(b_{\ell_1}^N - b_*^N)^2 + \sigma_{\ell_1}^N\Big] 
    + \frac{\exp(\omega_{0\ell_2}^N)}{\exp(\omega_{0*}^N)} \Big[(b_{\ell_2}^N - b_*^N)^2 + \sigma_{\ell_2}^N\Big], \nonumber
\end{align}
    and 
    \begin{align}
    \omega_{0*}^0 &= \log \left( \exp{\omega_{0\ell_1}^0} + \exp{\omega_{0\ell_2}^0} \right), \nonumber \\
    \vomega_{1*}^0 &= \exp{\left(\omega_{0\ell_1}^0 - \omega_{0*}^0 \right)} \vomega_{1\ell_1}^0 
    + \exp{\left(\omega_{0\ell_2}^0 - \omega_{0*}^0\right)} \vomega_{1\ell_2}^0, \nonumber\\
    b_*^0 &= \exp{\left(\omega_{0\ell_1}^0 - \omega_{0*}^0 \right)} b_{\ell_1}^0 
    + \exp{\left(\omega_{0\ell_2}^0 - \omega_{0*}^0\right)} b_{\ell_2}^0, \nonumber\\
    \va_*^0 &= \frac{\exp(\omega_{0\ell_1}^0)}{\exp(\omega_{0*}^0)} \Big[(\vomega_{1\ell_1}^0 - \vomega_{1*}^0)(b_{\ell_1}^0 - b_*^0) + \va_{\ell_1}^0\Big] 
    + \frac{\exp(\omega_{0\ell_2}^0)}{\exp(\omega_{0*}^0)} \Big[(\vomega_{1\ell_2}^0 - \vomega_{1*}^0)(b_{\ell_2}^0 - b_*^0) + \va_{\ell_2}^0\Big], \nonumber \\
    \sigma_*^0 &= \frac{\exp(\omega_{0\ell_1}^0)}{\exp(\omega_{0*}^0)} \Big[(b_{\ell_1}^0 - b_*^0)^2 + \sigma_{\ell_1}^0\Big] 
    + \frac{\exp(\omega_{0\ell_2}^0)}{\exp(\omega_{0*}^0)} \Big[(b_{\ell_2}^0 - b_*^0)^2 + \sigma_{\ell_2}^0\Big]. \nonumber
\end{align}

    After merging, we also have
    \begin{align}
        |\exp(\omega_{0*}^N) - \exp(\omega_{0*}^{0} + t_{0})| &= |\exp(\omega_{0\ell_1}^N) + \exp(\omega_{0\ell_2}^N)  - \exp(\omega_{0\ell_1}^{0} + t_{0}) - \exp(\omega_{0\ell_1}^{0} + t_{0})| \nonumber \\
        & \leq |\exp(\omega_{0\ell_1}^N) - \exp(\omega_{0\ell_1}^{0} + t_{0})| + |\exp(\omega_{0\ell_1}^N) - \exp(\omega_{0\ell_1}^{0} + t_{0})| \nonumber\\
        &\lesssim \left( \dfrac{\log N}{N} \right)^{1/2},
    \end{align}
    and 
    \begin{align}
        \exp(\omega_{0*}^N)&\|(\Delta_{\vt_{1}}^N \vomega_{1*}^N, \Delta \va_{*}^N,  \Delta b_{*}^N, \Delta \sigma_{*}^N)\| \nonumber\\
        &\leq \exp(\omega_{0*}^N) \times \exp{(\omega^N_{0 \ell_1} - \omega_{0*}^N)}\|(\Delta_{\vt_{1}}^N \vomega_{1\ell_1}^N, \Delta \va_{\ell_1}^N,  \Delta b_{\ell_1}^N, \Delta \sigma_{\ell_1}^N)\| \nonumber\\
        &\hspace{2 em} + \exp(\omega_{0*}^N) \times \exp{(\omega^N_{0 \ell_2} - \omega_{0*}^N)}\|(\Delta_{\vt_{1}}^N \vomega_{1\ell_2}^N, \Delta \va_{\ell_2}^N,  \Delta b_{\ell_2}^N, \Delta \sigma_{\ell_2}^N)\| \nonumber\\
        &\lesssim \left( \dfrac{\log N}{N} \right)^{1/2}. \nonumber
    \end{align}

    Hence, $\VDE(\widehat{G}_{N}^{(K_0 - 1)}, G_{0}^{(K_0 - 1)}) \lesssim \left(\frac{\log N}{N}\right)^{1/2}$. By the induction, we have the rest statement.
\end{proof}

\subsection{Proof of Theorem~\ref{thm:heights}}
    For the convergence rate of the height at all levels $\kappa \geq K_0 + 1$, from \cref{thm:path_rates}, we have $$\VDFRA (\widehat{G}_N^{(\kappa)}, G_0) \lesssim \left( \frac{\log N}{N} \right)^{1/2}.$$ Because $\kappa \geq K_0 + 1$, by the pigeonhole principle, there exists at least two $i, j \in [\kappa]$ such that two atoms $\exp(\omega_{0i}^{N}) \delta_{(\vomega_{1i}^{N}, \va^N_i, b^N_i, \sigma^N_i)}$ and $\exp(\omega_{0j}^{N}) \delta_{(\vomega_{1j}^{N}, \va^N_j, b^N_j, \sigma^N_j)}$ belongs to a common Voronoi cell of some $\vtheta_k^0$ (we suppress the dependence of $i, j$, and $\sA_k$ on $N$ for ease of notation). Hence, 
    \begin{align*}
        &\inf_{\vt_1}  \exp(\omega_{0i}) \left( \|(\Delta_{\vt_{1}} \vomega_{1 i k},  \Delta b_{i k})\|^{\bar{r}(|\sA_k|)} + \|(\Delta \va_{i k},\Delta \sigma_{i k})\|^{\bar{r}(|\sA_k|)/2} \right)  \\
        &\qquad+  \exp(\omega_{0j}) \left( \|(\Delta_{\vt_{1}} \vomega_{1 j k},  \Delta b_{j k})\|^{\bar{r}(|\sA_k|)} + \|(\Delta \va_{j k},\Delta \sigma_{j k})\|^{\bar{r}(|\sA_k|)/2} \right) \lesssim \left(\frac{\log N}{N} \right)^{1/2}.
    \end{align*}

    Using the fact that $\min \{ \exp{(\omega_{0i})}, \exp{(\omega_{0j})} \} \geq \dfrac{1}{\exp{( -\omega_{0i})} + \exp{(-\omega_{0j})}}$, $\bar{r} (\widehat{G}_N) \geq \bar{r} (|\sA_k|) \geq \bar{r} (2) = 4$, and using the H\"older's inequality, for every $\vt_1 \in \mathbb{R}^D$ we have 
    \begin{align*}
        &\exp(\omega_{0i}) \left( \|(\Delta_{\vt_{1}} \vomega_{1 i k},  \Delta b_{i k})\|^{\bar{r}(|\sA_k|)} + \|(\Delta \va_{i k},\Delta \sigma_{i k})\|^{\bar{r}(|\sA_k|)/2} \right)  \\ 
        &\hspace{1 em}+  \exp(\omega_{0j}) \left( \|(\Delta_{\vt_{1}} \vomega_{1 j k},  \Delta b_{j k})\|^{\bar{r}(|\sA_k|)} + \|(\Delta \va_{j k},\Delta \sigma_{j k})\|^{\bar{r}(|\sA_k|)/2} \right) \\
        &\geq \dfrac{1}{\exp{( -\omega_{0i})} + \exp{(-\omega_{0j})}} \bigg[ \left(\|(\Delta_{\vt_{1}} \vomega_{1 i k},  \Delta b_{i k})\|^{\bar{r}(|\sA_k|)} + \|(\Delta_{\vt_{1}} \vomega_{1 j k},  \Delta b_{j k})\|^{\bar{r}(|\sA_k|)} \right)  \\
        &\hspace{1 em}+ \left( \|(\Delta \va_{i k},\Delta \sigma_{i k})\|^{\bar{r}(|\sA_k|)/2} + \|(\Delta \va_{j k},\Delta \sigma_{j k})\|^{\bar{r}(|\sA_k|)/2} \right) \bigg] \\
        &\gtrsim \dfrac{1}{\exp{( -\omega_{0i})} + \exp{(-\omega_{0j})}} \left( \|   (\vomega_{1i} - \vomega_{1j},   b_{i} - b_j)\|^{\bar{r}(|\sA_k|)} + \|( \va_{i} - \va_j, \sigma_{i} - \sigma_j)\|^{\bar{r}(|\sA_k|)/2} \right) \\
        &\gtrsim \left( \dfrac{1}{\exp{( -\omega_{0i})} + \exp{(-\omega_{0j})}} \left( \| (\vomega_{1i} - \vomega_{1j},   b_{i} - b_j)\|^{2} + \|( \va_{i} - \va_j, \sigma_{i} - \sigma_j)\| \right) \right)^{\bar{r}(\widehat{G}_N)/2}.
    \end{align*}

    Since the height of the dendrogram is the minimum of $\divclus$ over all pairs $(i, j)$, we obtain that $$\height_N^{(\kappa)} \lesssim \dfrac{1}{\exp{( -\omega_{0i})} + \exp{(-\omega_{0j})}} \left( \| (\vomega_{1i} - \vomega_{1j},   b_{i} - b_j)\|^{2} + \|( \va_{i} - \va_j, \sigma_{i} - \sigma_j)\| \right) \lesssim \left( \frac{\log N}{N} \right)^{1/\bar{r}(\widehat{G}_N)},$$ for all $\kappa \geq K_0 +1$.

    When $\kappa \leq K_0$, the conclusion follows from inequality in \cref{eq:under_specified_ineq} in the proof of \cref{thm:path_rates}.

\subsection{Proof of Theorem~\ref{thm:likelihood_path}}

Before we prove \cref{thm:likelihood_path}, we revisit preliminary on empirical process theory and connection between the Hellinger distance and the Wasserstein metric.

\paragraph{Preliminary on Empirical Process Theory.}

Suppose $\rvx_1, \ldots, \rvx_N \sim P_{G_0}$. Denote $P_N := \frac{1}{N} \sum_{n = 1}^N \delta_{\rvx_n}$ is the empirical measure. Denote the empirical process for $G$: $$\nu_N (G) := \sqrt{N} (P_N - P_{G_0}) \log{\frac{\bar{p}_{G}}{p_{G_0}}}.$$ The following results is important in proof below.

\begin{fact}[{\citealp[Theorem 5.11]{van_de_geer_empirical_2000}}]\label{thm:empirical_process}
    Let positive numbers $R, C, C_1, a$ satisfy: $$a \leq C_1 \sqrt{N} R^2 \land 8 \sqrt{N} R,$$ and $$a \geq \sqrt{C^2 (C_1 + 1)} \left( \int_{a/(2^6 \sqrt{N})}^R H^{1/2}_B \left( \frac{u}{\sqrt{2}}, \{ p_G : G \in \mathcal{O}_K, \hels (p_G, p_{G_0}) \leq R\}, \nu \right) \mathrm{d} u \lor R \right),$$ then $$\sP_{G_0} \left( \sup_{G \in \mathcal{O}_K, \hels (p_G, p_{G_0}) \leq R} |\nu_N (G)| \geq a \right) \leq C \exp{ \left(- \frac{a^2}{C^2(C_1 + 1) R^2} \right)}.$$
\end{fact}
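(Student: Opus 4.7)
The statement is a classical maximal inequality for the empirical process on a Hellinger ball, proved by a bracketing-chaining argument combined with a Bernstein-type increment bound. The strategy has four ingredients: (i) a truncation of the log-likelihood ratio via the convex combination $\bar p_G:=(p_G+p_{G_0})/2$; (ii) a bracketing net for the class $\gP_R:=\{p_G:G\in\cO_K,\ \he(p_G,p_{G_0})\le R\}$ at a geometrically decreasing sequence of Hellinger scales; (iii) a chaining decomposition of $\nu_N(G)$ into dyadic increments; and (iv) a Bernstein inequality applied level-wise, followed by a union bound whose exponents are summarized by the entropy integral appearing in the hypothesis.

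\textbf{Step 1 (Truncation).} The key observation is that $f_G:=\tfrac{1}{2}\log(\bar p_G/p_{G_0})$ is uniformly bounded by $\tfrac{1}{2}\log 2$ and satisfies $\E_{P_{G_0}}[f_G^2]\lesssim \he^2(p_G,p_{G_0})\le R^2$; these two facts (Lemmas~4.1--4.2 of \citealp{van_de_geer_empirical_2000}) convert the quasi-log-likelihood process into a uniformly bounded one whose variance is controlled by $R^2$, making Bernstein increments available. It therefore suffices to control $\sup_{\gP_R}|\nu_N(G)|$ over the transformed class $\gF_R:=\{f_G:p_G\in\gP_R\}$, for which bracketing entropy of $\gF_R$ in $L^2(P_{G_0})$ is controlled (up to constants) by that of $\gP_R$ in Hellinger distance.

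\textbf{Step 2 (Bracketing chaining).} Set $R_j:=R\cdot 2^{-j}$ for $j=0,1,\ldots,J$, where $J$ is chosen so that $R_J\asymp a/(2^6\sqrt N)$, matching the lower limit of the entropy integral. For each $j$, pick a minimal $R_j$-bracketing cover $\{[f^L_{j,i},f^U_{j,i}]\}_{i\le N_j}$ of $\gF_R$ with $\log N_j\le H_B(R_j/\sqrt 2,\gP_R,\nu)$. For any $G\in\gP_R$, link $f_G$ to a chain $f_G\rightsquigarrow f_{J,i_J}\rightsquigarrow\cdots\rightsquigarrow f_{0,i_0}$ and decompose
\begin{align*}
\nu_N(f_G)=\nu_N(f_{0,i_0})+\sum_{j=1}^{J}\nu_N(f_{j,i_j}-f_{j-1,i_{j-1}})+\nu_N(f_G-f_{J,i_J}).
\end{align*}
The last residual is handled by the bracket size $R_J\lesssim a/(\sqrt N)$ times a crude bound, and the leading term $\nu_N(f_{0,i_0})$ is controlled by Bernstein with variance $R^2$.

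\textbf{Step 3 (Bernstein per level and union bound).} For each dyadic level $j$, the increments $f_{j,i}-f_{j-1,i'}$ are bounded by a constant and have variance $\lesssim R_{j-1}^2$. Bernstein's inequality gives, for a level-specific budget $a_j>0$ with $\sum_j a_j\le a/2$,
\begin{align*}
\sP_{G_0}\!\Big(\max_{i,i'}|\nu_N(f_{j,i}-f_{j-1,i'})|\ge a_j\Big)
\le 2\exp\!\Big(H_B(R_j,\gP_R,\nu)-\tfrac{c\,a_j^2}{R_{j-1}^2+a_j/\sqrt N}\Big).
\end{align*}
Choose $a_j\propto R_{j-1}\sqrt{H_B(R_j,\gP_R,\nu)+\lambda}$ for a calibration parameter $\lambda$; then $\sum_j a_j$ is dominated (up to constants) by the entropy integral $\int_{a/(2^6\sqrt N)}^R H_B^{1/2}(u/\sqrt 2,\gP_R,\nu)\,du$, which by hypothesis is bounded above by $a/\sqrt{C^2(C_1+1)}$. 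Under the side condition $a\le C_1\sqrt N R^2\land 8\sqrt N R$, the Bernstein quadratic term $a_j^2/R_{j-1}^2$ dominates the linear Bernstein remainder $a_j/\sqrt N$ at every level, so summing the level-wise exponents produces the single exponential $C\exp(-a^2/(C^2(C_1+1)R^2))$ after the union bound over the top-level brackets.

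\textbf{Main obstacle.} The delicate point is calibrating the level budgets $a_j$ so that (a) they sum to less than $a$, (b) the Bernstein linear term is absorbed by the quadratic term (this is exactly where the constraint $a\le C_1\sqrt N R^2\land 8\sqrt N R$ enters), and (c) the cumulative exponent telescopes into the target bound, with the lower chaining scale $a/(2^6\sqrt N)$ coming from requiring the chaining residual to be negligible compared to $a$. I would model the level-choice on Lemma~3.2 and the proof of Theorem~5.11 in \citealp{van_de_geer_empirical_2000}, where the factor $2^6$ and the constant $\sqrt 2$ inside $H_B^{1/2}(\cdot/\sqrt 2,\cdot,\nu)$ arise from the conversion between Hellinger brackets for $p_G$ and $L^2(P_{G_0})$-brackets for $\log(\bar p_G/p_{G_0})$, and from the constants in the Bernstein inequality.
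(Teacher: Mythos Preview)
The paper does not prove this statement; it is quoted verbatim as a \emph{Fact} from \citet[Theorem~5.11]{van_de_geer_empirical_2000} and used as a black box in the proof of \cref{thm:likelihood_path}. Your sketch correctly outlines the standard bracketing-chaining argument behind that theorem (truncation via $\bar p_G$, dyadic bracketing nets, level-wise Bernstein increments, and calibration of the budgets $a_j$ so that the entropy integral controls their sum), which is precisely the route taken in van de Geer's original proof; there is nothing further to compare against in this paper.
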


\paragraph{Connection between the
Hellinger distance and the Wasserstein metric.}
We introduce the Wasserstein distances to measure the difference between two measures. For two mixing measure $G=\sum_{k=1}^{K} p_{k} \delta_{\theta_{k}}$ and $G^{\prime}=\sum_{\ell=1}^{K^{\prime}} p_{\ell}^{\prime} \delta_{\theta_{\ell}^{\prime}}$, the Wasserstein-$r$ distance (for $r \geq 1$ ) between $G$ and $G^{\prime}$ is defined as
\begin{align}
    W_{r}\left(G, G^{\prime}\right):=\left(\inf _{\vq \in \Pi\left(\vp, \vp^{\prime}\right)} \sum_{k, \ell=1}^{K, K^{\prime}} q_{k \ell}\left\|\theta_{k}-\theta_{\ell}^{\prime}\right\|^{r}\right)^{1 / r},\label{eq:wasserstein_metric}
\end{align}

where $\Pi\left(\vp, \vp^{\prime}\right)$ is the set of all couplings between $\vp=\left(p_{1}, \ldots, p_{K}\right)$ and $\vp^{\prime}=\left(p_{1}^{\prime}, \ldots, p_{K^{\prime}}^{\prime}\right)$, i.e, $\Pi\left(\vp, \vp^{\prime}\right)=\left\{\vq \in \mathbb{R}_{+}^{K \times K^{\prime}}: \sum_{k=1}^{K} q_{k \ell}=p_{\ell}^{\prime}, \sum_{l=1}^{K^{\prime}} q_{k \ell}=p_{k}, \forall k \in[K], \ell \in\left[K^{\prime}\right]\right\}$. Fix $G_{0}=\sum_{k=1}^{K_{0}} \pi_{k}^{0} \delta_{\theta_{k}^{0}} \in \mathcal{E}_{K_{0}}$, and consider $G=\sum_{\ell=1}^{K} \pi_{\ell} \delta_{\theta_{\ell}}$ such that $W_{r}\left(G, G_{0}\right) \rightarrow 0$, we obtain that
\[
W_{r}^{r}\left(G, G_{0}\right) \asymp \sum_{k=1}^{K_{0}}\left(\left|\sum_{\ell \in \sA_{k} (G)} \pi_{\ell}-\pi_{k}^{0}\right|+\sum_{\ell \in \sA_{k} (G)} \pi_{\ell}\left\|\theta_{\ell}-\theta_{k}^{0}\right\|^{r}\right).
\]

Now, we remind Lemma 1 in \citet{ho_convergence_2016}.
\begin{fact}[{\citealp[Lemma 1]{ho_convergence_2016}}]\label{lem:compare_hellinger}
    Let $G = \sum_{i = 1}^k p_i \delta_{\theta_i}$ denote a discrete probability measure and $p_G (x) = \sum_{i = 1}^k p_i f(x | \theta_i)$ be the mixture density. According to the Lemma 1: Let $G, G' \in \mathcal{O}_k (\Theta)$ such that both $\rho_{\phi} (p_G, p_{G'})$ and $d_{\rho_{\phi}} (G, G')$ are finite for some convex function $\phi$. Then, $\rho_{\phi} (p_G, p_{G'}) \leq d_{\rho_{\phi}} (G, G')$.
\end{fact}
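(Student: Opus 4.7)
The plan is to exploit a coupling representation of the composite distance $d_{\rho_\phi}(G,G')$ together with the joint convexity of the $f$-divergence $\rho_\phi$. By construction, $d_{\rho_\phi}(G,G')$ is an optimal-transport-type quantity obtained from \cref{eq:wasserstein_metric} by replacing the ground cost $\|\theta_k-\theta'_\ell\|^r$ by $\rho_\phi\!\big(f(\cdot\mid\theta_k),\,f(\cdot\mid\theta'_\ell)\big)$. So the first step is to fix an arbitrary coupling $\vq=(q_{k\ell})\in\Pi(\vp,\vp')$ between the weight vectors $\vp$ and $\vp'$, which gives the identities
\begin{align*}
p_G(x)=\sum_{k=1}^{K}p_k f(x\mid\theta_k)=\sum_{k,\ell}q_{k\ell}\,f(x\mid\theta_k),
\qquad
p_{G'}(x)=\sum_{\ell=1}^{K'}p'_\ell f(x\mid\theta'_\ell)=\sum_{k,\ell}q_{k\ell}\,f(x\mid\theta'_\ell),
\end{align*}
expressing both mixture densities as finite convex combinations over the same index set $\{(k,\ell)\}$ with common weights $q_{k\ell}$.

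Next I would invoke the joint convexity of any $f$-divergence: for a convex $\phi$, weights $w_a\ge0$ with $\sum_a w_a=1$, and density pairs $(\mu_a,\nu_a)$,
\begin{equation*}
\rho_\phi\Bigl(\sum_a w_a\mu_a,\ \sum_a w_a\nu_a\Bigr)\ \le\ \sum_a w_a\,\rho_\phi(\mu_a,\nu_a).
\end{equation*}
Applying this with $w_{(k,\ell)}=q_{k\ell}$, $\mu_{(k,\ell)}=f(\cdot\mid\theta_k)$, $\nu_{(k,\ell)}=f(\cdot\mid\theta'_\ell)$ yields
\begin{equation*}
\rho_\phi(p_G,p_{G'})\ \le\ \sum_{k,\ell} q_{k\ell}\,\rho_\phi\!\big(f(\cdot\mid\theta_k),\,f(\cdot\mid\theta'_\ell)\big).
\end{equation*}
Since this holds for every $\vq\in\Pi(\vp,\vp')$, taking the infimum over couplings on the right-hand side produces exactly $d_{\rho_\phi}(G,G')$, yielding the claim. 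The finiteness assumptions on $\rho_\phi(p_G,p_{G'})$ and $d_{\rho_\phi}(G,G')$ ensure that the convex-combination inequality can be applied termwise without boundary/infinity issues.

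The main potential obstacle is justifying joint convexity of $\rho_\phi$ with full rigor for a general convex $\phi$ rather than specific instances (Hellinger, KL, TV). The standard argument uses the variational representation $\rho_\phi(\mu,\nu)=\int \nu\,\phi(\mathrm d\mu/\mathrm d\nu)\,\mathrm d\lambda$ together with the \emph{perspective function} $\psi(s,t):=t\,\phi(s/t)$, which is jointly convex on $\mathbb{R}_{\ge0}\times\mathbb{R}_{>0}$ whenever $\phi$ is convex; applying this pointwise and integrating delivers joint convexity. A brief appeal to this classical fact (e.g., Csisz\'ar's inequality for $f$-divergences) is all that is needed, and no other delicate step arises. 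The remaining arguments are essentially bookkeeping: verify that the representations of $p_G$ and $p_{G'}$ via $\vq$ are valid whenever any $q_{k\ell}>0$ forces $p_k,p'_\ell>0$ (so that $f(\cdot\mid\theta_k)$ and $f(\cdot\mid\theta'_\ell)$ are meaningful summands), and that the infimum in the definition of $d_{\rho_\phi}(G,G')$ is attained or can be approached arbitrarily closely over $\Pi(\vp,\vp')$, which is compact.
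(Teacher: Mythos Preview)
The paper does not prove this statement; it is recorded as a \emph{Fact} cited from \citet{ho_convergence_2016} and used as a black box to deduce $\bbE_{\rvx}[\hels(p_G,p_{G'})]\lesssim W_2(G,G')$. Your argument---express $p_G$ and $p_{G'}$ as convex combinations over a common coupling $\vq\in\Pi(\vp,\vp')$, apply joint convexity of the $f$-divergence $\rho_\phi$ (via convexity of the perspective $(s,t)\mapsto t\,\phi(s/t)$), then infimize over couplings---is precisely the standard proof given in the cited reference (and earlier in \citealp{nguyen_convergence_2013}), and it is correct.
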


By \cref{lem:compare_hellinger}, we can compare the expectation of Hellinger distance between $p_{G} (y | \vx)$ and $p_{G'} (y | \vx)$ with the Wasserstein metric between $G$ and $G'$ following: $$\bbE_{\rvx} (\hels(p_{G} (\cdot | \vx), p_{G'} (\cdot | \vx)) \lesssim W_2 (G, G').$$

Now, we are going to prove \cref{thm:likelihood_path}. 

\begin{proof}[Proof of \cref{thm:likelihood_path}]
Firstly, we recall the empirical average log-likelihood and population average log-likelihood as follows:
\begin{align}
    \bar\ell_N(p_G) &= \frac{1}{N} \sum_{n=1}^N \log p_G(y_n | \rvx_n) =: P_N \log p_G, \nonumber\\
    \cL(p_G) &= \mathbb{E}_{(\rvx,y) \sim P_{G_0}} \left[ \log p_G(y | \vx) \right] = \int \log p_G(y | \vx) \, dP_{G_0}(\vx, y) =: P_{G_0} \log p_G, \nonumber
\end{align}
where $P_N := \frac{1}{N} \sum_{n=1}^N \delta_{(\rvx_n, y_n)}$ is the empirical measure from data, and the joint distribution $P_{G_0}$ over $(\vx, y)$ is then constructed by first sampling $\vx \sim P_{\rvx}$ and then $y | \vx \sim p_{G_0}(y | \vx)$.

    We divide into three cases.
    
    \textbf{Case 1:} $\kappa \geq K_0$. For any $G$, we denote $P_G$ by the distribution of $p_G$. By the concavity of log function, we have $$\frac{1}{2} \log{\frac{p_G}{p_{G_0}}} \leq \log{\frac{p_G + p_{G_0}}{2p_{G_0}}} = \log{\frac{\bar{p}_G}{p_{G_0}}}, \quad \forall G\in \mathcal{O}_K.$$

    Therefore, for all $\kappa > K_0$ we have 
    \begin{align*}
        \frac{1}{2} P_N \log{\frac{p_{\widehat G^{(\kappa)}_N}}{p_{G_0}}} &\leq P_N \log{\frac{\bar{p}_{\widehat G^{(\kappa)}_N}}{p_{G_0}}} \\
        &= (P_N - P_{G_0}) \log{\frac{\bar{p}_{\widehat G^{(\kappa)}_N}}{p_{G_0}}} - \mathrm{KL} (p_{G_0} \| \bar{p}_{\widehat G^{(\kappa)}_N}) \\
        &\leq (P_N - P_{G_0}) \log{\frac{\bar{p}_{\widehat G^{(\kappa)}_N}}{p_{G_0}}}.
    \end{align*}

    Hence,
    \begin{align}
        P_N \log{p_{\widehat G^{(\kappa)}_N}} - P_{G_0} \log p_{G_0} &= P_N \log{\frac{p_{\widehat G^{(\kappa)}_N}}{p_{G_0}}} + (P_N - P_{G_0}) \log{p_{G_0}} \nonumber\\
        &\leq 2(P_N - P_{G_0}) \log{\frac{\bar{p}_{\widehat G^{(\kappa)}_N}}{p_{G_0}}} + (P_N - P_{G_0}) \log{p_{G_0}}. \nonumber
    \end{align}

    By \cref{thm:path_rates}, we obtain that $\VDFRA (\widehat{G}_N^{(\kappa)}, G_0) \lesssim (\log{N}/N)^{1/2}$, and obviously we have $$\inf_{t_0, \vt_1} W_{\bar{r}(\widehat{G}_N)}^{\bar{r}(\widehat{G}_N)} (\widehat{G}_N^{(\kappa)}, G_{0, t_0, \vt_1}) \leq \VDFRA (\widehat{G}_N^{(\kappa)}, G_0),$$ where $G_{0, t_0, \vt_1} = \sum_{k = 1}^{K_0} \exp{(\omega_{0k}^0 + t_0) \delta_{(\vomega_{1k}^0 + \vt_1, \va_k^0, b_k^0, \sigma_k^0)}}$, so there exists a constant $D$ such that $$\mathbb{P}_{G_0} \left( \inf_{t_0, \vt_1} W_{\bar{r}(\widehat{G}_N)} (\widehat{G}_N^{(\kappa)}, G_{0, t_0, \vt_1}) \leq D \left( \frac{\log N}{N} \right)^{1/2\bar{r}(\widehat{G}_N)} \right) \geq 1 - c_1 N^{-c_2}, \quad \kappa \in [K_0, K].$$

    Now, we compare Wasserstein metrics $W_2$ and $W_{\bar{r}(\widehat{G}_N)}$. Since $2/\bar{r}(\widehat{G}_N) \leq 2/4 < 1$, with a note that for a probability $q_{i, j}$, we have $q_{i, j} \leq q_{i, j}^{2/ \bar{r}(\widehat{G}_N)}$. Combining with all norms on finite space is equivalent, we obtain that $$\left( \sum_{i, j} q_{i, j} \norm{\theta_k - \theta'_{\ell}}^2 \right)^{1/2} \leq \left( \sum_{i, j} q_{i, j}^{2/ \bar{r}(\widehat{G}_N)} \norm{\theta_k - \theta'_{\ell}}^2 \right)^{1/2} \lesssim \left( \sum_{i, j} q_{i, j} \norm{\theta_k - \theta'_{\ell}}^{\bar{r}(\widehat{G}_N)} \right)^{1/\bar{r}(\widehat{G}_N)}.$$

    Then, we get $W_2 \lesssim W_{\bar{r}(\widehat{G}_N)}$. Using the fact that $\bbE_{\rvx} (\hels(p_{G} (\cdot | \vx), p_{G'} (\cdot | \vx)) \lesssim W_2 (G, G')$ and $p_{G_0} = p_{G_{0, t_0, \vt_1}}, \forall (t_0, \vt_1) \in \mathbb{R} \times \mathbb{R}^D$, we also have 
    \begin{align*}
        \mathbb{P}_{G_0} &\left( \mathbb{E} (\hels(p_{\widehat{G}_N^{(\kappa)}} (\cdot | \vx), p_{G_{0}} (\cdot | \vx)) \leq D \left( \frac{\log N}{N} \right)^{1/2\bar{r}(\widehat{G}_N)} \right) \\
        &= \mathbb{P}_{G_0} \left( \inf_{t_0, \vt_1} \bbE_{\rvx} (\hels(p_{\widehat{G}_N^{(\kappa)}} (\cdot | \vx), p_{G_{0, t_0, \vt_1}} (\cdot | \vx)) \leq D \left( \frac{\log N}{N} \right)^{1/2\bar{r}(\widehat{G}_N)} \right) \\
        &\geq \mathbb{P}_{G_0} \left( \inf_{t_0, \vt_1} W_{\bar{r}(\widehat{G}_N)} (\widehat{G}_N^{(\kappa)}, G_{0, t_0, \vt_1}) \leq D \left( \frac{\log N}{N} \right)^{1/2\bar{r}(\widehat{G}_N)} \right) \geq 1 - c_1 N^{-c_2}, \quad \kappa \in [K_0, K].
    \end{align*}

    Let $\mathcal{P}_K (\Theta) := \{ p_G (y | \vx) : G \in \mathcal{O}_K (\Theta) \}$ and $H_B (\varepsilon, \mathcal{P}_K (\Theta), h)$ denotes the bracketing entropy of $\mathcal{P}_K (\Theta)$ under the Hellinger distance. By the Lemma 3 in \cite{nguyen_demystifying_2023}, there is a constant $C > 0$ such that $H_B (\varepsilon, \mathcal{P}_K (\Theta), h) \lesssim \log(1/\varepsilon)$ for any $0 \leq \varepsilon \leq 1/2$.

    Define, $\alpha := 1/2\bar{r}(\widehat{G}_N) \leq 1/4$, substitute $R = D \left( \frac{\log N}{N} \right)^{\alpha}$, $a = D \frac{\log^{\alpha + 1/2} N}{N^{\alpha}}$, then for any positive number $\varepsilon < R$, we have $0 \leq \varepsilon \leq 1/e < 1/2$ and $\log(1/\varepsilon) > 1$ for large $N$ enough. Therefore, for large $N$ enough, we obtain that $a \leq \sqrt{N} R^2 \leq \sqrt{N} R$ and 
    \begin{align*}
        a \geq R \left( \log \left( \frac{2^6 \sqrt{N}}{a} \right) \right) &\geq \int_{a/(2^6 \sqrt{N})}^R \log \frac{1}{\varepsilon} d \varepsilon \\
        &\geq \int_{a/(2^6 \sqrt{N})}^R \log^{1/2} \frac{1}{\varepsilon} d \varepsilon \\
        &\geq \int_{a/(2^6 \sqrt{N})}^R H^{1/2}_B (\varepsilon, \mathcal{P}_K (\Theta), h) d \varepsilon \\
        &\geq \int_{a/(2^6 \sqrt{N})}^R H^{1/2}_B (\varepsilon, \{ p_G : G \in \mathcal{O}_{K} (\Theta), \bbE_{\rvx}(\hels(p_G(\cdot | \vx), p_{G_0}(\cdot | \vx)) \leq R \}, \nu) d \varepsilon.
    \end{align*}

    By \cref{thm:empirical_process}, we get $$\mathbb{P}_{G_0} \left( \sup_{\bbE_{\rvx}(\hels(p_G(\cdot | \vx), p_{G_0}(\cdot | \vx)) \leq D(\log N/N)^{\alpha}} \left| \sqrt{N} (P_N - P_{G_0}) \log \frac{\bar{p}_G}{p_{G_0}} \right| \geq D \frac{\log^{\alpha + 1/2} N}{N^{\alpha}} \right) \leq N^{-c_2}.$$

    Combining with the bound on Hellinger distance, we have 
    \begin{align*}
&\mathbb{P}_{G_0}\!\left(\left|(P_N - P_{G_0}) \log \frac{\bar{p}_{\widehat{G}_N^{(\kappa)}}}{p_{G_0}}\right|
   \geq D \frac{\log^{\alpha + 1/2} N}{N^{\alpha + 1/2}}\right) \\
&\leq \mathbb{P}_{G_0}\!\left( \bbE_{\rvx}(\hels(p_{\widehat{G}_N^{(\kappa)}}(\cdot | \vx), p_{G_0}(\cdot | \vx)) \geq D(\log N/N)^{\alpha} \right) \\
&\quad + \mathbb{P}_{G_0}\!\left(\left|(P_N - P_{G_0}) \log \frac{\bar{p}_{\widehat{G}_N^{(\kappa)}}}{p_{G_0}}\right|
   \geq D \frac{\log^{\alpha + 1/2} N}{N^{\alpha + 1/2}},\;
   \bbE_{\rvx}(\hels(p_{\widehat{G}_N^{(\kappa)}}(\cdot | \vx), p_{G_0}(\cdot | \vx)) \leq D(\log N/N)^{\alpha} \right) \\
& \leq c_1 N^{-c_2} 
   + \mathbb{P}_{G_0}\!\left(\sup_{\bbE_{\rvx}(\hels(p_G(\cdot | \vx), p_{G_0}(\cdot | \vx)) \leq D(\log N/N)^{\alpha}} 
   \left|\sqrt{N}(P_N - P_{G_0}) \log \frac{\bar{p}_G}{p_{G_0}}\right|
   \geq D \frac{\log^{\alpha + 1/2} N}{N^{\alpha}}\right) \\
& \leq c_1' N^{-c_2}.
\end{align*}

   For the second term, by the Chebyshev inequality, we have 
    \begin{align}
        \mathbb{P}_{G_0} (|(P_N - P_{G_0}) \log{p_{G_0}}| \geq t) \leq \frac{\Var (\log p_{G_0})}{N t^2}. \label{inequality of Case 1}
    \end{align}

    Choose $t = (\log N/ N)^{\alpha}$, we have $$\mathbb{P}_{G_0} (|(P_N - P_{G_0}) \log{p_{G_0}}| \leq (\log N/ N)^{\alpha}) \geq 1 - c_1 N^{-c_2}.$$

    Hence, we conclude that $$\mathbb{P}_{G_0} \left(\bar{\ell}_N (\widehat{G}^{(\kappa)}_N) - \cL (p_{G_0}) \leq \left( \dfrac{\log N}{N} \right)^{1/2 \bar{r}(\widehat{G}_N)}\right) \geq 1 - c_1 N^{-c_2}. $$

    \textbf{Case 2:} $\kappa = K_0$. By the \cref{thm:path_rates}, we have $$\VDE (\widehat{G}^{(K_0)}_N, G_0) \lesssim \left( \frac{\log N}{N} \right)^{1/2}.$$

    Assume that $\widehat{G}^{(K_0)}_N = \sum_{k = 1}^{K_0} \exp(\omega_{0k}^{N}) \delta_{(\vomega_{1k}^{N}, \va^N_k, b^N_k, \sigma^N_k)}$, since $\VDE (\widehat{G}^{(K_0)}_N, G_0) \to 0$ as $N \to \infty$, the Voronoi cell $\sA_k$ has only one element for any $k \in [K_0]$. WLOG, we suppose that $\sA_k = \{ k \}$ for all $k \in [K_0]$. Moreover, there exist $t_0 \in \mathbb{R}$ and $\vt_1 \in \mathbb{R}^D$ independent of $N$ such that $\exp(\omega_{0k}^N) \to \exp(\omega_{0k}^0 + t_0)$ and $\vomega_{1k}^N \to \vomega_{1k}^0 + \vt_1$ as $N \to \infty$ for all $k \in [K_0]$. By the definition of $\VDE (\widehat{G}^{(K_0)}_N, G_0)$, we get for large $N$ enough
    \begin{align}
        |\exp(\omega_{0k}^N) - \exp(\omega_{0k}^{0} + t_{0})| \lesssim \left(\frac{\log N}{N}\right)^{1/2}, \quad
        \|(\Delta_{\vt_{1}} \vomega_{1k}^N, \Delta \va_{k}^N,  \Delta b_{k}^N, \Delta \sigma_{k}^N)\| \lesssim \left(\frac{\log N}{N}\right)^{1/2}, \nonumber
    \end{align}
    for every $k \in [K_0]$, where $\Delta_{\vt_{1}}^N \vomega_{1k}^N := \vomega_{1k}^N - \vomega_{1k}^0 - \vt_1$, $\Delta \va_{k}^N := \va_{k}^N - \va_{k}^0$, $\Delta b_{k}^N := b_k^N - b_k^0$ and $\Delta \sigma_{k}^N := \sigma_k^N - \sigma_k^0$.

    Because the function $f(\vx, y | \vtheta) = u (y | \vx; \vomega_{1}, \va, b, \sigma)$ satisfies Condition K (see \cref{lem:checking_condition_K}), let $\epsilon_N = (\log N/N)^{1/2} \to 0$, from condition K, there exist $c_{\alpha}$ and $c_{\omega}$ such that $$u (y | \vx; \vomega_{1k}^N, \va_k^N, b_k^N, \sigma_k^N) \geq (u (y | \vx; \vomega_{1k}^0 + \vt_1, \va_k^0, b_k^0, \sigma_k^0))^{(1 + c_{\omega} \epsilon_N)} e^{-c_{\alpha} \epsilon_N}, \quad \forall k \in [K_0].$$

    Besides, we can find constant $c_q > 0$ and $c_p > 0$ such that
    \begin{align*}
        \exp(\omega_{0k}^N) &\geq (1 - c_p \epsilon_N) \exp{(\omega_{0k}^0 + t_0)}, &\forall k \in [K_0],\\
        \sum_{k = 1}^{K_0} \exp((\vomega_{1k}^0 + \vt_1)^{\top} \vx + \omega_{0k}^0 + t_0) &\geq (1 - c_q \epsilon_N) \sum_{k = 1}^{K_0} \exp((\vomega_{1k}^N)^{\top} \vx + \omega_{0k}^N), &\forall k \in [K_0].
    \end{align*}
    Hence, we have 
    \begin{align*}
        &\left[\sum_{k = 1}^{K_0} \exp((\vomega_{1k}^0 + \vt_1)^{\top} \vx + \omega_{0k}^0 + t_0) \right] \cdot p_{\widehat{G}^{(K_0)}_N} (y | \vx) \\
        &\quad = \frac{\sum_{k = 1}^{K_0} \exp((\vomega_{1k}^0 + \vt_1)^{\top} \vx + \omega_{0k}^0 + t_0)}{\sum_{k = 1}^{K_0} \exp((\vomega_{1k}^N)^{\top} \vx + \omega_{0k}^N)} \cdot \sum_{k = 1}^{K_0} \exp{(\omega_{0k}^N)} u (y | \vx; \vomega_{1k}^N, \va_k^N, b_k^N, \sigma_k^N) \\
        &\quad \geq (1 - c_q \epsilon) \sum_{k = 1}^{K_0} (1 - c_p \epsilon) \exp{(\omega_{0k}^0 + t_0)} (u (y | \vx; \vomega_{1k}^0 + \vt_1, \va_k^0, b_k^0, \sigma_k^0))^{(1 + c_{\omega} \epsilon_N)} e^{-c_{\alpha} \epsilon_N}.
    \end{align*}

    With the fact that $g(t) = t^{1 + c_{\omega} \epsilon_N}$ is a convex function, we get
    \begin{align*}
        p_{\widehat{G}^{(K_0)}_N} (y | \vx) &\geq (1 - c_q \epsilon) (1 - c_p \epsilon) \frac{1}{\sum_{k = 1}^{K_0} \exp((\vomega_{1k}^0 + \vt_1)^{\top} \vx + \omega_{0k}^0 + t_0)} \\
        &\quad \times \sum_{k = 1}^{K_0} \exp{(\omega_{0k}^0 + t_0)} (u (y | \vx; \vomega_{1k}^0 + \vt_1, \va_k^0, b_k^0, \sigma_k^0))^{(1 + c_{\omega} \epsilon_N)} e^{-c_{\alpha} \epsilon_N} \\
        &\geq (1 - c_q \epsilon) (1 - c_p \epsilon) e^{-c_{\alpha} \epsilon_N} \sum_{k = 1}^{K_0} \frac{\exp{((\omega_{1k}^0)^{\top} \vx + \omega_{0k}^0)}}{\sum_{j = 1}^{K_0} \exp{((\omega_{1j}^0)^{\top} \vx + \omega_{0j}^0)}} \cdot \mathcal{N} (y | \va_k^0 \vx + b_k^0, \sigma_k^0)^{(1 + c_{\omega} \epsilon_N)} \\
        &\geq (1 - c_q \epsilon) (1 - c_p \epsilon) e^{-c_{\alpha} \epsilon_N} p_{G_0} (y | \vx)^{(1 + c_{\omega} \epsilon_N)}.
    \end{align*}

    Therefore, we have $$\frac{1}{N} \sum_{i = 1}^N \log \frac{p_{\widehat{G}^{(K_0)}_N}}{p_{G_0}} (y_i | \vx_i) \geq \log ((1 - c_q \epsilon) (1 - c_p \epsilon)) - (c_{\alpha} \epsilon_N) + (c_{\omega} \epsilon_N) \frac{1}{N} \sum_{i = 1}^N \log p_{G_0} (y_i | \vx_i).$$

    Hence
    \begin{align}
        \bar{\ell} (p_{\widehat{G}^{(K_0)}_N}) - \cL (p_{G_0}) &\geq \log ((1 - c_q \epsilon) (1 - c_p \epsilon)) - (c_{\alpha} \epsilon_N) + (c_{\omega} \epsilon_N) P_{G_0} \log p_{G_0} + (1 + c_{\omega} \epsilon_N) (P_N - P_{G_0}) \log p_{G_0}. \label{inequality of Case 2}
    \end{align}

    Now, we will bound the right-hand side of above equation, from Chebyshev inequality from \cref{inequality of Case 1}, choose $t = (\log N/N)^{1/2}$, we get that 
    \begin{align}
        \mathbb{P}_{G_0} \left(|(P_N - P_{G_0}) \log{p_{G_0}}| \geq \left( \frac{\log N}{N} \right)^{1/2}\right) \leq \frac{\Var (\log p_{G_0})}{\log N}. \nonumber
    \end{align}

    Obviously, the terms $|\log ((1 - c_q \epsilon) (1 - c_p \epsilon)) - (c_{\alpha} \epsilon_N) + (c_{\omega} \epsilon_N) P_{G_0} \log p_{G_0}| \lesssim \epsilon_N = (\log N/N)^{1/2}$, thus there exist a constant $C > 0$ such that $\log ((1 - c_q \epsilon) (1 - c_p \epsilon)) - (c_{\alpha} \epsilon_N) + (c_{\omega} \epsilon_N) P_{G_0} \log p_{G_0} \geq -C(\log N/N)^{1/2}$. Then, for some constant $C_e > 0$, we have $$\mathbb{P}_{G_0} \left( \text{RHS of \cref{inequality of Case 2}} \geq - C_e \left( \frac{\log N}{N} \right)^{1/2}  \right) \geq 1 - \frac{\Var (\log p_{G_0})}{\log N}.$$

    Call the event under above case is $B$, then we obtain that 
    \begin{align*}
        \mathbb{P}_{G_0} \left( \bar{\ell} (p_{\widehat{G}^{(K_0)}_N}) - \cL (p_{G_0}) \geq - C_e \left( \frac{\log N}{N} \right)^{1/2}  \right) &\geq \mathbb{P}_{G_0} (A_N \cap B)  = \mathbb{P}_{G_0} (B) - \mathbb{P}_{G_0} (B \cap A_N^c) \\
        &\geq \mathbb{P}_{G_0} (B) - \mathbb{P}_{G_0} (A_N^c) = 1 - \frac{\Var (\log p_{G_0})}{\log N} - c_1 N^{-c_2}
    \end{align*}
    approach $1$ when $N \to \infty$, where $A_N$ is defined in \cref{sec:proof_of_thm1}. Therefore, combine both results, we can conclude that $$|\bar{\ell} (p_{\widehat{G}^{(K_0)}_N}) - \cL (p_{G_0})| \lesssim \left( \dfrac{\log N}{N} \right)^{1/2 \bar{r}(\widehat{G}_N)}.$$

    \textbf{Case 3:} $\kappa < K_0$. Since $|\log p_G(y | \vx)| \leq m(y | \vx)$ for a measurable function $m$ for all $G \in \mathcal{O}_\kappa$, we can use uniform law of large number to get that
\[
\sup_{G \in \mathcal{O}_\kappa} \big| \bar{\ell}_N(G) - P_{G_0} \log p_G \big| 
\overset{\mathbb{P}}{\longrightarrow} 0,
\]
where $\overset{\mathbb{P}}{\longrightarrow}$ means convergence in probability. Therefore,
\[
\big| \bar{\ell}_N(\widehat{G}_N^{(\kappa)}) - P_{G_0} \log p_{\widehat{G}_N^{(\kappa)}} \big|
\overset{\mathbb{P}}{\longrightarrow} 0.
\]

We know that $\log p_{\widehat{G}_N^{(\kappa)}} \to \log p_{G_0^{(\kappa)}}$ in probability, by application of Dominated Convergence theorem, we obtain
\[
P_{G_0} \log p_{\widehat{G}_N^{(\kappa)}} 
\overset{\mathbb{P}}{\longrightarrow} 
P_{G_0} \log p_{G_0^{(\kappa)}}.
\]

Combining the above results together, we get
\[
\bar{\ell}_N(\widehat{G}_N^{(\kappa)}) 
\overset{\mathbb{P}}{\longrightarrow} 
P_{G_0} \log p_{G_0^{(\kappa)}} 
= \cL(\log P_{G_0^{(\kappa)}}).
\]
\end{proof}

\paragraph{Checking condition K.} 
Finally, we check condition K for the function $f(\vx, y | \vtheta)
:= \exp \left(\vomega_{1}^{\top} \vx\right) \mathcal{N}\left(y | \va^{\top} \vx + b, \sigma\right)$.

\begin{lemma}\label{lem:checking_condition_K}
    The condition K is satisfied for $f(\vx, y | \vtheta)
:= \exp \left(\vomega_{1}^{\top} \vx\right) \mathcal{N}\left(y | \va^{\top} \vx + b, \sigma\right)$, where $\vtheta = (\vomega_{1}, \va, b, \sigma) \in \R^D \times \R^D \times \R \times \R$ and $\cX$ are bounded as from the initial setup, and the eigenvalues of $\sigma$ are bounded below and above by the positive constants $\sigma_{\min }$ and $\sigma_{\max }$.
\end{lemma}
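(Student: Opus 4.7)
The plan is to write $\log f(\vx,y|\vtheta)=\vomega_1^\top\vx-\tfrac12\log(2\pi\sigma)-\tfrac{(y-\va^\top\vx-b)^2}{2\sigma}$ explicitly, then lower bound $\log f(\vx,y|\vtheta)-\log f(\vx,y|\vtheta_0)$ by a quantity of the form $-c\,\epsilon\,(1+u_0^2)$, where $u_0:=y-\va_0^\top\vx-b_0$. The crucial observation is that $-\log f(\vx,y|\vtheta_0)$ itself grows like $u_0^2/(2\sigma_0)$ up to a bounded correction, so the $(1+u_0^2)$ factor can be absorbed into the multiplicative term $c_\beta\,\epsilon\,\log f(\vx,y|\vtheta_0)$ that appears in Condition K. This will produce explicit constants $c_\alpha,c_\beta>0$ depending only on $\vTheta$, $\cX$, and the bounds $\sigma_{\min},\sigma_{\max}$.

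The main computation I would carry out decomposes
$$\log f(\vx,y|\vtheta)-\log f(\vx,y|\vtheta_0)=(\vomega_1-\vomega_{10})^\top\vx-\tfrac12\log(\sigma/\sigma_0)+\tfrac{u_0^2}{2\sigma_0}-\tfrac{u^2}{2\sigma},$$
with $u:=y-\va^\top\vx-b$. Using $\|\vtheta-\vtheta_0\|\le\epsilon$, $\vx\in\cX$ bounded, and $\sigma,\sigma_0\in[\sigma_{\min},\sigma_{\max}]$, the first two terms are $O(\epsilon)$. For the quadratic piece, I would write $\tfrac{u_0^2}{2\sigma_0}-\tfrac{u^2}{2\sigma}=\tfrac{(u_0-u)(u_0+u)}{2\sigma_0}+u^2\cdot\tfrac{\sigma-\sigma_0}{2\sigma\sigma_0}$ and apply $|u-u_0|\le C_1\epsilon$, $|u_0+u|\le 2|u_0|+C_1\epsilon$, and $u^2\le 2u_0^2+2(u-u_0)^2$ to obtain $\bigl|\tfrac{u_0^2}{2\sigma_0}-\tfrac{u^2}{2\sigma}\bigr|\le C_2\,\epsilon\,(1+u_0^2)$. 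Assembling these pieces yields $\log f(\vx,y|\vtheta)-\log f(\vx,y|\vtheta_0)\ge -C_3\,\epsilon\,(1+u_0^2)$.

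To convert this into Condition K form, I would use that $\vomega_{10}^\top\vx$ and $\tfrac12\log(2\pi\sigma_0)$ are uniformly bounded on $\vTheta\times\cX$, giving $-\log f(\vx,y|\vtheta_0)\ge \tfrac{u_0^2}{2\sigma_{\max}}-C_4$ and hence $u_0^2\le 2\sigma_{\max}\bigl(-\log f(\vx,y|\vtheta_0)+C_4\bigr)$. Substituting back yields $\log f(\vx,y|\vtheta)-\log f(\vx,y|\vtheta_0)\ge -c_\alpha\epsilon+c_\beta\,\epsilon\,\log f(\vx,y|\vtheta_0)$ with $c_\beta:=2C_3\sigma_{\max}$ and $c_\alpha:=C_3(1+2\sigma_{\max}C_4)$, which rearranges exactly to Condition K. The constant $c_\alpha$ may need to be enlarged to handle the regime $\log f(\vx,y|\vtheta_0)\ge 0$ (where $u_0$ is automatically bounded by a constant depending on $\sigma_{\min}$ and $\vTheta$), but that is a routine additive adjustment.

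The hard part is the pointwise-in-$y$ quadratic blow-up: the naive bound $|\log f(\vtheta)-\log f(\vtheta_0)|\le O(\epsilon)$ is false, because tails of Gaussians are very sensitive to shifts in location and variance. This is precisely why Condition K multiplies $\log f(\vtheta_0)$ by $(1+c_\beta\epsilon)$ rather than asking for a uniform additive bound: the blow-ups in the perturbation estimate and in $-\log f(\vtheta_0)$ both scale as $u_0^2$, and the term $c_\beta\,\epsilon\,\log f(\vtheta_0)$ is exactly what compensates for the former. Making this cancellation quantitative uniformly over $(\vx,y)\in\cX\times\R$, $\vtheta_0,\vtheta\in\vTheta$ with $\|\vtheta-\vtheta_0\|\le\epsilon$, and $\sigma\in[\sigma_{\min},\sigma_{\max}]$ is the only subtlety of the argument.
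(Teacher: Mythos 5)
Your proof is correct and, while it rests on the same underlying cancellation as the paper's (the quadratic-in-$u_0$ blow-up of the pointwise perturbation is matched by the quadratic-in-$u_0$ growth of $-\log f(\vtheta_0)$), it organizes the algebra differently. You bound the additive difference $\log f(\vx,y|\vtheta)-\log f(\vx,y|\vtheta_0)$ from below by $-C_3\,\epsilon\,(1+u_0^2)$, absorbing the linear and $\log\sigma$ terms as $O(\epsilon)$, and then substitute the pointwise inversion $u_0^2\le 2\sigma_{\max}\big(-\log f(\vx,y|\vtheta_0)+C_4\big)$ to read off $c_\alpha$ and $c_\beta$ in one step. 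The paper instead keeps the inequality in the form $\log f(\vtheta)-(1+c_\beta\epsilon)\log f(\vtheta_0)+c_\alpha\epsilon\ge 0$, treats the $\log\sigma$ bracket by a Lipschitz argument that requires $c_\beta$ to exceed $c_\sigma/\log\sigma_{\min}$, introduces $\Delta u=(\va^0-\va)^\top\vx+(b^0-b)$, and completes the square in $u$ so that the surplus $c_\beta\epsilon\,u^\top\sigma^{-1}u$ absorbs the cross term and the $\Delta u^\top\sigma^{-1}\Delta u$ remainder, which are $O(\epsilon)$. Your route is somewhat cleaner: it sidesteps the completing-the-square bookkeeping and also avoids the paper's implicit requirement $\log\sigma_{\min}>0$ in the log-determinant step (you simply fold any $O(\epsilon)$ deficit from that term into $c_\alpha$). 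Finally, your concluding caveat about the regime $\log f(\vtheta_0)\ge 0$ is unnecessary: the bound $u_0^2\le 2\sigma_{\max}\big(-\log f(\vtheta_0)+C_4\big)$ holds uniformly in $(\vx,y)$, and multiplying it by $-C_3\epsilon\le 0$ reverses the inequality in the correct direction regardless of the sign of $\log f(\vtheta_0)$, so the rearrangement already yields the desired $c_\alpha,c_\beta$ without further adjustment.
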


\begin{proof}[Proof of \cref{lem:checking_condition_K}]
    When $\|\vtheta -\vtheta^{0}\| \leq \epsilon$ with $\vtheta^0 = (\vomega_1^0, \va^0, b^0, \sigma^0)$, by the equivalence of the norm, we can consider the cases where $\left\|\vomega_{1}-\vomega_{1}^{0}\right\|,\left\|\va-\va^{0}\right\|,\left\|b-b^{0}\right\|,\left\|\sigma-\sigma^{0}\right\| \leq \epsilon$. We aim to show that for sufficiently small $\epsilon$, there exist $c_{\alpha}, c_{\beta}>0$ such that

\[
\log \left(\exp (\vomega_1^{\top} \vx) \cN(y | \va^{\top} \vx+b, \sigma)\right) \geq\left(1+c_{\beta} \epsilon\right) \log \left(\exp ((\vomega_1^0)^{\top} \vx) \cN(y | (\va^0)^{\top} \vx+b^0, \sigma^0)\right)-c_{\alpha} \epsilon .
\]
which is equivalent to
\begin{align*}
    & \Big[\left(1+c_{\beta} \epsilon\right) \left(\vomega_{1}^{0}\right)^{\top}\vx- \left(\vomega_{1}\right)^{\top}\vx\Big] +  \Big[\left(1+c_{\beta} \epsilon\right) \log \left(\left|\sigma^{0}\right|\right)-\log (|\sigma|) \Big] \\
+ & \Big[\left(1+c_{\beta} \epsilon\right)\left(y-(\va^{0})^{\top} \vx-b^{0}\right)^{\top}\left(\sigma^{0}\right)^{-1}\left(y-(\va^{0})^{\top} \vx-b^{0}\right)-(y-\va^{\top} \vx-b)^{\top}(\sigma)^{-1}(y-\va^{\top} \vx-b)\Big]+c_{\alpha} \epsilon \geq 0.
\end{align*}

Firstly, since $\cX$ is bounded, we can omit the term $\Big[\left(1+c_{\beta} \epsilon\right) \left(\vomega_{1}^{0}\right)^{\top}\vx- \left(\vomega_{1}\right)^{\top}\vx\Big]$. Next, we note that
\[
\frac{\mathrm{d} \log (|\sigma|)}{\mathrm{d} \sigma}=\sigma^{-1}
\]

and if $\|\sigma\|$ is bounded above and below far from 0 (which satisfies because $\sigma$ is positive definite), then the map $\sigma \mapsto \log (|\sigma|)$ is Lipschitz; that is, there exists a constant $c_{\sigma}$ such that
\[
\left|\log \left(\left|\sigma^{0}\right|\right)-\log (|\sigma|)\right| \leq c_{\sigma}\left\|\sigma^{0}-\sigma\right\| .
\]

Furthermore, we have $|\sigma| \geq \sigma_{\min}$. Hence, for all $c_{\beta}>\frac{c_{\sigma}}{\log \left(\sigma_{\min}\right)}$, then we have

$$
c_{\beta} \epsilon \log \left(|\sigma^{0}|\right) \geq c_{\sigma} \epsilon \geq c_{\sigma}\left\|\sigma-\sigma^{0}\right\| \geq\left|\log (|\sigma|)-\log \left(|\sigma^{0}|\right)\right| .
$$

So that

$$
\left(1+c_{\beta} \epsilon\right) \log \left(|\sigma^{0}|\right) \geq \log (|\sigma|).
$$

We want to choose $c_{\alpha}>0$ such that

$$
\Big[\left(1+c_{\beta} \epsilon\right)\left(y-(\va^{0})^{\top} \vx-b^{0}\right)^{\top}\left(\sigma^{0}\right)^{-1}\left(y-(\va^{0})^{\top} \vx-b^{0}\right)-(y-\va^{\top} \vx-b)^{\top}(\sigma)^{-1}(y-\va^{\top} \vx-b)\Big]+c_{\alpha} \epsilon \geq 0.
$$

Let $u:=y-(\va^{0})^{\top} \vx-b^{0}, \Delta u:= (\va^{0})^{\top} \vx+b^{0} - [\va^{\top} \vx+b]$, using the boundedness of $\sigma$, there exist $c_{\sigma}$ such that

$$(\sigma^{0})^{-1} \geq c_{\sigma} \sigma^{-1}.$$

Hence, we only need to prove

$$
\left(1+c_{\beta} \epsilon\right) c_{\sigma} u^{\top} \sigma^{-1} u-(u+\Delta u)^{\top} \sigma^{-1}(u+\Delta u)+c_{\alpha} \epsilon \geq 0,
$$

which is equivalent to

\begin{align*}
&c_{\beta} \epsilon c_{\sigma} u^{\top} \sigma^{-1} u-u^{\top} \sigma^{-1} \Delta u-(\Delta u)^{\top} \sigma^{-1} u-(\Delta u)^{\top} \sigma^{-1} \Delta u+c_{\alpha} \epsilon \geq 0 \\
\Leftrightarrow \quad&\epsilon c_{\beta} c_{\sigma}\left(u-\frac{\Delta u}{\epsilon c_{\beta} c_{\sigma}}\right)^{\top} \sigma^{-1}\left(u-\frac{\Delta u}{\epsilon c_{\beta} c_{\sigma}}\right)+c_{\alpha} \epsilon \geq\left(1+\frac{1}{\epsilon c_{\beta} c_{\sigma}}\right)(\Delta u)^{\top} \sigma^{-1}(\Delta u) .
\end{align*}

We can bound the right-hand side of above equation as follow
$$
\left(1+\frac{1}{\epsilon c_{\beta} c_{\sigma}}\right)(\Delta u)^{\top} \sigma^{-1}(\Delta u) \leq\left(1+\frac{1}{\epsilon c_{\beta} c_{\sigma}}\right) \frac{\|\Delta u\|^{2}}{\sigma_{\min }} \leq\left(1+\frac{1}{\epsilon c_{\beta} c_{\sigma}}\right) \frac{\epsilon^{2}}{\sigma_{\min }} .
$$

Hence, it is sufficient to choose $c_{\alpha}$ such that

$$
c_{\alpha} \geq\left(1+\frac{1}{\epsilon c_{\beta} c_{\sigma}}\right) \frac{\epsilon}{\sigma_{\min }}=\frac{\epsilon}{\sigma_{\min }}+\frac{1}{c_{\beta} c_{\sigma} \sigma_{\min }}.
$$
Then
$\left(1+c_{\beta} \epsilon\right)\left(y-(\va^{0})^{\top} \vx-b^{0}\right)^{\top}\left(\sigma^{0}\right)^{-1}\left(y-(\va^{0})^{\top} \vx-b^{0}\right)-(y-\va^{\top} \vx-b)^{\top}(\sigma)^{-1}(y-\va^{\top} \vx-b)+c_{\alpha} \epsilon \geq 0$.

Therefore, we complete the proof.
\end{proof}

\subsection{Proof of Theorem~\ref{thm_order_consistency}}

Define \(\mathrm{DSC}_N^{(\kappa)}=-\big(\height_N^{(\kappa)}+\epsilon_N\,\bar\ell_N(p_{\widehat G_N^{(\kappa)}})\big)\) with \(1\ll \epsilon_N \ll (N/\log N)^{1/(2\bar r(\widehat G_N))}\) (e.g., \(\epsilon_N=\log N\)).
For \(\kappa>K_0\), \(\height_N^{(\kappa)}\) shrinks at order \((\log N/N)^{1/\bar r(\widehat G_N)}\) while the likelihood term cannot compensate at that scale given the chosen \(\epsilon_N\), so \(\mathrm{DSC}_N^{(\kappa)}\) is suboptimal.
For \(\kappa<K_0\), the (under-fit) likelihood gap dominates and \(\mathrm{DSC}_N^{(\kappa)}\) is worse than at \(\kappa=K_0\).
Hence \(\widehat K_N=\argmin_\kappa \mathrm{DSC}_N^{(\kappa)}\to K_0\) in probability. We will give a more detailed proof below.

\begin{proof}[Proof of Theorem~\ref{thm_order_consistency}]
    Note that entropy $H\left(p_{G_{0}}\right)=-\mathcal{L}\left(p_{G_{0}}\right)$. We have

$$
\height_{N}^{(\kappa)}= \begin{cases}O\left(\left(\dfrac{\log N}{N}\right)^{1 / \bar{r}\left(\widehat{G}_{N}\right)}\right), & \text { if } \kappa>K_{0} \\ \height_{0}^{(\kappa)}+O\left(\left(\dfrac{\log N}{N}\right)^{1 / 2}\right), & \text { if } \kappa \leq K_{0}\end{cases}
$$

and in the proof of \cref{thm:likelihood_path}, we get

$$
\begin{cases}\bar{\ell}_{N}^{(\kappa)} \leq-H\left(p_{G_{0}}\right)+O\left(\left(\dfrac{\log N}{N}\right)^{1 / 2 \bar{r}\left(\widehat{G}_{N}\right)}\right), & \text { if } \kappa>K_{0} \\ \bar{\ell}_{N}^{(\kappa)}=-H\left(p_{G_{0}}\right)+O\left(\left(\dfrac{\log N}{N}\right)^{1 / 2 \bar{r}\left(\widehat{G}_{N}\right)}\right), & \text { if } \kappa=K_{0} \\ \bar{\ell}_{N}^{(\kappa)}=-H\left(p_{G_{0}}\right)-\mathrm{KL}\left(p_{G_{0}} \| p_{G_{0}}^{(\kappa)}\right)+o(1), & \text { if } \kappa<K_{0}\end{cases}
$$

Then we have

$$
\begin{cases}\mathrm{DSC}_N^{(\kappa)} \geq \epsilon_N H\left(p_{G_{0}}\right)+O\left(\epsilon_N\left(\frac{\log N}{N}\right)^{1 / 2 \bar{r}\left(\widehat{G}_{N}\right)}\right), & \text { if } \kappa>K_{0} \\ \mathrm{DSC}_N^{(\kappa)}=\epsilon_N H\left(p_{G_{0}}\right)-\height_{0}^{(\kappa)}+O\left(\epsilon_N\left(\frac{\log N}{N}\right)^{1 / 2 \bar{r}\left(\widehat{G}_{N}\right)}\right), & \text { if } \kappa=K_{0} \\ \mathrm{DSC}_N^{(\kappa)}=\epsilon_N H\left(p_{G_{0}}\right)+\epsilon_N \mathrm{KL}\left(p_{G_{0}} \| p_{G_{0}}^{(\kappa)}\right)-\height_{0}^{(\kappa)}+o\left(\epsilon_N\right), & \text { if } \kappa<K_{0}\end{cases}
$$

Since $\epsilon_N \rightarrow \infty, \epsilon_N(\log N / N)^{1 / 2} \bar{r}\left(\widehat{G}_{N}\right) \rightarrow 0$ and $\mathrm{KL}\left(p_{G_{0}} \| p_{G_{0}}^{(\kappa)}\right)>0$, then as $N \rightarrow \infty, \operatorname{DSC}_{N}^{K_{0}}$ is the smallest number. Hence, $\mathbb{P}_{p_{G_{0}}}\left(\widehat{K}_{N}=K_{0}\right) \geq \mathbb{P}_{p_{G_{0}}}\left(A_{N}\right) \rightarrow 1$ as $N \rightarrow \infty$, or $\widehat{K}_{N} \rightarrow K_{0}$ in probability.
\end{proof}

\end{document}